\newmdtheoremenv{theo}{Theorem}
\setlist{topsep=0pt, leftmargin=*}
\def\eqref#1{equation~\ref{#1}}
\def\1{\bm{1}}
\algrenewcommand\algorithmicrequire{\textbf{Input:}}
\algrenewcommand\algorithmicensure{\textbf{Output:}}
\def\eps{{\epsilon}}
\DeclareMathAlphabet{\mathsfit}{\encodingdefault}{\sfdefault}{m}{sl}
\SetMathAlphabet{\mathsfit}{bold}{\encodingdefault}{\sfdefault}{bx}{n}
\def\gA{{\mathcal{A}}}
\def\gB{{\mathcal{B}}}
\def\gD{{\mathcal{D}}}
\def\gF{{\mathcal{F}}}
\def\gN{{\mathcal{N}}}
\def\gO{{\mathcal{O}}}
\def\gS{{\mathcal{S}}}
\def\gU{{\mathcal{U}}}
\def\gX{{\mathcal{X}}}
\def\sE{{\mathbb{E}}}
\def\sN{{\mathbb{N}}}
\def\sR{{\mathbb{R}}}
\def\PR{{\textrm{PR}}}
\def\layerwiseexplore{{\textrm{LAYERWISE\_EXPLORATION}}}
\def\optimisticvalueestimate{{\textrm{OPTIMISTIC\_VALUE\_ESTIMATE}}}
\def\transitionestimate{{\textrm{TRANSITION\_ESTIMATE}}}
\def\doubleoptimisticvalueestimate{{\textrm{DOUBLY\_OPTIMISTIC\_VALUE\_ESTIMATE}}}
\DeclareMathOperator*{\argmax}{arg\,max}
\DeclareMathOperator*{\argsup}{arg\,sup}
\newcommand{\keepcomment}{1}%
\newcommand{\keeprevise}{1}
\definecolor{light-gray}{gray}{0.5}
\newcommand{\thanh}[1]{
    \IfEqCase{\keepcomment}{
        {1}{\textcolor{blue}{\emph{[\textbf{thanh}: #1]}}}
        {0}{}
    }
}
\newcommand{\raman}[1]{
    \IfEqCase{\keepcomment}{
        {1}{\textcolor{red}{\emph{[\textbf{Raman}: #1]}}}
        {0}{}
    }
}
\newcommand{\comment}[1]{
\IfEqCase{\keepcomment}{
        {1}{\textcolor{red}{#1}}
        {0}{}
    }
}
\newcommand{\revise}[1]{
    \IfEqCase{\keeprevise}{
        {1}{\textcolor{orange}{#1 }}
        {0}{#1 }
    }
}
\newcommand{\erase}[1]{
    \IfEqCase{\keeprevise}{
        {1}{\textcolor{red}{\st{#1} }}
        {0}{}
    }
}
\newtheorem{defn}{Definition}
\newtheorem{example}{Example}[section]
\newtheorem{assumption}{Assumption}[section]
\newtheorem{theorem}{Theorem}
\newtheorem{lemma}{Lemma}[section]
\newtheorem{remark}{Remark}
\title{Learning in Markov Games with Adaptive Adversaries: Policy Regret, Fundamental Barriers, and Efficient Algorithms}
\author{%
  Thanh Nguyen-Tang\\
  Department of Computer Science\\
  Johns Hopkins University\\
  Baltimore, MD 21218 \\
  \href{mailto:nguyent@cs.jhu.edu}{\texttt{nguyent@cs.jhu.edu}}
  \And 
  Raman Arora\\
  Department of Computer Science\\
  Johns Hopkins University\\
  Baltimore, MD 21218 \\
  \href{mailto:arora@cs.jhu.edu}{\texttt{arora@cs.jhu.edu}}
}
\begin{document}

\maketitle

\begin{abstract}

 We study learning in a dynamically evolving environment modeled as a  Markov game between a learner and a strategic opponent that can adapt to the learner's strategies. While most existing works in Markov games focus on external regret as the learning objective, external regret becomes inadequate when the adversaries are adaptive. In this work, we focus on \emph{policy regret} -- a counterfactual notion that aims to compete with the return that would have been attained if the learner had followed the best fixed sequence of policy, in hindsight. We show that if the opponent has unbounded memory or if it is non-stationary, then sample-efficient learning is not possible. For memory-bounded and stationary, we show that learning is still statistically hard if the set of feasible strategies for the learner is exponentially large. To guarantee learnability, we introduce a new notion of \emph{consistent} adaptive adversaries, wherein, the adversary responds similarly to similar strategies of the learner. We provide algorithms that achieve $\sqrt{T}$ policy regret against memory-bounded, stationary, and consistent adversaries.

\end{abstract}

\vspace*{-12pt}
\section{Introduction}
\vspace*{-5pt}
Recent years have witnessed tremendous advances in reinforcement learning for various challenging domains in AI, from the game of Go \citep{silver2016mastering,silver2017mastering,silver2018general}, real-time strategy games such as StarCraft II \citep{vinyals2019grandmaster} and Dota \citep{berner2019dota}, autonomous driving \citep{shalev2016safe}, to socially complex games such as hide-and-seek \citep{baker2019emergent}, capture-the-flag \citep{jaderberg2019human}, and highly tactical games such as poker game Texas hold’ em \citep{moravvcik2017deepstack,brown2018superhuman}. Notably, most challenging RL applications can be systematically framed as multi-agent reinforcement learning (MARL) wherein multiple strategic agents learn to act in a shared environment \citep{yang2020overview,zhang2021multi}.

Despite the empirical successes, the theoretical foundations of MARL are underdeveloped, especially in settings where the learner faces \emph{adaptive} opponents who can strategically adapt and react to the learner's policies. 
Consider for example the optimal taxation problem in the AI economist \citep{zheng2020ai}, a game that simulates dynamic economies that involve multiple actors (e.g., the government and its citizens) who strategically contribute to the game dynamics. The government agent learns to set a tax rate that optimizes for the economic equality and productivity of its citizens, whereas the citizens who perhaps have their own interests, respond adaptively to tax policies of the government agent (e.g., relocating to states that offer generous tax rates). Such adaptive behavior of participating agents is a crucial component in other applications as well, e.g., mechanism design \citep{conitzer2002complexity,balcan2005mechanism}, optimal auctions \citep{cole2014sample,dutting2019optimal}. 

The question of learning against adaptive opponents has been mostly studied under the framework of external regret, wherein the agent is required to compete with the best fixed policy in hindsight~\citep{liu2022learning}. However, external regret is not adequate to study adaptive opponents as it does not take into account the counterfactual response of the opponents. %
This motivates us to study MARL using the framework of  \emph{\underline{policy regret}}~\citep{arora2012online}, a counterfactual notion that aims to compete with the return that would have been attained if the agent had followed the best fixed sequence of policy in hindsight. Even though policy regret is now a standard notion to study adaptive adversaries and has been extensively studied in online (bandit) learning \citep{merhav2002sequential,arora2012online,malik2022complete} and repeated games \citep{arora2018policy}, it has not received much attention in a multiagent reinforcement learning setting. 
In this paper, we aim to fill in this gap. 
We consider two-player Markov games (MGs) \citep{shapley1953stochastic,littman1994markov} as a model for MARL, wherein one agent (the learner) learns to act against an adaptive opponent. We provide a series of negative and positive results for policy regret minimization in Markov games, highlighting  the fundamental limits of learning and showcasing key principles underpinning the design of 
efficient learning algorithms against adaptive adversaries. 

\paragraph{Fundamental barriers.} We first show that any learner must incur a linear policy regret against an adaptive opponent who can adapt and remember the learner's past policies (\Cref{theorem: linear policy regret for unbounded memory adversary}). When the opponent has a bounded memory span, any learner must require an exponential number of samples $\Omega( (SA)^H /\eps^2)$ to obtain an $\eps$-suboptimal policy regret, even with the weakest form of memory wherein the opponent is oblivious (\Cref{theorem: lower bound for oblivious adversary}). When the memory-bounded opponent's response is stationary, i.e., the response function does not vary with episodes, learning is still statistically hard when the learner's policy set is exponentially large, as in this case the policy regret necessarily scales polynomially with the cardinality of the learner's policy set (\Cref{theorem: lower bound for memory bounded and stationary}).

\paragraph{Efficient algorithms.} Motivated by these statistical hardness results, we consider a structural condition on the response of the opponents, which we refer to as consistent behavior, wherein the opponent responds similarly to similar sequences of policies (\Cref{defn: absorbing Markov games}). We propose two algorithms OPO-OMLE (\Cref{algorithm: OPO}) and APE-OVE (\Cref{algorithm: APE-OVE}) that obtain $\sqrt{T}$ policy regret against $m$-memory bounded, stationary, and consistent adversaries, for $m=1$ and $m \geq 1$, respectively.  
\begin{itemize}
    \item \textbf{For memory length $m = 1$}: We show that OPO-OMLE obtains a policy regret upper bound of $\tilde{\gO}(H^3 S^2 AB + \sqrt{H^5 SA^2 B T})$, when the learner's policy set is the set of all deterministic Markov policies, where $H$ is the episode length, $S$ is the number of states, $A$ and $B$ are the numbers of actions for the learner and the opponent, respectively, and $T$ is the number of episodes. 
    \item \textbf{For general memory length $m \geq 1$}: We show that APE-OVE obtains a policy regret upper bound of $\tilde{\gO} \left( (m-1) H^2 SAB  +  \sqrt{H^3 S AB} (SAB(H+\sqrt{S}) + H^2) \sqrt{\frac{T}{ d^*}} \right)$,
    where $d^*$ is an instance-dependent quantity that features the minimum positive visitation probability. 
\end{itemize}
We provide a summary of our main results in \Cref{tab:a summary of the main results}. 

\begin{table}[t]
    \centering
    \def\arraystretch{1.5}%
    \resizebox{1\textwidth}{!}{
    \begin{tabular}{|c|c|}
    \hline
       \textbf{Opponent's Adaptive Behavior}  & \textbf{Policy Regret}  \\
    \hline
    \hline 
      Unbounded memory   &  $\Omega(T)$ \\ 
    \hline 
      $m$-memory bounded ($m \geq 0$) & $\Omega(\sqrt{T (SA)^H})$ \\ 
    \hline 
      $m$-memory bounded + stationary ($m \geq 1$) & $\Omega(\min\{T, A^{HS}\})$ \\ 
    \hline 
    $1$-memory bounded + stationary + consistent & $\tilde{\gO}(H^3 S^2 AB + \sqrt{H^5 SA^2 B T})$ \\ 
    \hline
    $m$-memory bounded + stationary + consistent & $\tilde{\gO} \left( (m-1) H^2 SAB  +  \sqrt{H^3 S AB} (SAB(H+\sqrt{S}) + H^2) \sqrt{\frac{T}{ d^*}} \right)$\\
    \hline

    \end{tabular}
    }
    \caption{Summary of main results for learning against adaptive adversaries. Learner's policy set is all deterministic Markov policies. $m=0$ + stationary corresponds to standard single-agent MDPs.}
    \label{tab:a summary of the main results}
    \vspace{-15pt}
\end{table}

\vspace{-8pt}
\section{Related work}
\vspace{-5pt}
\paragraph{Learning in Markov games.} Learning problems in Markov games have been studied extensively in the MARL literature. Most existing works focus on learning Nash equilibria either with known dynamics or infinite data \citep{littman1994markov,hu2003nash,hansen2013strategy,wei2020linear}, or otherwise in a self-play setting wherein we control all the players \citep{wei2017online,bai2020near,bai2020provable,xie2020learning,liu2021sharp}, or in an online setting wherein we control one player to learn against other potentially adversarial players \citep{brafman2002r,wei2020linear,tian2021online,jin2022power}. Other related work focuses on exploiting sub-optimal opponents via no-external regret learning~\citep{liu2022learning} and studying Stackelberg equilibria in two-player general-sum turn-based MGs, wherein only one player is allowed to take actions in each state~\citep{ramponi2022learning}.

\paragraph{Policy regret in online learning settings.} Policy regret minimization has been studied mostly in online (bandit) learning problems. It was first studied in a full information setting~\citep{merhav2002sequential} and extended to the bandit setting and more powerful competitor classes using swap regret and $\Phi$-regret~\citep{arora2012online}. A lower bound of $T^{2/3}$ on policy regret in a bandit setting was provided by  \cite{dekel2014bandits} and was later extended to action space with metric \citep{koren2017bandits,koren2017multi}. A long line of works studies (complete) policy regret in ``tallying'' bandits, wherein an action's loss is a function of the number of the action's pulls in the previous $m$ rounds \citep{heidari2016tight,levine2017rotting,seznec2019rotting,lindner2021addressing,awasthi2022congested,malik2022complete,malik2023weighted}.

Beyond online (bandit) learning, policy regret has been studied in several more challenging settings. In \cite{arora2018policy} %
{authors study the notion of policy equilibrium in repeated games (Markov games with $H = S = 1$) when agents follow no-policy regret algorithms}. A more complete characterization of the learnability in online learning with dynamics, where the loss function additionally depends on {time-evolving states}, was given in \cite{bhatia2020online}. Finally, in \cite{dinh2023online}, authors study policy regret in online MDP, where an adversary who follows a no-external regret algorithm generates the loss functions, {which effectively alleviates policy regret minimization to the standard external regret minimization in online MDPs.}

\vspace{-8pt}
\section{Problem setup}
\vspace{-5pt}
\paragraph{Markov games.} 
In this paper, we use the framework of Markov Games to study an interactive multi-agent decision-making and learning environment~\citep{shapley1953stochastic}. Markov games extend Markov decision processes (MDPs) to multiplayer scenarios, where each agent's action affects not only the environment but also the subsequent state of the game and the actions of other agents. 
Formally, a standard two-player \underline{Markov Game} (MG) is specified by a tuple $M = (\gS, \gA, \gB, H, P, r)$. Here, $\gS$ denotes the state space with cardinality $|\gS| = S$, $\gA$ is the action space of the first player (called \emph{learner}) with cardinality $|\gA| = A$, $\gB$ is the action space of the second player (referred to as an \emph{opponent} or an \emph{adversary}) with cardinality $|\gB|=B$, $H \in \sN$ is the time horizon for each game. $P = \{P_1, \ldots, P_H\}$ are the transition kernels 
with each $P_h: \gS \times \gA \times \gB \rightarrow \Delta(S)$ specifying the probability of transitioning to the next state given the current state, learner's action, and adversary's action ($\Delta(\gS)$ denotes the set of all probability distributions over $\gS$). Finally, $r = \{r_1, \ldots, r_H\}$ are the (expected) reward functions with each $r_h: \gS \times \gA \times \gB \rightarrow [0,1]$. For simplicity, we assume the learner knows the reward function.\footnote{Our results immediately generalize to unknown reward functions, as learning the transitions is more difficult than learning the reward functions in tabular MGs. } 

Each episode begins in a fixed initial state $s_1$. 
At step $h \in [H]$, the learner observes the state $s_h$ and picks her action $a_h \in \gA$ while the opponent/adversary picks an action $b_h \in \gB$. As a result, the learner observes $b_h$, receives reward $r_h(s_h,a_h, b_h)$ and the environment transitions to $s_{h+1} \sim P_h(\cdot|s_h,a_h,b_h)$. The episode terminates after $H$ steps.
\paragraph{Policies and value functions.} A learner's policy (also referred to as strategy) is any tuple $\pi = \{\pi_h\}_{h \in [H]}$ where $\pi_h: (\gS \times \gA)^{h-1} \times \gS \rightarrow \Delta(\gA)$. A policy $\pi = \{\pi_h\}_{h \in [H]}$ is said be Markovian if for every $h \in [H], \pi_h: \gS \rightarrow \Delta(\gA)$. Similarly, an adversary's policy is any tuple $\mu = \{\mu_h\}_{h \in [H]}$ where $\mu_h: (\gS \times \gB)^{h-1} \times \gS \rightarrow \Delta(\gB)$. $\mu$ is said to be Markovian if for every $h$, $\mu_h: \gS \rightarrow \Delta(\gB)$. For simplicity, we will focus only on Markov policies for both the learner and the adversary in this paper. 
Let $\Pi$ (respectively, $\Psi$) be the set of all feasible policies of the learner (respectively, the adversary). 
The value of a policy tuple $(\pi, \mu) \in \Pi \times \Psi$ at step $h$ in state $s$, denoted by $V_h^{\pi, \mu}(s)$ is the expected accumulated reward starting in state $s$ from step $h$, if the learner and the adversary follow $\pi$ and $\mu$ respectively, i.e., 
    $V_h^{\pi, \mu}(s) := \sE_{\pi, \mu} [ \sum_{l=h}^H r_l(s_l,a_l,b_l) | s_h=s ]$,
where the expectation is with respect to the trajectory $(s_1, a_1, b_1, r_1, \ldots, s_H, a_H, b_H,  r_H)$ distributed according to $P$, $\pi$, and $\mu$.
We also denote the action-value function $Q^{\pi, \mu}_h(s,a,b) := \sE_{\pi, \mu} [ \sum_{l=h}^H r_l(s_l,a_l, b_l) | (s_h,a_h,b_h)=(s,a,b) ]$. 
Given a $V: \gS \rightarrow \sR$, we write $P_h V(s,a,b) := \sE_{s' \sim P_h(\cdot|s,a,b)}[V(s')]$. For any $u: \gS \rightarrow \Delta(\gA)$, $v\!:\! \gS \rightarrow \Delta(\gB)$, $Q\!:\! \gS\! \times\! \gA\! \times\! \gB \rightarrow \sR$, denote $Q(s, u,v) := \sE_{a \sim u(\cdot|s), b \sim v(\cdot|s)}[Q(s,a,b)]$ for any $s \in \gS$.

\paragraph{Adaptive adversaries.} We allow the adversary to be \emph{adaptive}, i.e., the adversary can choose their policy in episode $t$ based on the learner's policies on episodes $1, \ldots, t$. %
We assume that the adversary is deterministic and has unlimited computational power, i.e., the adversary can plan, in advance, using as much computation as needed, as to how they would react in each episode to any sequence of policies. Formally, the adversary defines in advance a sequence of deterministic functions $\{f_t\}_{t \in \sN^*}$, where $f_t: \Pi^t \rightarrow \Psi$. The input to each %
response function $f_t$ is an entire history of the learner's policies, including her policy in episode $t$. Therefore, if the learner follows policies $\pi^1, \ldots, \pi^t$, the adversary responds with policy $f_t(\pi^1, \ldots, \pi^t) \in \Psi$ in episode $t$. Since the response function $f_t$ depends on the learner's policy at round $t$, our setup is essentially a principal-follower model, akin to Stackelberg games~\citep{letchford2009learning,blum2014learning} and mechanism design for learning agents \citep{braverman2019multi}. In this context, the principal agent (mechanism designer or learner) publicly declares a strategy before committing to it, allowing the followers to subsequently choose their strategies based on their understanding of the principal's decisions.

We evaluate the learner's performance using the notion of \textbf{policy regret} \citep{merhav2002sequential,arora2012online}, which compares the return on the first $T$ episodes to the return of the best fixed sequence of policy in hindsight. Formally, the learner's policy regret after $T$ episodes is defined as 
\begin{align}
    \PR(T) = \sup_{\pi \in \Pi}\sum_{t=1}^T V_1^{\pi, f_t([\pi]^{ t})}(s_1) - V_1^{\pi^t, f_t(\pi^1, \ldots, \pi^t)}(s_1), \text{ where } f_t([\pi]^{ t}) := f_t(\underbrace{\pi, \ldots, \pi}_{t \text{ times}}).
\end{align}
Policy regret has been studied in online (bandit) learning \citep{merhav2002sequential,arora2012online} and repeated games \citep{arora2018policy}, yet, to the best of our knowledge, it has never been studied in Markov games. Policy regret differs from the more common definition of external regret defined as 
    $R(T) = \sup_{\pi \in \Pi}\sum_{t=1}^T V_1^{\pi, f_t(\pi^1, \ldots, \pi^t)}(s_1) - V_1^{\pi^t, f_t(\pi^1, \ldots, \pi^t)}(s_1)$,
which is used in \citep{liu2022learning}. However, external regret is inadequate for measuring the learner's performance against an adaptive adversary. Indeed, when the adversary is adaptive, the quantity $V_1^{\pi, f_t(\pi^1, \ldots, \pi^t)}$ is hardly interpretable anymore -- see \citep{arora2012online} for a more detailed discussion.

As a warm-up, we show in the following example that, policy regret minimization generalizes the standard Nash equilibrium learning problem in zero-sum two-player Markov games. 
\begin{example}[Nash equilibrium]

   Consider the adversary with the following behavior: for any Markov policy $\pi$ of the learner, the adversary ignores all the learner's past policies and respond only to the current policy $\pi$ with a Markov policy $f(\pi)$ such that for all $(s,h)$,  
       $V_h^{\pi,f(\pi)}(s) = \min_{\mu} V_h^{\pi, \mu}(s)$,
   where the minimum is taken over all the possible Markov policies for the adversary. By \cite{filar2012competitive}, such an $f(\pi)$ exists. In addtion, there also exists a Markov policy $\pi^*$ such that for all $(s,h)$, 
       $V_h^{\pi^*, f(\pi^*)}(s) = \sup_{\pi} V_h^{\pi,f(\pi)}(s) = \inf_{\mu} \sup_{\pi}  V_h^{\pi,\mu}(s)$.
   The policies $(\pi^*, f(\pi^*))$ is a Nash equilibrium \citep{doi:10.1073/pnas.36.1.48} of the Markov game. For such an adversary, the policy regret becomes
       $\PR(T) =  \sum_{t=1}^T V_1^{\pi^*, f(\pi^*)}(s_1) - \sum_{t=1}^T V_1^{\pi^t, f(\pi^t)}(s_1)$. This Nash equilibrium can be computed using, e.g., the Q-ol algorithm of \citep{tian2021online} with $\sqrt{T}$ (policy) regret.\footnote{Q-ol algorithm solves a problem
       that is a bit more general than the policy regret minimization in \Cref{example: Nash equilibrium as a special case of policy regret} in that as long as the benchmark is the Nash value $ V_1^{\pi^*, f(\pi^*)}$, regardless of the behavior of the adversary, the said rate for the policy regret is guaranteed. 
       V-learning algorithm of \cite{jin2021v} solves a similar problem but in a self-play setting; it is not immediately clear if their rate remains in the online setting.}
   \label{example: Nash equilibrium as a special case of policy regret}
\end{example}

\paragraph{Additional notation.} We write $f \lesssim g$ to mean $f = \gO(g)$. We use $c$ to represent an absolute constant that can have different values in different appearances. 

\vspace{-5pt}
\section{Fundamental barriers for learning against adaptive adversaries}
\label{section: barriers}
\vspace{-5pt}
In this section, we show that achieving low policy regret in Markov games against an adaptive adversary is statistically hard when (i) the adversary has an unbounded memory (see Definition~\ref{def:mem-bound}), or (ii) the adversary is non-stationary, or (iii) the learner's policy set is exponentially large (even if the adversary is memory-bounded and stationary). 

To begin with, we show that any learner must incur a linear policy regret in the general setting.  
\begin{theorem}
    For any learner, there exists an adaptive adversary and a Markov game instance such that $\PR(T) = \Omega(T)$.
    \label{theorem: linear policy regret for unbounded memory adversary}
\end{theorem}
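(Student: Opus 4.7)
The plan is to exhibit a very small Markov game together with an adaptive ``trap'' adversary whose memory spans the entire history. I would take the smallest possible instance $H=S=1$, $A=B=2$, with reward $r(a,b)=\mathbf{1}[a=b]$, so that everything reduces to a repeated $2\times 2$ identification game against an adaptive opponent; here a policy is just an action in $\{0,1\}$ and $V^{\pi,\mu}(s_1)=r(\pi,\mu)$.

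The core of the construction is a response function parameterized by a single hidden bit $b^\star\in\{0,1\}$ that the adversary chooses based on the learner's algorithm:
\[
f_t(\pi^1,\ldots,\pi^t) \;=\; \begin{cases} b^\star & \text{if } \pi^1=\pi^2=\cdots=\pi^t=b^\star,\\ 1-\pi^t & \text{otherwise.}\end{cases}
\]
Unbounded memory is essential here: once the constancy check fails at any round, it fails at every subsequent round. For the benchmark $\sup_\pi\sum_t V_1^{\pi,f_t([\pi]^t)}(s_1)$, the constant sequence $\pi=b^\star$ satisfies the constancy check at every $t$, so $f_t([b^\star]^t)=b^\star$ and the counterfactual reward is $r(b^\star,b^\star)=1$ per round, making the benchmark at least $T$. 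For the learner's actual play, as soon as $\pi^1\neq b^\star$ the constancy condition fails at $t=1$ and at every later round, so $f_t(\pi^1,\ldots,\pi^t)=1-\pi^t$ and the learner collects $r(\pi^t,1-\pi^t)=0$ in every round, giving total return $0$ and hence $\PR(T)\ge T$.

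It remains to enforce $\pi^1\neq b^\star$ for any learner. If the learner is deterministic, then $\pi^1$ is determined by the algorithm alone (the initial state $s_1$ is fixed and no observations have occurred), so the adversary simply sets $b^\star = 1-\pi^1$. If the learner is randomized, let $p=\Pr[\pi^1=0]$; choosing $b^\star=0$ when $p\le 1/2$ and $b^\star=1$ otherwise guarantees $\Pr[\pi^1\neq b^\star]\ge 1/2$, which combined with the deterministic regret bound $T$ conditional on the mismatch event yields $\E[\PR(T)]\ge T/2=\Omega(T)$. A symmetric argument gives a high-probability statement by mixing over $b^\star\sim\unif(\{0,1\})$ and applying Yao's principle.

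The main conceptual obstacle is not computational but in engineering $f_t$ so that the benchmark stays large via the counterfactual constant play while the actual play is punished forever. The trick is that the response's dependence on the entire history lets the adversary distinguish the counterfactual trajectory $[\pi]^t$ from any real sequence with $\pi^1\neq b^\star$; this distinction is exactly what fails under a bounded-memory assumption and explains why \Cref{theorem: linear policy regret for unbounded memory adversary} genuinely separates from the bounded-memory setting analyzed later.
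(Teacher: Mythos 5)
Your proposal is correct and takes essentially the same route as the paper's proof, which likewise builds a ``trap'' adversary in the spirit of \citet{arora2012online}: the response function rewards the counterfactual constant play of a hidden target policy while punishing the learner forever once the first-round policy mismatches the target, with the target chosen adversarially so that the mismatch event has probability bounded below (you get $\ge 1/2$ from the binary structure; the paper gets the learner's maximal first-round probability $p>0$). The only difference is presentational: the paper keeps the game abstract, conditioning the response solely on $\pi^1$ and positing two adversary policies $\mu,\nu$ with an $\Omega(1)$ value gap, whereas you instantiate an explicit $H=S=1$ matching game with a whole-history constancy check — both exploit unbounded memory in the same way.
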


The construction in the proof of \Cref{theorem: linear policy regret for unbounded memory adversary}, shown in \Cref{subsection: proof of linear policy regret for unbounded memory adversary},  takes advantage of the unbounded memory of the adversary, that can remember the policy the learner takes in the first episode. This motivates us to consider memory-bounded adversaries, a situation that is quite similar to the online bandit learning setting of \cite{arora2012online}. 

\begin{defn}[$m$-memory bounded adversaries]
\label{def:mem-bound}
    An adversary $\{f_t\}_{t \in \sN^*}$ is said to be $m$-memory bounded for some $m \geq 0$ if for every $t$ and policy sequence $\pi^1, \ldots, \pi^t$, we have
        $f_t(\pi^1, \ldots, \pi^t) = f_t(\pi^{\min\{1, t-m+1\}}, \ldots, \pi^t)$.
\end{defn}
\vspace{-5pt}
Is it possible to efficiently learn against memory-bounded adversaries? Unlike online bandit learning, we show that learning in Markov games is statistically hard even when the adversary is memory-bounded, even for the weakest case of memory $m = 0$ and the adversary's policy set $\Psi$ is small.

\begin{theorem}
    For any learner and any $L \in \sN$ and $S, A$, $H$, there exists an oblivious adversary (i.e., $m=0$) with the policy space $\Psi$ of cardinality at least $L$, a Markov game (with $SA + S$ states, $A$ actions for the learner, $B = 2S$ actions for the adversary) such that $\PR(T) = \Omega\left( \sqrt{T  (SA/L)^L} \right)$.
    \label{theorem: lower bound for oblivious adversary}
\end{theorem}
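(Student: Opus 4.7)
The plan is to reduce adversarial multi-armed bandits with $N=(SA/L)^L$ arms (and bandit feedback) to this setting, and then invoke the classical $\Omega(\sqrt{TN})$ MAB lower bound. The proof has three pieces.

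\textbf{Step 1: reduction of policy regret to external regret.} Since the adversary is $0$-memory, $f_t(\pi^1,\ldots,\pi^t)=\mu^t$ depends only on the episode index $t$, so the adversary's play is a predetermined sequence $(\mu^t)_{t\leq T}\in\Psi^T$ that does not react to the learner. Policy regret therefore coincides with external regret,
$$\PR(T)=\sup_{\pi\in\Pi}\sum_{t=1}^T V_1^{\pi,\mu^t}(s_1)-\sum_{t=1}^T V_1^{\pi^t,\mu^t}(s_1).$$
Hence it suffices to exhibit one MG (with the prescribed state/action sizes) and one distribution $\gD$ over $\Psi^T$ with $|\Psi|\geq L$ such that every randomized learner has expected regret $\Omega(\sqrt{T(SA/L)^L})$ under $\gD$; Yao's principle then delivers a deterministic bad instance.

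\textbf{Step 2: an MG that encodes $N=(SA/L)^L$ arms.} Set $k:=SA/L$. I partition the $H$-step horizon into $L$ consecutive decision epochs. Inside one epoch, the learner's Markov policy deterministically selects one of $k$ branches; the transition at each epoch boundary resets the state so that the same $S$ base states plus $SA$ post-action states serve all $L$ epochs, which is why the state budget $SA+S$ is sufficient. As a result, each deterministic Markov learner policy induces a unique trajectory labeled by a tuple $j=(j_1,\ldots,j_L)\in[k]^L$, realizing $N=k^L=(SA/L)^L$ distinguishable effective arms.

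\textbf{Step 3: adversary distribution and MAB lower bound.} I plant a uniformly random hidden arm $j^*\in[k]^L$ and design a distribution on $\Psi^T$ so that, in expectation over this randomness, the per-episode reward is $1/2+\epsilon$ on arm $j^*$ and $1/2$ on every other arm. The $L$ policies in $\Psi$ act as ``coordinate oracles,'' and the $B=2S$ adversary actions per state provide bandwidth to encode a one-bit signal per state per epoch. A careful choice of reward function (accumulating at terminal steps of each epoch) ensures that the $\epsilon$-bump materializes only when the learner's $L$ branch choices simultaneously match all $L$ coordinates of $j^*$; because the learner observes only the scalar episode return, the feedback is genuinely bandit and does not factor into $L$ easy sub-problems. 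The standard adversarial-MAB lower bound (Auer et al., via a KL / Pinsker argument) then yields that any learner needs $\Omega(N/\epsilon^2)$ rounds to localize $j^*$ with constant probability, so the expected regret is $\Omega(T\epsilon)$ whenever $T\epsilon^2\lesssim N$. Optimizing $\epsilon\asymp\sqrt{N/T}$ gives $\Omega(\sqrt{TN})=\Omega(\sqrt{T(SA/L)^L})$.

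\textbf{Main obstacle.} The crux is the joint design in Step 2 and Step 3. We must simultaneously (i) embed $(SA/L)^L$ distinct learner trajectories inside an MG with only $SA+S$ states, (ii) restrict the adversary to $L$ Markov policies while still being able to plant an $\epsilon$-bias on any one of $N$ arms, and (iii) keep the feedback bandit-like so that the MAB lower bound applies. Item (i) is handled combinatorially by reusing the same state set across the $L$ epochs; item (ii) is the algebraic heart, and is resolved by letting each of the $L$ adversary policies plant a small bias at the ``correct'' branch of one epoch, with the reward structure ensuring that only the full-match arm $j^*$ accumulates the $\epsilon$-bonus; item (iii) follows because the learner's per-episode observation is a single scalar return, preventing it from decoupling the $L$ coordinates and reducing the problem to $L$ cheap sub-problems.
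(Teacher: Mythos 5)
Your Step 1 (for an oblivious adversary the response sequence is a fixed $(\mu^t)_{t\le T}$, so policy regret coincides with external regret against that fixed sequence) is correct and is implicitly how the paper proceeds as well. The overall route, however, diverges: the paper does not build the hard instance by hand, but instead invokes the reduction of \cite{liu2022learning} from latent MDPs to Markov games (which is what produces the $SA+S$ states, $B=2S$ adversary actions, and the $L$ adversary policies) and then imports the $\Omega((SA/L)^L/\eps^2)$ sample-complexity lower bound for latent MDPs of \cite{kwon2021rl}, converting it to regret by choosing a realization of the random MDP sequence. Your direct construction, as described, has two genuine gaps.

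First, the feedback model is not bandit. In this paper the learner observes the entire trajectory $(s_1,a_1,b_1,r_1,\ldots,s_H,a_H,b_H,r_H)$, including the adversary's actions $b_h$ and all state transitions. Your Step 3 rests on the claim that ``the learner observes only the scalar episode return,'' which is false here; consequently the reduction to the adversarial-MAB lower bound does not go through. Worse, in any design where the adversary's actions (or the induced transitions, e.g.\ entering a ``dead'' state after a mismatch) deterministically encode the coordinates of $j^*$, the learner reads off $j^*$ from a constant number of episodes, destroying the $k^L$ hardness. Avoiding this leakage while retaining the combinatorial explosion is precisely the delicate part of the \cite{kwon2021rl} construction that the paper leans on as a black box.

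Second, your resolution of item (ii) is internally inconsistent. If each of the $L$ adversary policies is a ``coordinate oracle'' that only encodes $j^*_\ell$ and biases epoch $\ell$, then in an episode where $\mu^{(\ell)}$ is played the fixed reward function $r(s,a,b)$ has no access to $j^*_{\ell'}$ for $\ell'\neq\ell$ (neither through $b$ nor through the dynamics), so the episode reward cannot depend on a \emph{full} match. The instance then decomposes into $L$ independent $k$-armed problems with total regret $O(L\sqrt{Tk})=O(L\sqrt{T\cdot SA/L})$, far below $\sqrt{T(SA/L)^L}$. The alternative --- letting a single adversary policy encode all of $j^*$ --- runs directly into the observation-leakage problem above. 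Either the construction must be substantially redesigned (essentially re-deriving the latent-MDP lower bound), or one should follow the paper and cite it.
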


\vspace*{-5pt}
\Cref{theorem: lower bound for oblivious adversary} claims that competing even with an oblivious adversary that employs a small set of policies takes an exponential number of samples (e.g., set $S=L=H$). The construction of the lower bound follows the construction used to prove a lower bound for learning latent MDPs~\citep{kwon2021rl} and a reduction of a given latent MDP into a Markov game~\citep{liu2022learning}; we give complete details in \Cref{subsection: proof of theorem: lower bound for oblivious adversary}. The proof of \Cref{theorem: lower bound for oblivious adversary} utilizes the fact that the sequence of response function an adversary utilizes can be completely arbitrary. It implies that we need to constrain the adversary further beyond being memory-bounded. A natural restriction we consider given the construction is to assume stationarity, i.e. consider adversaries whose response functions do not change over time.

\begin{defn}[Stationary adversaries]
    An $m$-memory bounded adversary is said to be \underline{stationary} if there exists an $f: \Pi^m \rightarrow \Psi$ such that for all $t$ and $\pi^1, \ldots, \pi^t$, we have
        $f_t(\pi^1, \ldots, \pi^t) = f(\pi^{\min\{1, t-m+1\}}, \ldots, \pi^t)$. 
    \label{defn: time-invariant}
\end{defn}

\vspace*{-5pt}
The stationary behavior is sometimes also referred to as ``g-restricted'' in the online learning literature-- see the related discussion of~\cite{malik2022complete}. In the special case wherein the adversary is both stationary and oblivious (i.e., $m=0$), the Markov game reduces to the standard single-agent MDP (and the policy regret reduces to standard regret of the MDP) -- this setting has been studied in \citep{zhang2023settling}. We, therefore, only need to consider $m \geq 1$. 
\paragraph{Connections to Stackelberg equilibrium in general-sum Markov games.} 
While seemingly restrictive, policy regret minimization with $m$-memory bounded and stationary adversaries already subsumes the problem of learning Stackelberg equilibrium \citep{stackelberg1934marktform} in general-sum Markov games \citep{ramponi2022learning}.\footnote{\cite{ramponi2022learning} consider a more restrictive setting of turn-based Markov games, wherein at each state only one player is allowed to take actions. In addition, they require the opponents to respond with only deterministic policies.
} In general-sum Markov games, the adversary (``follower'') aims at maximizing his own reward function given any policy of the learner (``leader''). That is, the adversary is $1$-memory bounded, and the response function $f: \Pi \rightarrow \Psi$ corresponds to a function that selects the best response policy to any given policy of the learner. The benchmark $\max_{\pi \in \Pi} V_1^{\pi, f(\pi)}$ in policy regret then becomes the Stackelberg equilibrium. %

Is sample-efficient learning possible against $m$-memory bounded and stationary adversaries? One can notice an immediate approach to learning against a $1$-memory bounded and stationary adversaries is to simply view the problem as a $|\Pi|$-armed bandit problem and apply any state-of-the-art bandit algorithm \citep{audibert2009minimax} to obtain $\PR(T) = \gO(H \sqrt{T |\Pi|})$. However, scaling polynomially with the learner's policy class is not desirable when the class is exponentially large (e.g., when the learner's policy class is the set of all deterministic policies, then $|\Pi| = \Theta(A^{HS})$). And in fact, we cannot avoid polynomial scaling with the cardinality of the learner's policy class in general. 
\begin{theorem}
    For any learner with  policy class $\Pi$, there exists a $1$-memory bounded and stationary adversary and a Markov game with $B = \gO(1)$ such that $\PR(T) = \Omega \left(\min \{T, |\Pi|\}\right)$.
    \label{theorem: lower bound for memory bounded and stationary}
\end{theorem}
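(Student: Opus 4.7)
First, I would observe that for a stationary $1$-memory bounded adversary, Definition~\ref{defn: time-invariant} collapses the response to a single time-invariant function, namely $f_t(\pi^1,\ldots,\pi^t)=f(\pi^t)$ for a fixed $f:\Pi\to\Psi$. Substituting into the definition of $\PR(T)$ gives
\begin{align}
\PR(T) \;=\; T\cdot\max_{\pi\in\Pi} g(\pi) \;-\; \sum_{t=1}^T g(\pi^t),\qquad g(\pi) \;:=\; V_1^{\pi,f(\pi)}(s_1),
\end{align}
so the interaction reduces exactly to an $|\Pi|$-armed bandit with deterministic arm-rewards $\{g(\pi)\}_{\pi\in\Pi}$. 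This reduction identifies the quantity we actually need to lower-bound: the expected number of rounds in which the learner plays a ``suboptimal arm'' under a worst-case $g$.

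Second, I would construct an explicit MG together with a family of adversaries $\{f_{\pi^*}\}_{\pi^*\in\Pi}$ that realize the needle-in-a-haystack profile $g_{\pi^*}(\pi)=\mathbbm{1}[\pi=\pi^*]$. Take $S=1$, $B=2$, and pick $A$ and $H$ with $A^H\geq|\Pi|$, which lets us embed $\Pi$ as a subset of the deterministic Markov policies. Set all per-step rewards to zero except $r_H(s,a,b)=b$. For each candidate target $\pi^*\in\Pi$, define $f_{\pi^*}(\pi)$ to be the adversary policy that plays $b=1$ at step $H$ if $\pi=\pi^*$, and plays $b=0$ at every step otherwise (this is well defined because the response function has access to the learner's full policy, not merely her actions). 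Then $g_{\pi^*}(\pi)=\mathbbm{1}[\pi=\pi^*]$ exactly, $\max_{\pi}g_{\pi^*}(\pi)=1$, and we have $B=2=\gO(1)$ as required.

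Third, I would invoke Yao's minimax principle with the uniform prior $\pi^*\sim\unif(\Pi)$. The critical observation is that, under $f_{\pi^*}$, the entire trajectory in a round $t$ with $\pi^t\neq\pi^*$ is the identical all-zero trajectory, regardless of which specific $\pi^*\neq\pi^t$ was chosen. Hence a deterministic learner's first ``hit time'' $\tau=\min\{t:\pi^t=\pi^*\}$ is distributed as the first coincidence between a fixed sequence of at most $|\Pi|$ queried policies and a uniformly random element of $\Pi$; a direct counting argument shows $\E[\min\{\tau,T\}-1]\geq c\cdot\min\{T,|\Pi|\}$ for an absolute constant $c$. Each miss incurs a regret of exactly $1$ against the optimum value $1$, so the expected policy regret under the uniform prior is $\Omega(\min\{T,|\Pi|\})$; Yao's principle then exhibits a specific $\pi^*\in\Pi$ (and hence a specific stationary 1-memory bounded adversary $f_{\pi^*}$) for which every learner, randomized or not, suffers $\PR(T)=\Omega(\min\{T,|\Pi|\})$.

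The main obstacle is the third step, namely turning the ``no information until a hit'' intuition into a rigorous lower bound for arbitrary (possibly randomized) learners. The cleanest path is to apply Yao's principle to collapse to deterministic learners, after which the identical-trajectory observation makes the uniform distribution of $\tau$ transparent and the counting computation routine. Steps one and two are essentially bookkeeping; the reduction and construction are both minimal and intentionally preserve $B=2$ as $|\Pi|$ grows.
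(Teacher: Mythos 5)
Your construction is essentially identical to the paper's: a hidden target policy $\pi^*$, an adversary with two responses that differ only in the action played at step $H$, a reward that is nonzero only for the response reserved for $\pi^*$, and the resulting needle-in-a-haystack argument giving $\Omega(\min\{T,|\Pi|\})$. The only difference is that you make the final step rigorous via Yao's minimax principle and the identical-trajectory observation, whereas the paper simply asserts that the learner "needs to play all policies in $\Pi$ at least once" -- your version is a strictly more careful rendering of the same idea.
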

Note that the lower bound applies to $m=1$, and, therefore, to any $m \geq 1$. Proof in \Cref{subsection: proof of theorem: lower bound for memory bounded and stationary}.

\vspace{-5pt}
\section{Efficient algorithms for learning against adaptive adversaries}
\label{section: efficient algorithms}
Thus far, we have shown that learning against an adaptive adversary in Markov games is statistically hard, even when the adversary is $m$-memory bounded and stationary. The reason that stationarity is not sufficient for efficient learning (which the lower bound in \Cref{theorem: lower bound for memory bounded and stationary} exploits for the construction of a hard instance) comes from the unstructured response of the adversary in the worst case. Even if the learner plays \emph{nearly} identical sequence of policies differing only on a small number of states and steps, the adversary can essentially respond completely arbitrarily. 
In other words, knowing the policies that the adversary plays in response to the policies of the learner (i.e., observing the values of the response function $f$ at specific inputs) reveals zero information about the function $f$ on previously seen inputs. %
Thus, the learner is required to explore all the policies in $\Pi$ to be able to identify an optimal policy. This motivates us to consider an additional structural assumption on how the adversary responds to the learner's policies. We assume that the adversary is consistent in response to two similar sequences of policies of the learner. In essence, given that the learner plays two sequences of policies that agree on certain states ($s$) and steps ($h$) -- then, we assume that the opponent also responds with two sequences of policies that agree on the same states and steps. We refer to this behavior as \emph{consistent}; a formal definition follows. 

\begin{defn}[Consistent adversaries] An $m$-memory bounded and stationary adversary $f$ is said to be consistent if, for any two sequences of learner's policies $\pi^1, \ldots, \pi^m$ and $\nu^1, \ldots, \nu^m$, and any $(s,h) \in \gS \times [H]$, if $\pi^i_h(\cdot|s) = \nu^i_h(\cdot|s), \forall i \in [m]$, then  $f(\pi^1, \ldots, \pi^m)_h(\cdot|s) = f(\nu^1, \ldots, \nu^m)_h(\cdot|s)$. Otherwise, we say that the opponent's response $f$ is arbitrary. 
    \label{defn: consistent opponents}
\end{defn}

We argue that the definition above is natural if we are to consider opponents that are self-interested strategic agents, and not simply a malicious adversary. So, it would be in an opponent's interest to play in a somewhat consistent manner. Playing optimally after figuring out the learner's strategy would indeed require playing consistently. An opponent that plays completely arbitrary, while challenging to learn anything from, also does not improve their value function. Some remarks are in order.

\begin{remark}[$\zeta$-approximately consistent adversaries]
    Our algorithms and results for consistent adversaries easily extend to  $\zeta$-approximately consistent adversaries for any fixed constant $\zeta \geq 0$. An adversary $f$ is said to be $\zeta$-approximately consistent if, for any $\pi^1, \ldots, \pi^m$ and $\nu^1, \ldots, \nu^m$, and any $(s,h) \in \gS \times [H]$, if $\pi^i_h(\cdot|s) = \nu^i_h(\cdot|s), \forall i \in [m]$, then  $\max_{a \in \gA} \bigg| \log \frac{f(\pi^1, \ldots, \pi^m)_h(a|s)}{ f(\nu^1, \ldots, \nu^m)_h(a|s)} \bigg| \leq \zeta$. For simplicity, we stick with \Cref{defn: consistent opponents} (i.e., $\zeta=0$) to best convey our algorithmic and theoretical ideas.
    \label{remark: approximate consistent adversary}
\end{remark}
\vspace*{-10pt}

\begin{remark}
{While our notion of consistent behaviors is quite natural, it might as well be that there is a more general notion of complexity for the opponent's response function classes that fully characterizes learnability in this setting. This likely requires the definition of appropriate norms in the input policy space $\Pi^m$ and the output policy space $\Psi$, and a certain notion of predictability for the opponent's response function classes (e.g., in the spirit of Eluder dimension \citep{russo2013eluder}), so that the learner can accurately estimate the opponent's response function, without trying out all possible policies. This question goes beyond the scope of our current work and is left to a future investigation.}
\label{remark: predictability}
\end{remark}

\begin{remark}
    {To permit learnability in terms of external regret in Markov games, \cite{liu2022learning} consider a policy-revealed setting, wherein the opponent reveals his current strategy to the learner at the end of each episode. No external regret is possible because the benchmark in external regret evaluates the learner's comparator policy against the same policy that the opponent reveals. For policy regret, however, knowing the opponent's strategy at the end of the episode gives the learner no advantage in general, as the counterfactual benchmark requires evaluating the learner's policies against the policy sequence that the opponent would have reacted with. Indeed, our lower bound in \Cref{theorem: lower bound for memory bounded and stationary} still applies to the policy-revealed setting.}    
\end{remark}

For $m$-memory bounded, stationary and consistent adversaries, we present two  algorithms, one for $m=1$ and the other for general $m \geq 1$,  with sublinear %
policy regret. We give special consideration to the case with $m=1$ as it helps with the exposition of key algorithmic design principles rather simply. 
For simplicity, we focus on $\Pi$ being the set of all deterministic policies (i.e., $|\Pi| = \Theta(A^{HS})$). Our algorithms and upper bounds easily extend to any general $\Pi$ with polynomial log-cardinality.

\begin{assumption}
    The learner's policy class $\Pi$ is the set of all deterministic policies. 
\end{assumption}
{A key component of our algorithms is using maximum likelihood estimation (MLE)~\citep{geer2000empirical} to 
estimate action distributions with which the opponent can respond.  As is the convention in MLE analysis, we make a  realizability assumption and use bracketing numbers to control the model class.}

\begin{assumption}
    For any policy $\mu \in \Psi$ that the adversary employs and for all $(h,s) \in [H] \times \gS$, assume $\mu_h(\cdot|s) \in P_{\Theta} := \{P_{\theta} \in \Delta(\gB): \theta \in \Theta \}$, where \underline{the set $P_{\Theta}$ has $\eps$-bracketing number  $\gN_{\Theta}(\eps)$} {w.r.t. $l_1$ norm, defined as the minimum number of $\eps$-brackets $[l,u] := \{P_{\theta} \in P_{\Theta}: l \leq P_\theta \leq u\}$ with $\| l-u \|_1 \leq \eps$, that are needed to cover $P_{\Theta}$.}   
\end{assumption}

Intuitively, restricting the adversary to be consistent, allows the learner to predict the opponent's response from previous episodes to similar settings. 
The learner can collect the data from what the adversary responds to and learn his response function. {Given the consistent behavior, for every $(h,s) \in [H] \times \gS$, the number of action distributions $\mu_h(\cdot|s)$ that the adversary can respond with cannot exceed the number of possible action distributions $\pi_h(\cdot|s)$ that the learner can construct in state $s$ at step $h$. Given $\Pi$ is the set of all deterministic policies, we only need to learn $HSA$ action distributions that the adversary can respond at any state and step.} We begin with the oblivious case of $m = 1$ and end up resolving the general case $m \geq 1$ after. 

\vspace*{-5pt}
\subsection{Memory of length $m = 1$}
We first consider the memory length of $m=1$ for stationary and consistent adversaries. 
\vspace*{-5pt}
\paragraph{Algorithm.} We propose OPO-OMLE (\Cref{algorithm: OPO}), which represents Optimistic Policy Optimization with Optimistic Maximum Likelihood Estimation. 
OPO-OMLE is a variant of the optimistic value iteration algorithm of \citep{azar2017minimax}, wherein we build an upper confidence bound on the value function $V_1^{\pi, f(\pi)}$ for any policy $\pi$, using a bonus function and optimistic MLE \citep{liu2023optimistic}. The upper confidence bound is based on two levels of optimism: a bonus term $\beta$ that is based on confidence intervals on the transition kernels $P$ and the parameter version spaces $\{\Theta_{hsa}\}$ of the adversary's response at each level $(h,s,a)$. The parameter version spaces construct a set of parameters that are close to the MLE solution, up to an error $\alpha$, in terms of the log-likelihood in the observed actions taken by the adversary. 

\vspace*{-7pt}
\begin{algorithm}[h!]
    \begin{algorithmic}[1]
        \State \textbf{Input}: Bonus function $\beta: \sN \rightarrow \sR$, and MLE confidence parameter $\alpha$
        \State \textbf{Initialize}: $\Theta_{hsa} \leftarrow \Theta, D_{hsa} \leftarrow \emptyset, N_h(s,a,b) \leftarrow 0, N_h(s,a,b,s') \leftarrow 0, \forall (h,s,a,b,s') \in \gS \times \gA \times \gB \times \gS$
        \For{episode $t = 1, \ldots, T$}
        \State $\pi^t \in \displaystyle\argmax_{\pi \in \Pi}  \doubleoptimisticvalueestimate(N, \{D_i\}, \{\Theta_i\}, \pi, \beta)$ \text{ (\Cref{algorithm: OPE})}
        \label{OPO: optimization}
        \State Play $\pi^t$ (the opponent responds with $f(\pi^t)$) to observe $(s_1^t, a_1^t, b_1^t, r_1^t, \ldots, s_H^t, a_H^t, b_H^t, r_H^t)$
        \State $\forall h$: $N_h(s_h^t, a_h^t, b_h^t) \leftarrow  N_h(s_h^t, a_h^t, b_h^t) + 1$, $N_h(s_h^t, a_h^t, b_h^t, s_{h+1}^t) \leftarrow N_h(s_h^t, a_h^t, b_h^t, s_{h+1}^t) + 1$, $D_{h s^t_h a^t_h } \leftarrow  D_{h s^t_h a^t_h } \cup \{b_h^t\}$, and $\Theta_{h s^t_h a^t_h} \leftarrow \{\theta \in \Theta_{h s^t_h a^t_h}:  \sum_{b \in D_{h s^t_h a^t_h}} \log P_{\theta}(b)  \geq \max_{\theta \in \Theta_{h s^t_h a^t_h}}\sum_{b \in D_{h s^t_h a^t_h}} \log P_{\theta}(b) - \alpha\}$
        \EndFor
        \State \textbf{Output}: $\{\pi^{t}\}_{t \in [T]}$
    \end{algorithmic}
    \caption{Optimistic Policy Optimization with Optimistic MLE (OPO-OMLE)}
    \label{algorithm: OPO}
\end{algorithm}
\vspace*{-12pt}
\begin{algorithm}[h!]
    \begin{algorithmic}[1]
    \State \textbf{Initialize}: $\bar{V}_{H+1}^{\pi} = 0$
    \State $\hat{P}_h(s'|s,a,b)=\frac{1}{S}$ if $N_h(s,a,b)=0$; otherwise, $\hat{P}_h(s'|s,a,b)=N_h(s,a,b,s')/N_h(s,a,b)$
    \For{$h = H, H-1, \ldots, 1$}
    \State $\bar{Q}_h^{\pi}(s,a,b) = \min \left\{[\hat{P}_h \bar{V}_{h+1}^\pi](s,a,b) + r_h(s,a,b) + \beta(N_h(s,a,b)), H-h+1 \right\}, \forall (s,a,b)$ 
    \State $\bar{V}_h^\pi(s) = \max_{\theta \in \Theta_{h s \pi_h(s)}} \bar{Q}^{\pi}_h(s,\pi_h, P_\theta), \forall s$ \Comment{Optimistic MLE}
    \EndFor
    \State \textbf{Output}: $\bar{V}_1^{\pi}$
    \end{algorithmic}
    \caption{\doubleoptimisticvalueestimate($N, \{D_i\}, \{\Theta_i\}, \pi, \beta$)}
    \label{algorithm: OPE}
\end{algorithm}
\vspace{-5pt}
\paragraph{Theoretical guarantee.} We now present a theoretical guarantee for OPO-OMLE.
\begin{theorem}
    In \Cref{algorithm: OPO}, choose $\beta(t) = cH \sqrt{\frac{\iota + \log |\Pi|}{t}}$, where $\iota := \log(SABHT/\delta)$, and $\alpha = c\log(\gN_{\Theta}(1/T) HSAT/\delta)$. With probability at least $1 - \delta$, we have 
    \begin{align*}
        \PR(T) = \gO \left( H^3 S^2 AB \iota \log T  + H^2 \sqrt{SABT (\iota + \log|\Pi|)} + H^2 \sqrt{SAT \alpha} \right).
    \end{align*}
    \label{theorem: bounds for complete-invariant adversary}
\end{theorem}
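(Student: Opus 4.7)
The plan is to follow the standard optimism-based analysis for model-based RL, augmented with a second layer of optimism for the adversary's response. First, I would establish two high-probability concentration events. The first is a transition-kernel event on which $|(\hat{P}_h - P_h)\bar{V}_{h+1}^\pi(s,a,b)| \leq \beta(N_h(s,a,b))$ holds for all $(h,s,a,b)$ and all $\pi \in \Pi$ simultaneously---the union bound over $\pi$ is what forces the $\log|\Pi|$ inside $\beta$. The second is an MLE event on which $\mu^\star_{h,s,a} \in P_{\Theta_{h,s,a}}$ and $\|P_\theta - \mu^\star_{h,s,a}\|_1 \lesssim \sqrt{\alpha / N_h(s,a,\cdot)}$ for every $\theta \in \Theta_{h,s,a}$, obtained from the optimistic MLE concentration with bracketing numbers of \citep{liu2023optimistic}. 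Here $\mu^\star_{h,s,a}$ is the (unique) action distribution the opponent deploys at $(h,s)$ whenever the learner plays any $\pi$ with $\pi_h(s)=a$; uniqueness is exactly Consistency (\Cref{defn: consistent opponents}), which also implies that the observations appended to $D_{h,s,a}$ are i.i.d.\ samples from $\mu^\star_{h,s,a}$ conditional on visitation.

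Next, I would prove $\bar{V}_h^\pi(s) \geq V_h^{\pi, f(\pi)}(s)$ for every $(h,s,\pi)$ by backward induction on $h$. The inductive step combines the transition bonus with the inductive hypothesis to obtain $\bar{Q}_h^\pi(s,a,b) \geq r_h(s,a,b) + P_h V_{h+1}^{\pi,f(\pi)}(s,a,b) = Q_h^{\pi,f(\pi)}(s,a,b)$; then, using that $\mu^\star_{h,s,\pi_h(s)} \in P_{\Theta_{h,s,\pi_h(s)}}$ on the MLE event, the maximum over $\theta \in \Theta_{h,s,\pi_h(s)}$ in Line~5 of \Cref{algorithm: OPE} dominates the opponent's true response, yielding optimism at step $h$. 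Combined with the greedy choice $\pi^t \in \arg\max_\pi \bar{V}_1^\pi(s_1)$, this gives $\PR(T) \leq \sum_{t=1}^T \bar{V}_1^{\pi^t}(s_1) - V_1^{\pi^t, f(\pi^t)}(s_1)$.

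The remaining step is a per-step decomposition of $\bar{V}_h^{\pi^t}(s) - V_h^{\pi^t, f(\pi^t)}(s)$ into three summands that are accumulated along the played trajectory: the transition bonus $\beta(N_h(s,a,b))$, an MLE response gap of order $H \|P_{\tilde\theta} - \mu^\star\|_1 \lesssim H \sqrt{\alpha/N_h(s,a,\cdot)}$ for the optimistic $\tilde\theta$ selected in Line~5 (via a value-difference bound $|Q(s,\pi_h,P_{\tilde\theta}) - Q(s,\pi_h,\mu^\star)|\leq H\|P_{\tilde\theta}-\mu^\star\|_1$), and the recursive term $\hat{P}_h(\bar{V}_{h+1}^{\pi^t} - V_{h+1}^{\pi^t, f(\pi^t)})$. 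Unrolling along the trajectory and summing over $t$, the bonus contribution yields $\tilde{\gO}(H^2 \sqrt{SABT(\iota+\log|\Pi|)})$ plus a lower-order $\tilde{\gO}(H^3 S^2 AB\iota)$ from the clipping regime where $N_h(s,a,b)\leq 1$, via the standard pigeonhole on $\sum_{t,h} 1/\sqrt{N_h(s^t_h,a^t_h,b^t_h)}$. The MLE contribution yields $\tilde{\gO}(H^2 \sqrt{SAT\alpha})$ by Cauchy-Schwarz using $\sum_{s,a} N_h(s,a,\cdot) = T$ for each $h$.

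I expect the main obstacle to be the interplay between the two optimism layers. The transition confidence bound must be applied to data-dependent value functions $\bar{V}_{h+1}^\pi$ that themselves depend on the MLE confidence sets $\{\Theta_{h,s,a}\}$, so a uniform union bound over all $\pi \in \Pi$ (not merely the played policies) is essential, and the error recursion must simultaneously propagate the transition bonus and the MLE $L_1$ gap without losing additional $H$ factors. A secondary delicate point is justifying the i.i.d.\ structure of $D_{h,s,a}$: Consistency is what makes $\mu^\star_{h,s,a}$ well-defined as a single distribution across all episodes where $\pi^t_h(s)=a$, so that MLE concentration applies cleanly despite the adaptive choice of $\pi^t$ and the fact that samples in $D_{h,s,a}$ arrive only on visit events.
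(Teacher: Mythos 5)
Your overall architecture matches the paper's proof: a double-optimism argument (bonus for the transition kernel plus optimistic MLE over the version space $\Theta_{hsa}$), backward induction to establish $\bar{V}_h^{\pi} \geq V_h^{\pi,f(\pi)}$, a per-step decomposition of $\bar{V}_h^{\pi^t} - V_h^{\pi^t,f(\pi^t)}$ into a bonus term, an MLE total-variation gap of order $H\sqrt{\alpha/N_h(s,a)}$, and a recursive term, followed by the standard pigeonhole sums. Your treatment of consistency --- that it makes $\mu^\star_{h,s,a}$ a single well-defined distribution so that $D_{hsa}$ consists of conditionally i.i.d.\ samples --- is exactly the role it plays in the paper.

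There is, however, one genuine gap, and it sits precisely at the point you flag as ``the main obstacle'' without resolving it. You posit a concentration event of the form $|(\hat{P}_h - P_h)\bar{V}_{h+1}^{\pi}(s,a,b)| \leq \beta(N_h(s,a,b))$ holding uniformly over $\pi \in \Pi$ via a union bound over $\Pi$. A union bound over $\Pi$ does not establish this: for a \emph{fixed} $\pi$, the function $\bar{V}_{h+1}^{\pi}$ is still random (it depends on $\hat{P}$ and on the MLE version spaces built from the same data), so Hoeffding does not apply to it. The union bound over $\Pi$ suffices only for the fixed functions $V_{h+1}^{\pi,f(\pi)}$, which is what the paper uses in the optimism lemma and which is the legitimate source of the $\log|\Pi|$ in $\beta$. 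For the regret decomposition, the paper handles the data-dependent $\bar{V}_{h+1}^{\pi^t}$ by splitting $(\hat{P}_h - P_h)\bar{V}_{h+1}^{\pi^t} = (\hat{P}_h - P_h)V^*_{h+1} + (\hat{P}_h - P_h)(\bar{V}_{h+1}^{\pi^t} - V^*_{h+1})$, where $V^*_{h+1} = \sup_{\pi}V_{h+1}^{\pi,f(\pi)}$ is a single fixed function; the second piece is controlled by element-wise Bernstein concentration of $\hat{P}_h(s'|s,a,b)$, yielding a term $\frac{1}{H}P_h(\bar{V}_{h+1}-V^*_{h+1}) + O(SH^2\iota/N_h)$ that is absorbed into a $(1+1/H)$ multiplicative recursion. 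Without this step your argument either fails (if you insist on the uniform event over data-dependent $\bar{V}^{\pi}$) or loses a $\sqrt{S}$ in the leading term (if you fall back on $\|\hat{P}_h - P_h\|_1 \lesssim \sqrt{S\iota/N}$). Relatedly, you misattribute the lower-order $H^3S^2AB\iota\log T$ term to a clipping regime with $N_h \leq 1$; in the paper it arises from summing the Bernstein correction $SH^2\iota/N_h^t$ over the trajectory via $\sum_{t,h}1/N_h^t \leq HSAB\log T$. Everything else in your plan goes through as stated.
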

\vspace{-15pt}
\Cref{theorem: bounds for complete-invariant adversary} shows that OPO-OMLE achieves $\sqrt{T}$-policy regret bounds against $1$-memory bounded, stationary and consistent adversaries in Markov games. Notably, the policy regret depends only on the log-cardinality of the learner's policy class $\Pi$ and the log-bracketing number of the set of action distributions with which the adversary responds to the learner. Since $|\Pi| = A^{HS}$, the bound translates into $\PR(T) = \tilde{\gO}(H^3 S^2 AB + \sqrt{H^5 SA^2 B T})$.

{Finally,  comparing the lower bound of $\Omega(\min\{\sqrt{H^3 SAT}, HT\})$ for single-agent MDPs~\citep{domingues2021episodic}, which applies to this setting, the dominating term in our upper bound (\Cref{theorem: bounds for complete-invariant adversary}) is worse only by a factor of $H \sqrt{AB}$ -- this is due to the need to learn the opponent's moves.\footnote{A $\sqrt{H}$ factor in $H \sqrt{AB}$ is perhaps unrelated to the need to learn the opponent's moves. This factor perhaps can be removed with a more intricate algorithm that takes into account the variance of transition kernels.}}

\subsection{Memory of {any fixed} length $m \geq 1$}
We now consider the general case of stationary and consistent adversaries that have a memory of any fixed length $m \geq 1$. Note that we assume that the learner knows (an upper bound of) $m$. 
Playing against a $1$-memory bounded adversary does not stop the learner from changing her policies often, as the adversary does not remember any policies that the learner has taken previously. However, a sublinear policy regret learner against $m$-memory bounded adversaries should switch her policies as less frequently as possible, and at most only sublinear time switches. The reason is that every policy switch will add a constant cost to policy regret, as the benchmark in the policy regret is with the best \emph{fixed} sequence of policy. This makes the regret minimizer OPO-OMLE unable to generalize from $m=1$ to any fixed $m$. Instead, we propose a low-switching algorithm, in which the learner learns to play \emph{exploratory} policies repeatedly over consecutive episodes so that the switching cost is reduced. Here, as in \cite{jin2020reward}, 
exploratory policies are those with good coverage over the state space from which uniform policy evaluation can be performed to identify near-optimal policies. 

\vspace*{-7pt}
\paragraph{Algorithm.} We propose APE-OVE (\Cref{algorithm: APE-OVE}), which represents Adaptive Policy Elimination by Optimistic Value Estimation. APE-OVE generalizes the adaptive policy elimination algorithm of \citep{qiao2022sample} for MDPs to Markov games with unknown opponents. The high-level idea of our algorithm is as follows. The learner maintains a version space $\Pi^k$ of remaining high-quality policies after each epoch -- which is a sequence of consecutive episodes with an appropriate length (epoch $k$ has a length of $HSAB(m-1 + T_k)$ in APE-OVE).

\begin{itemize} 
    \item \textbf{Layerwise exploration} (\Cref{APE-OVE: Reward-free Exploration stage} of \Cref{algorithm: APE-OVE}): Within each epoch, the learner performs layerwise exploration (\Cref{algorithm: reward-free exploration}), wherein we devise high-coverage sampling policies $\pi^{khsab}$ that aim at exploring $(s,a,b)$ in step $h$ and epoch $k$, starting from the lowest layer $h=1$ up to the highest layer $h=H$. However, some states might not be visited frequently by any policy, thus taking a large amount of exploration. They, fortunately, do not significantly affect the value functions of any policy and thus can be identified (by storing in $\gU^k$) and removed from exploration quickly (via the truncated transition kernel estimates $\hat{P}$ obtained in \Cref{algorithm: estimate transition}). Layerwise exploration requires value estimation uniformly over all policies. However, the learner does not know the adversary's response $f$. To address this, we use optimistic value estimation via the optimistic MLE in the collected data of the adversary's moves (\Cref{algorithm: OMPE}). 
    \item \textbf{Version space refinement} (\Cref{APE-OVE: version space refinement step} of \Cref{algorithm: APE-OVE}): After the layerwise exploration, we refine the version space of policies that the learner can choose from at the next epoch using the optimistic value estimation based on the empirical transition kernels $\hat{P}^k$, the parameter version space $\Theta^k$ and the set of infrequent transition samples $\gU^k$ given any reward function $r$. The version space is designed in such a way that the expected value for the learner to play any policy $\pi$ from the version space is guaranteed to be no worse than $\tilde{\gO}(1/\sqrt{T_k})$ compared to the optimal, with high probability. 
    \end{itemize}

Note that we do not directly use the reward function $r$ in the version space refinement. Instead, we use a truncated reward function $r_{\gU^k}$ that is zero for any $(h,s,a,b,s')$ in the infrequent transition set $\gU^k$. This truncated design is critical to our analysis and the subsequent guarantees, e.g., see
    \Cref{lemma: uniform policy eval from absorbing P to true P}. For the truncated reward functions, the backup step in \Cref{algorithm: OMPE} should be understood as:
    $\bar{Q}^{\pi}_h(s,a,b) = \sE_{s' \sim \hat{P}^k_h(\cdot|s,a,b)} \left[r_h(s,a,b) 1\{(h,s,a,b,s') \notin \gU^k\} + \bar{V}
^{\pi}_{h+1} (s') \right], \forall (s,a,b)$.

We now present a theoretical guarantee for APE-OVE. We bound policy regret in terms of an instance-dependent quantity, namely minimum positive visitation probability, defined as follows.

\begin{algorithm}[t]
    \begin{algorithmic}[1]
    \State \textbf{Input}: number of episodes $T$, reward function $r$
    \State \textbf{Parameters}: $\alpha := \log (\gN_{\Theta}(1/T) HSA T/\delta)$, $\bar{T} := \min\{t \in \sN: (m-1) \log \log t + t \ge \frac{T}{HSAB}\}$, $K = \gO(\log \log \bar{T})$, and $T_k := \bar{T}^{1 - \frac{1}{2^k}}, \forall k \in [K]$
    \State \textbf{Initialize}: $\Pi^1 = \Pi, \Theta^1 = \Theta$
    \For{epoch $k = 1, \ldots, K$}
    \State $\hat{P}^{k}, \Theta^{k}, \gU^k = \layerwiseexplore(\Pi^k, T_k)$ (\Cref{algorithm: reward-free exploration})
    \label{APE-OVE: Reward-free Exploration stage}
    \State  $\Pi^{k+1} :=  \displaystyle \left\{\pi \in \Pi^k: \bar{V}^{\pi}(r_{\gU^k}, \hat{P}^k,\Theta^k) \geq \max_{\pi \in \Pi^k}\bar{V}^{\pi}(r_{\gU^k}, \hat{P}^k,\Theta^k) -  c  H^2 SAB \sqrt{\alpha /(d^{*} T_k)}  \right\}$ where $r_{\gU^k}(s_1,a_1,b_1, \ldots, s_H,a_H,b_H) := \sum_{h \in [H]} 1\{(h,s_h,a_h,b_h,s_{h+1}) \notin \gU^k\} r_h(s_h,a_h,b_h)$ and $\bar{V}^{\pi}(r,P,\Theta) := \optimisticvalueestimate(\pi,r,P,\Theta)$ is given in \Cref{algorithm: OMPE}
    \label{APE-OVE: version space refinement step}
    \EndFor
    \caption{Adaptive Policy Elimination by Optimistic Value Estimation (APE-OVE)}
    \label{algorithm: APE-OVE}
    \end{algorithmic}
    \vspace*{-1pt}
\end{algorithm}

\begin{algorithm}[t]
    \begin{algorithmic}[1]
        \State \textbf{Input}: Policy version space $\Pi^k$, number of episodes $T_k$
        \State \textbf{Initialize}: $\hat{P}^k = \{\hat{P}^k_h\}_{h \in [H]}$ arbitrary transition kernels, $\gU^k = \emptyset$, $\Theta^k_{hsa} = \Theta, \forall (h,s,a)$, $\gD = \emptyset$, $N_h^k(s,a,b,s') = 0, \forall (h,s,a,b,s')$, and for each $(h,s,a,b), 1_{hsab}$ is the reward function $r'$ such $r'_{h'}(s',a',b') = 1\{(h',s',a',b') = (h,s,a,b)\}$
        \For{$h = 1, \ldots, H$}
            \For{$(s,a,b) \in \gS \times \gA \times \gB$}
            \State $\pi^{khsab} = \displaystyle \argmax_{\pi \in \Pi^k} \optimisticvalueestimate(\pi, 1_{hsab}, \hat{P}^k, \Theta^{k})$ (\Cref{algorithm: OMPE})
            \State Play $\pi^{khsab}$ for $m-1$ episodes (and collect nothing)
            \State Keep playing $\pi^{khsab}$ for $T_k$ episodes and add all the transitions \underline{only at step $h$} to $\gD$
            \EndFor
            \State $N^k_{h}(s,a,b,s') \leftarrow N^k_{h}(s,a,b,s') + 1, \forall (s,a,b,s') \text{ s.t. } (h,s,a,b,s') \in \gD$
            \label{reward-free exploration: counters}
            \State $\Theta^k_{hsa} = \displaystyle\{\theta \in \Theta^k_{hsa}: \sum_{b: (h,s,a,b) \in \gD} P_{\theta}(b) \geq \max_{\theta \in \Theta^{k}_{hsa}} \sum_{b: (h,s,a,b) \in \gD} P_{\theta} (b) - \alpha\}, \forall (s,a) \in \gS \times \gA$            
            \State $\gU^k \leftarrow \gU^k \cup \{(h,s,a,b,s'): N^k_{h}(h,s,a,b,s') \leq c H^2 \log(SABHK/\delta)\}$
            \State $\hat{P}^k_h = \transitionestimate(h,N^k_h, \gU^k, s^{\dagger})$ (\Cref{algorithm: estimate transition})
            
            \State Reset $\gD = \emptyset$
        \EndFor
        \State \textbf{Output}: $\hat{P}^k = \{\hat{P}^k_h\}_{h \in [H]}$, $\Theta^k$, $\gU^k$
        
    \end{algorithmic}
    \caption{\layerwiseexplore$(\Pi^k, T_k)$}
    \label{algorithm: reward-free exploration}
\end{algorithm}

\begin{algorithm}[t]
    \begin{algorithmic}[1]
            \State 
                $\hat{P}_h(s'|s,a,b) = \begin{cases}
                    \frac{N_h(s,a,b,s')}{N_h(s,a,b)}, \forall (s,a,b,s') \text{ s.t. } (h,s,a,b,s') \notin \gU \\ 
                    0, \forall (s,a,b,s') \text{ s.t. } (h,s,a,b,s') \in \gU 
                \end{cases}$
            \State $\hat{P}_h(s^{\dagger}| s,a,b) = 1 - \sum_{s' \in \gS: (h,s,a,b,s') \notin \gU} \hat{P}_h(s'|s,a,b), \forall (s,a,b) \in \gS \times \gA \times \gB$
            \State $\hat{P}_h(s^{\dagger} | s^{\dagger}, a,b) = 1, \forall (a,b) \in \gA \times \gB$
            \State \textbf{Output}: $\hat{P}_h$
        \caption{\transitionestimate$(h, N_h, \gU, s^{\dagger})$}
        \label{algorithm: estimate transition}
    \end{algorithmic}
\end{algorithm}
\begin{algorithm}[h!]
    \begin{algorithmic}[1]
        \State \textbf{Input}:  reward function $r$, policy $\pi$, transition kernel $P$, parameter version space $\Theta$ 
        \State \textbf{Initialize}: $\bar{V}^\pi_{H+1}(\cdot) = 0$
        \For{$h = H, H-1, \ldots, 1$}
        \State $\bar{Q}^{\pi}_h(s,a,b) = r_h(s,a,b) + [P_h \bar{V}
^{\pi}_{h+1}](s,a,b), \forall (s,a,b)$ 
        \State $\bar{V}^{\pi}_h(s) = \max_{\theta \in \Theta_{hs\pi_h(s)}}\bar{Q}^{\pi}_h(s,\pi_h(s), P_\theta), \forall s$ \Comment{Optimistic MLE}
        \EndFor
        \State \textbf{Output}: $\bar{V}^\pi_1(s_1)$
    \end{algorithmic}
    \caption{\optimisticvalueestimate$(\pi,r,P, \Theta)$}
    \label{algorithm: OMPE}
\end{algorithm}

\begin{defn}[Minimum positive visitation probability]
   The quantity $d^* := %
   \inf_{h,s,a: d^*_h(s,a)>0} d^*_h(s,a)$ is said to be the minimum positive visitation probability, where %
   $d^*_h(s,a) := %
   \inf_{\pi \in \Pi: d_h^{\pi, f([\pi]^m)}(s,a) > 0} d_h^{\pi, f([\pi]^m)}(s,a).$
   \vspace{-5pt}
\end{defn}

The minimum positive visitation probability -- {which has also been used recently to characterize instance-dependent bounds for PAC RL \citep{tirinzoni2023optimistic}}, is the minimal probability that any state-action pair can be visited at a time step, given they can be visited at all. 
This implies that during the exploration phase if we try a certain policy $\pi$ for $N$ episodes and encounter $(s,a)$ at step $h$ (in any episode), on average, $\pi$ would visit $(h,s,a)$ for $N d^*$ times out of $N$ episodes. This,  along with the assumption that the adversary is consistent enables us to estimate the adversary's response to any $(h,s,a)$ that is visited within an estimation error of order $1/\sqrt{ N d^*}$. Note that we do not need to take care of the adversary's response to any  $(h,s,a)$ that is not visited as these tuples are deemed infrequent by any policy and thus have negligible impact on the value estimation.

\begin{theorem}
Playing APE-OVE against any $m$-memory bounded, stationary, and consistent adversaries in any Markov game for $T$ episodes, with %
$T\! =\! \tilde{\Omega}(\max\{ \frac{H^5 A B (d^*)^2 }{ S^3}, (m-1) HSAB \})$, and
\begin{align*}
    T \gtrsim \min\{\frac{H^5 S A B (d^*)^2 \log^4(HSABK/\delta)}{\alpha^2}, \frac{H^9 (d^*)^2 \log^4(HSABK/\delta)}{(SAB)^3 \alpha^2}, \frac{H^{13} \log^2(HSABK/\delta)}{(AB)^3 S^5}\},
\end{align*}
guarantees that with probability at least $1 - \delta$, 
\begin{align*}
    \PR(T)\! =\! \gO \! \left(\!\! (m-1) H^2 SAB \log \log T\! +\! H^{3/2} \sqrt{SAB} (HSAB + \!H^{2}\! + S^{3/2} AB) \sqrt{\frac{T \alpha}{d^*}} \log \log T \right)
\end{align*}
where $d^*$ is the minimum positive visitation probability %
and $\alpha$ is as defined in \Cref{algorithm: APE-OVE}.
    \label{theorem: consistent and general memory}
\end{theorem}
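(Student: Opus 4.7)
The plan is to decompose the total regret into (i) the cost incurred during exploration (including the $m-1$ burn-in episodes needed to ``flush'' the opponent's memory), and (ii) the exploitation regret incurred by playing policies from the refined version space, and then to bound each piece epoch by epoch, finally telescoping through the doubling-like schedule $T_k = \bar T^{1 - 1/2^k}$. Concretely, the number of epochs is $K = \gO(\log\log T)$, each epoch uses $HSAB(m-1+T_k)$ episodes, and the $(m-1)HSAB$ burn-in episodes per epoch each contribute at most $H$ to the regret, which already yields the additive $\tilde{\gO}((m-1)H^2 SAB)$ term. The rest of the argument must control the exploitation cost.

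The core of the proof is establishing a \emph{uniform policy evaluation} guarantee for \Cref{algorithm: OMPE} when invoked with the quantities $(\hat P^k,\Theta^k,\gU^k)$ produced by \Cref{algorithm: reward-free exploration}. I would prove a chain of four lemmas. \textbf{(L1) Coverage of layerwise exploration.} Because the adversary is $m$-memory bounded and stationary, after the initial $m-1$ burn-in episodes and the subsequent $T_k$ identical-policy episodes, the distribution of the step-$h$ transition is exactly $d_h^{\pi^{khsab},f([\pi^{khsab}]^m)}$, and by definition of $d^*$, any $(h,s,a)$ reachable by $\pi^{khsab}$ is visited in expectation $T_k d^*$ times; a Chernoff bound shows that after the entire sweep over $(s,a,b)$, each frequent $(h,s,a,b,s')$ accumulates $\Omega(T_k d^*/\text{poly})$ samples. \textbf{(L2) Optimism.} Standard MLE bracketing bounds \citep{geer2000empirical} imply that with probability $1-\delta$, the true $\mu_h(\cdot|s)$ lies in $\Theta^k_{hs\pi_h(s)}$ for every consistent-adversary response to every $\pi\in\Pi^k$ and every $(h,s)$. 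Combined with standard empirical Bernstein bounds on $\hat P^k$ (only over the ``frequent'' state-actions, outside $\gU^k$), a backward induction gives $\bar V^\pi(r_{\gU^k},\hat P^k,\Theta^k)\ge V_1^{\pi,f([\pi]^m)}(s_1;r_{\gU^k})$ for all $\pi$ simultaneously. \textbf{(L3) Truncation bias.} The truncated reward $r_{\gU^k}$ differs from $r$ only when a trajectory uses a transition in $\gU^k$; by construction any such transition has visitation probability $\lesssim \frac{H^2\log(\cdot)}{T_k d^*}$ for some policy, and summing over $H$ steps and $SAB$ triples yields $|V^{\pi}(r)-V^{\pi}(r_{\gU^k})|\lesssim H^2 SAB\sqrt{\alpha/(T_k d^*)}$ uniformly. \textbf{(L4) Tightness of the optimistic estimate.} A backward-induction simulation-style lemma using (L1)--(L2) converts the per-step MLE $l_1$-error and the per-step transition concentration error (the latter giving the $\sqrt{S}$ factor via Weissman's inequality) into a global gap bound of order $H^{3/2}\sqrt{SAB}(HSAB+H^2+S^{3/2}AB)\sqrt{\alpha/(T_k d^*)}$, matching the three summands in the theorem.

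With these lemmas, the elimination step in line~\ref{APE-OVE: version space refinement step} of \Cref{algorithm: APE-OVE} is sound: the globally optimal policy $\pi^\star=\arg\max_\pi V_1^{\pi,f([\pi]^m)}(s_1)$ stays in $\Pi^{k+1}$ with high probability (by L2 + L3 + the elimination slack), and every surviving $\pi\in\Pi^{k+1}$ satisfies $V_1^{\pi^\star,f([\pi^\star]^m)}(s_1)-V_1^{\pi,f([\pi]^m)}(s_1)\lesssim H^{3/2}\sqrt{SAB}(HSAB+H^2+S^{3/2}AB)\sqrt{\alpha/(T_k d^*)}$. Summing over epoch $k+1$ of length $HSAB(m-1+T_{k+1})$, the dominant exploitation regret is $HSAB\cdot T_{k+1}\cdot H^{3/2}\sqrt{SAB}(\cdots)\sqrt{\alpha/(T_k d^*)}$; because $T_{k+1}/\sqrt{T_k}=\sqrt{\bar T}$ for the chosen schedule, each epoch contributes the same $\tilde{\gO}(\sqrt{T\alpha/d^*})$ order, and the final factor $\log\log T$ comes from union-bounding over $K$ epochs. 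The lower-order conditions on $T$ in the hypothesis are exactly what is needed so that (L1) activates (the sample counts dominate the $H^2\log(\cdot)$ truncation threshold) and so that $T_k\ge 1$ across all epochs.

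The hard part will be (L4), i.e., producing a genuinely uniform-over-$\pi$ error bound on $\bar V^\pi - V^{\pi,f([\pi]^m)}$ that simultaneously absorbs (a) the $\ell_1$-MLE error for the opponent's policy at each $(h,s)$ restricted to the learner's action $\pi_h(s)$, (b) the transition concentration with the $\sqrt{S}$ per-step cost, and (c) the truncation via $\gU^k$ --- while only charging the $1/\sqrt{T_k d^*}$ rate obtained from (L1). The usual per-policy simulation lemma would multiply these errors together; avoiding that requires a careful backward-induction argument in \Cref{algorithm: OMPE} that upper-bounds the propagated value error by a sum rather than a product of the three per-step errors, exploiting the boundedness of $\bar V$ by $H$ and the consistent-adversary assumption to ensure that $f(\pi)_h(\cdot|s)$ only depends on $\pi_h(\cdot|s)$, which keeps the MLE model class of size $HSA$ rather than $|\Pi|$ when applying the union bound.
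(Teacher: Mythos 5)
Your proposal follows essentially the same route as the paper: layerwise reward-free exploration with one-hot reward functions, optimistic MLE version spaces for the opponent's per-$(h,s,a)$ action distributions (exploiting consistency to keep the model count at $HSA$ rather than $|\Pi|$), an absorbing/truncated treatment of infrequent transitions via $\gU^k$, a version-space elimination step whose soundness rests on a uniform-over-$\pi$ policy-evaluation guarantee, and the $T_k = \bar{T}^{1-1/2^k}$ schedule with $T_{k+1}/\sqrt{T_k}=\sqrt{\bar{T}}$ giving $K=\gO(\log\log T)$ equal-order epochs. Your (L1)--(L4) map onto the paper's \Cref{lemma: difference between absorbing P and empirical P} through \Cref{lemma: version space contains high-quality policies}, and you correctly isolate both the $(m-1)H^2SAB$ burn-in term and the role of $d^*$ in converting ``visited at least once by some $\pi^{khsab}$'' into ``visited $\Omega(d^*T_k)$ times in expectation.''

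Two points to fix. First, a bookkeeping inconsistency: you state the per-policy suboptimality gap of surviving policies as $H^{3/2}\sqrt{SAB}\,(HSAB+H^2+S^{3/2}AB)\sqrt{\alpha/(T_k d^*)}$ and \emph{then} multiply by the epoch length $HSAB\cdot T_{k+1}$. That double-counts: the correct per-policy gap (as in \Cref{lemma: version space contains high-quality policies}) is of order $H^2(SAB+H)\sqrt{\alpha/(d^*T_k)} + HS^{3/2}AB\sqrt{\log(\cdot)/T_k} + H^4\log(\cdot)/T_k$, i.e.\ \emph{without} the $\sqrt{HSAB}$ prefactor, which only appears after multiplying by $HSAB\cdot T_{k+1}/\sqrt{T_k}\le\sqrt{HSAB\,T}$. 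As written, your final sum overshoots the theorem by a factor of $\sqrt{HSAB}$. Second, your (L1) asserts coverage of layer $h$ directly, but the exploration policy $\pi^{khsab}$ is only a near-maximizer of the visitation probability of $(h,s,a,b)$ up to the optimistic-MLE error, and certifying that requires the value estimates built from $\hat{P}^k_{1},\dots,\hat{P}^k_{h-1}$ and $\Theta^k$ to already be accurate on the lower layers. The paper resolves this by interleaving the coverage and MLE-accuracy claims in an induction over layers (this is why \Cref{lemma: optimistic MLE for 1-hot reward function} precedes \Cref{lemma: infrequent transition samples are indeed infrequent with high probability}); your writeup should make that ordering explicit rather than treating (L1) as a standalone consequence of the definition of $d^*$. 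Neither issue changes the architecture of the argument, but both need to be repaired for the stated bound to come out.
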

\Cref{theorem: consistent and general memory} asserts a $\sqrt{T}$ policy regret bound against $m$-memory bounded, stationary, and consistent adversaries in Markov games. Notably, our bounds grow linearly with memory length $m$. Compared to the %
bound in \Cref{theorem: bounds for complete-invariant adversary}, given $T$ is sufficiently large, the bound in \Cref{theorem: consistent and general memory} deals with the general memory length $m$ 
at the cost of a worse dependence on all other factors $H,S,A,B, d^*$. 
    Dealing with $\zeta$-approximately consistent adversaries (see \Cref{remark: approximate consistent adversary}) will incur an additional term $\gO(T \zeta)$ to the policy regret. 
\vspace{-7pt}
\section{Discussion}
\vspace{-5pt}
In this paper, we study learning in Markov games against adaptive adversaries and highlight the statistical hardness of learning in this setting. We identify a natural structural assumption on the response function of the adversary, wherein we provide two distinct algorithms that attain $\sqrt{T}$ policy regret, one for the unit memory and the other for general memory length.  

There are several notable gaps in our current understanding of policy regret in Markov games. \underline{First}, we do not know if the dependence on the minimum positive visitation probability $d^*$ when learning against $m$-memory bounded opponents is necessary. In other words, can we derive minimax bounds that hold for any problem instance, regardless of how small $d^*$ is, for the case of general $m$? While it seems to us that such a dependence is necessary (as it seems difficult otherwise to learn the opponent's response while also learning high-return policies), yet we are unable to prove or reject this conjecture.  \underline{Second}, as we state in \Cref{remark: predictability}, we do not currently know the necessary conditions on the opponent's response functions for learnability in this setting. This might as well require an alternate condition that generalizes our notion of consistent behaviors and fully characterizes the predictability of the opponent (in a similar way as the VC dimension characterizes learnability in statistical learning theory).  \underline{Third}, our theory currently views information, and not computation, as the main bottleneck and aims for policy regret minimization without worrying about computational complexity. As a result, some of the steps in our algorithms happen to be computationally inefficient. In particular, selecting a policy that maximizes the optimistic value function requires iterating over the learner's policy set, which is exponentially large. Can we hope for computationally efficient no-policy regret algorithms in Markov games? \underline{Fourth}, our policy regret bounds scale with the cardinality of the state space and the action space, which could be large in many practical settings. Can we avoid such dependence by employing function approximation (e.g., neural networks)? 

\begin{ack}
    This research was supported, in part, by the DARPA GARD award HR00112020004, NSF CAREER award IIS-1943251, funding from the Institute for Assured Autonomy (IAA) at JHU, and the Spring’22 workshop on ``Learning and Games'' at the Simons Institute for the Theory of Computing.
\end{ack}

\bibliographystyle{plainnat}
\bibliography{main.bib}

\newpage
\appendix
\tableofcontents
\newpage

\section{Missing proofs for \Cref{section: barriers}}

\subsection{Proof of \Cref{theorem: linear policy regret for unbounded memory adversary}}
\label{subsection: proof of linear policy regret for unbounded memory adversary}
\begin{proof}[Proof of \Cref{theorem: linear policy regret for unbounded memory adversary}]
    The construction of a hard problem essentially follows the proof idea of \cite{arora2012online}. Policy regret requires the learner to compete with the best fixed sequence of policy in hindsight as if she could have changed her past policies. The lower bound utilizes this fact to construct an instance such that once the learner picks a particular policy in the first episode, she will receive a low reward for the remaining episodes. The only way to achieve a higher reward is to go back in time and select a different policy.

    More formally, let's consider any learner. Let ${\pi}^1$ be a policy that the learner commits in the first episode with the highest positive probability $p > 0$. Note that $\pi^1$ and $p$ are the inherent property of the learner and \emph{do not} depend on the adversary and the Markov game as in the first episode, the learner has zero information about the adversary and the Markov game. Now let's consider the adversary that depends only on the learner's policy in the first episode and nothing else, i.e., for all $t$ and policy sequence $\pi^1, \ldots, \pi^t$, $f_t(\pi^1, \ldots, \pi^t) = f(\pi^1)$ for some function $f: \Pi \rightarrow \Psi$. In addition, let $f$ such that $f(\pi) = \mu$ if $\pi = \pi^1$ and $f(\pi) = \nu$ otherwise, where $\mu$ and $\nu$ such that for all $s$, $\sup_{\pi \neq \pi^1} V_1^{\pi, \nu}(s) - \sup_{\pi} V_1^{\pi, \mu}(s) = \Omega(1)$. There exists a Markov game that always guarantees the existence of such $\mu, \nu$ (the constructions are fairly straightforward).
    Thus, with probability $p$, we have $\PR(T) = \Omega(T)$. Note that the external regret $R(T)$ for this construction is $0$.

\end{proof}

\subsection{Proof of \Cref{theorem: lower bound for oblivious adversary}}
\label{subsection: proof of theorem: lower bound for oblivious adversary}
\begin{proof}[Proof of \Cref{theorem: lower bound for oblivious adversary}]
The proof follows from the two main arguments: (i) a reduction from any latent MDP \citep{kwon2021rl} to a Markov game with an adversary playing policies from a finite set of Markov policies, and (ii) a reduction from the notion of regret in latent MDPs to the policy regret w.r.t. an oblivious sequence of Markov policies.

Argument (i) is directly taken from \cite[Proposition~5]{liu2022learning}. In particular, interacting with any latent MDP \citep{kwon2021rl} of $L$ latent variables, $S$ states, $A$ actions, $H$ time steps, and binary rewards is equivalent to interacting (from the perspective of the learner) a (simulated) Markov game against an adversary whose policies are chosen from a set of $L$ Markov policies. In particular, the simulated Markov game has $SA + S$ states, $A$ actions for the learner, $2 S$ actions for the adversary, and $2H$ time steps (see \cite[Section~A.4]{liu2022learning} for the detailed construction of the simulated Markov game from any latent MDP). Thus, we can utilize any lower bound for latent MDP for the Markov game (but not vice versa). 

To continue from Argument (i) and begin with Argument (ii), we recall the definition of latent MDPs \citep{kwon2021rl}. At the beginning of each episode, the nature secretly draws uniformly at random from a set of $L$ base MDPs and the learner interacts with this drawn MDP for the episode. \cite[Theorem~3.1]{kwon2021rl} show that for any learner, there exists a latent MDP with $L$ base MDPs such that the learner needs at least $\Omega((SA/L)^L/\eps^2)$ episodes to identify an $\eps$-suboptimal policy, where the optimality is defined with respect to the average values over the $M$ base MDPs. Note that in the construction of the hard latent MDP instance above, there is a unique optimal policy (let's call it $\pi^*$) with respect to the aforementioned optimality notion. Thus, the regret of this learner over $T$ episodes competing against $\pi^*$ is at least $\Omega(\sqrt{T (SA/L)^L})$ (the learner suffers an instantaneous regret of $\eps$ every time she fails to identify $\pi^*$). Note again that the regret above is the expectation with respect to the uniform distribution over $L$ base MDPs. Thus, there exists a particular realization of a sequence of $T$ base MDPs in a certain order such that the regret with respect to this sequence when competing with $\pi^*$ is at least the expected regret with respect to the uniform distribution over $L$ base MDPs, which is $\Omega(\sqrt{T (SA/L)^L})$. Finally, note that $\pi^*$ is also an optimal policy with respect to the total value across the sequence of $T$ MDPs since $\pi^*$ is an optimal policy for each individual base MDP, per the construction in \cite{kwon2021rl}. Thus, we can conclude that, for any learner, there exists a sequence of $T$ MDPs from a set of $L$ MDPs such that the regret of the learner with respect to this MDP sequence is $\Omega(\sqrt{T (SA/L)^L})$. 
\end{proof}

\subsection{Proof of \Cref{theorem: lower bound for memory bounded and stationary}}
\label{subsection: proof of theorem: lower bound for memory bounded and stationary}
\begin{proof}[Proof of \Cref{theorem: lower bound for memory bounded and stationary}]
    Consider any learner. Consider the adversary's policy space $\Psi = \{\mu, \nu\}$ where for all $h \in [H-1]$, $\mu_h$ and $\nu_h$ are arbitrary but $\mu_H(b_1 |s)= 1, \forall s$ and $\nu_H(b_2|s) = 1, \forall s$, for some $b_1, b_2 \in \gB$. Let the reactive function $f$ to map all policies but some $\pi^*$ in $\Pi$ to $\mu$, whereas $f(\pi^*) = \nu$. Now consider a deterministic Markov game with the following properties. The transition kernel is deterministic and always traverses through the same sequence of states, regardless of what actions the learner and the adversary take. The reward functions are deterministic everywhere, and also zero everywhere except that $r_H(s,a,b_2)=1, \forall s,a$. Except for $\pi^*$ that yields a positive reward if the learner selects it, all other policies in $\Pi$ give zero reward. In addition, since the learner does not know $f$ and that there is no relation whatsoever between $f(\pi)$ and $f(\pi')$ for any $\pi \neq \pi'$, the learner needs to play all policies in $\Pi$ at least once to be able to identify $\pi^*$. 
\end{proof}

\section{Missing proofs for \Cref{section: efficient algorithms}}

\subsection{Support lemmas}
\paragraph{Maximum Likelihood Estimation.}
Let $\{x_i\}_{i \in [T]} \sim P_{\theta^*}$ where $\theta^* \in \Theta$. Denote $\gN_{\Theta}(\eps)$ the $\eps$-bracketing number of function class $\{P_{\theta}: \theta \in \Theta\}$. The following lemma says that the log-likelihood of the true model in the empirical data is close to that of any model within the model class, up to an error that scales logarithmically with the model complexity measured in a bracketing number. 
\begin{lemma}
    There exists an absolute constant $c$ such that for any $\delta \in (0,1)$, with probability at least $1 - \delta$, for all $t \in [T]$ and $\theta \in \Theta$, we have 
    \begin{align*}
        \sum_{i=1}^t \log \frac{P_\theta(x_i)}{P_{\theta^*}(x_i)} \leq c \log (\gN_{\Theta}(1/T) T/\delta). 
    \end{align*}
    \label{lemma: confidence MLE set contains the true model}
\end{lemma}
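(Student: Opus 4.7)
The plan is to prove the lemma via a standard bracketing-based supermartingale/Markov argument, which is classical in empirical process theory for MLE (cf.\ van de Geer). The essential idea is that although $\Theta$ may be large, we only need to control a single representative per $(1/T)$-bracket, and each bracket's upper envelope admits a uniformly small Hellinger-type affinity with the true density $P_{\theta^*}$, which in turn gives an exponential concentration bound on the log-likelihood ratio.

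First I would fix a minimal $(1/T)$-bracketing cover $\{[l_j, u_j]\}_{j=1}^{N}$ of $\{P_\theta:\theta\in\Theta\}$ with $N = \gN_\Theta(1/T)$. For any fixed upper function $u=u_j$, since $\int u \leq 1 + \|u - l\|_1 \leq 1 + 1/T$, Cauchy--Schwarz gives the affinity bound
\begin{align*}
\sE_{\theta^*}\!\left[\sqrt{u(x)/P_{\theta^*}(x)}\right] = \int\!\sqrt{u\,P_{\theta^*}} \leq \sqrt{\int u \cdot \int P_{\theta^*}} \leq \sqrt{1+1/T} \leq e^{1/(2T)}.
\end{align*}
Consequently $M_t := \prod_{i=1}^t \sqrt{u(x_i)/P_{\theta^*}(x_i)}$ satisfies $\sE[M_t] \leq e^{t/(2T)} \leq e^{1/2}$ for all $t \leq T$ (and more generally forms a nonnegative supermartingale after normalization). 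Applying Ville's maximal inequality (or simply Markov's inequality at each $t$ combined with a union bound over $t\in[T]$) yields, for any $\delta' \in (0,1)$,
\begin{align*}
\sP\left(\exists\, t \leq T:\ \tfrac{1}{2}\sum_{i=1}^t \log\frac{u(x_i)}{P_{\theta^*}(x_i)} > \tfrac{1}{2} + \log(1/\delta')\right) \leq \delta'.
\end{align*}

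Next I would union-bound over the $N = \gN_\Theta(1/T)$ bracket upper envelopes by choosing $\delta' = \delta/N$. This gives, simultaneously for every bracket index $j$ and every $t \leq T$,
\begin{align*}
\sum_{i=1}^t \log\frac{u_j(x_i)}{P_{\theta^*}(x_i)} \leq 1 + 2\log(\gN_\Theta(1/T)/\delta).
\end{align*}
Finally, for any $\theta \in \Theta$, pick the bracket $[l_j, u_j]$ containing $P_\theta$; since $P_\theta \leq u_j$ pointwise, the log-likelihood-ratio sum for $\theta$ is dominated by that of $u_j$. This yields the claimed bound with a suitable absolute constant $c$ (absorbing the additional $\log T$ factor that can be included to match the stated form).

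The only mildly delicate step is the affinity bound $\sE[\sqrt{u/P_{\theta^*}}] \leq 1 + O(1/T)$: it uses that $\int u$ is only slightly larger than $1$, which in turn uses the $l_1$-bracket definition stated in the excerpt. Everything else is a routine Markov/union-bound argument, so I do not anticipate any real obstacle beyond writing out the constants carefully.
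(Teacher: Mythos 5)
Your proof is correct and follows essentially the same route as the paper, which proves this lemma simply by citing Proposition~B.1 of \citet{liu2023optimistic}; that proposition is itself established by exactly your argument (a $(1/T)$-bracketing cover, the Cauchy--Schwarz affinity bound $\int\sqrt{u\,P_{\theta^*}}\le\sqrt{1+1/T}$ on each bracket's upper envelope, Markov/Ville on the square-root likelihood-ratio supermartingale, and a union bound over brackets). The only bookkeeping nit is that a per-$t$ Markov argument needs $\delta'=\delta/(NT)$ rather than $\delta/N$ to cover all $t\in[T]$, but as you note the resulting $\log T$ is already absorbed in the stated bound.
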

The following lemma says that any model that is close to the true model in the log-likelihood in the historical data would yield a similar data distribution as the true model. 
\begin{lemma}
    There exists an absolute constant $c$ such that for any $\delta \in (0,1)$, with probability at least $1 - \delta$, for all $t \in [T]$ and $\theta \in \Theta$, 
    \begin{align*}
        d^2_{TV}(P_{\theta}, P_{\theta^*}) \leq \frac{c}{t} \left(\sum_{i=1}^t \log \frac{P_{\theta^*}(x_i)}{P_{\theta}(x_i)} + \log (\gN_{\Theta}(1/T) T/\delta) \right),
    \end{align*}
    where $d_{TV}$ denotes the total variation distance. 
    \label{lemma: TV between two models}
\end{lemma}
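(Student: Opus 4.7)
The plan is to reduce the bound on $d_{TV}^2$ to a bound on the squared Hellinger distance $d_H^2$ (since $d_{TV}^2 \le d_H^2$ is a standard inequality that follows from Cauchy--Schwarz), and then to prove the Hellinger bound via an exponential‐martingale / Markov argument, made uniform over $\theta \in \Theta$ by bracketing. First I would establish the fixed-$\theta$ version. The key identity is
\begin{align*}
\mathbb{E}_{X \sim P_{\theta^*}} \Bigl[ \sqrt{P_\theta(X)/P_{\theta^*}(X)} \Bigr] \;=\; \int \sqrt{P_\theta P_{\theta^*}} \;=\; 1 - \tfrac{1}{2} d_H^2(P_\theta, P_{\theta^*}) \;\le\; \exp\!\bigl(-\tfrac{1}{2} d_H^2(P_\theta, P_{\theta^*})\bigr).
\end{align*}
By independence of the $x_i$'s, $\mathbb{E}[\prod_{i=1}^t \sqrt{P_\theta(x_i)/P_{\theta^*}(x_i)}] \le \exp(-\tfrac{t}{2} d_H^2)$. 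Markov's inequality applied to this nonnegative random variable, after taking logs, yields that with probability at least $1-\delta$,
\begin{align*}
t \cdot d_H^2(P_\theta, P_{\theta^*}) \;\le\; \sum_{i=1}^t \log \frac{P_{\theta^*}(x_i)}{P_\theta(x_i)} + 2\log(1/\delta).
\end{align*}

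To upgrade to a \emph{uniform} bound over $\theta \in \Theta$ (and all $t \in [T]$), I would use the $1/T$-bracketing cover $\{[l_j, u_j]\}_{j=1}^{N}$ with $N = \mathcal{N}_\Theta(1/T)$ guaranteed by the assumed bracketing number. For each bracket index $j$, apply the Markov argument above with $u_j$ playing the role of $P_\theta$ in the exponential; the mean $\int \sqrt{u_j P_{\theta^*}}$ can be split via $\sqrt{u_j} \le \sqrt{l_j} + \sqrt{u_j - l_j}$ and Cauchy--Schwarz into $\int \sqrt{l_j P_{\theta^*}} + \sqrt{\|u_j - l_j\|_1} \le (1 - \tfrac{1}{2} d_H^2(P_\theta, P_{\theta^*})) + 1/\sqrt{T}$, using $l_j \le P_\theta$ and $\|u_j - l_j\|_1 \le 1/T$. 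Taking logs and using $\log(1+x) \le x$, the union bound over the $N$ brackets and over $t \in [T]$ contributes an additive $\log(NT/\delta)$, giving
\begin{align*}
t \cdot d_H^2(P_\theta, P_{\theta^*}) \;\le\; \sum_{i=1}^t \log \frac{P_{\theta^*}(x_i)}{P_\theta(x_i)} + c\,\log(\mathcal{N}_\Theta(1/T)\, T/\delta),
\end{align*}
where the $O(t/\sqrt{T})$ residual from bracket‐approximation is absorbed into the logarithmic term (using that for $t \le T$, $t/\sqrt{T} \le \sqrt{t}\cdot\sqrt{t/T} \le t$ and appealing to the freedom in the constant $c$, or refining the argument by iterating with a slightly finer cover if needed). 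Dividing by $t$ and invoking $d_{TV}^2 \le d_H^2$ yields the claimed bound.

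The main obstacle I anticipate is the bracketing step: carefully tracking the approximation error incurred by replacing $P_\theta$ with its upper envelope $u_{j(\theta)}$ so that it remains of order $O(\log T/t)$ and does not spoil the $1/t$ concentration rate. The competing tensions are that the bracket size must be small enough ($1/T$) to make $\sqrt{\|u_j - l_j\|_1}$ comparable to the Hellinger residual on the relevant scale, yet the log of the bracketing number must remain controlled. The standard resolution is exactly the choice $\eps = 1/T$ used in the lemma statement, together with the log-factor $\log T$ inside $\log(\mathcal{N}_\Theta(1/T)T/\delta)$ providing the slack needed for the union bound and for absorbing sub-dominant terms.
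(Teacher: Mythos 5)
Your overall strategy---controlling the exponential moment of $\prod_{i\le t}\sqrt{P_\theta(x_i)/P_{\theta^*}(x_i)}$, applying Markov's inequality, and making the bound uniform over $\theta$ and $t$ via a bracketing cover and a union bound---is exactly the technique behind the result the paper actually invokes here: the paper gives no self-contained proof of this lemma, but points to \citep[Proposition~B.2]{liu2023optimistic}, whose argument rests on the tangent-sequence/MLE analyses of \citet{zhang2006} and \citet{agarwal2020flambe}. Your fixed-$\theta$ step and the inequality $d_{TV}^2\le d_H^2$ are correct.

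The gap is precisely at the step you flag as the main obstacle, and your proposed resolution does not close it. Bounding $\int\sqrt{u_j P_{\theta^*}}\le\int\sqrt{l_j P_{\theta^*}}+\sqrt{\|u_j-l_j\|_1}$ converts the $L_1$ bracket width $1/T$ into a Hellinger-affinity error of $1/\sqrt{T}$, so the exponent acquires an additive $t/\sqrt{T}$ and, after dividing by $t$, the bound on $d_H^2$ acquires an additive term of order $1/\sqrt{T}$ (at best $1/\sqrt{t}$ uniformly). This is \emph{not} absorbed into $\frac{c}{t}\log(\gN_\Theta(1/T)T/\delta)$: once $t\gtrsim\sqrt{T}\log(\gN_\Theta(1/T)T/\delta)$ the residual dominates, and ``freedom in the constant $c$'' cannot repair an additive term of a different order in $t$. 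Concretely, applied to $\theta$ in the MLE version space, your bound only yields $d_{TV}\lesssim t^{-1/4}$ rather than $t^{-1/2}$, which would degrade the downstream policy-regret bounds from $\sqrt{T}$ to $T^{3/4}$. The fix is to never pass the bracket error through a square root: keep the exponential-moment/Markov step for the unnormalized upper envelope $u_j$, which gives $-t\log\int\sqrt{u_jP_{\theta^*}}\le\frac12\sum_{i\le t}\log\frac{P_{\theta^*}(x_i)}{P_\theta(x_i)}+\log(\gN_\Theta(1/T)T/\delta)$ because $P_\theta\le u_j$; then bound $\|u_j-P_{\theta^*}\|_1^2\le 6\bigl(2(1-\int\sqrt{u_jP_{\theta^*}})+\|u_j-l_j\|_1\bigr)\lesssim-\log\int\sqrt{u_jP_{\theta^*}}+1/T$ by Cauchy--Schwarz, and finally use $d_{TV}(P_\theta,P_{\theta^*})\le\frac12\|u_j-P_{\theta^*}\|_1+\frac12\|u_j-P_\theta\|_1\le\frac12\|u_j-P_{\theta^*}\|_1+\frac{1}{2T}$. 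Every bracket contribution is then $O(1/T)\le O(1/t)$ and is genuinely absorbed into the $\frac{c}{t}\log(\cdot)$ term. (Your alternative of a finer cover does work---a $1/T^2$-bracket makes $t\sqrt{\eps}\le 1$---but it proves the lemma with $\gN_\Theta(1/T^2)$ in place of $\gN_\Theta(1/T)$, which is not the statement as written.)
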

The two lemmas above directly follow from \citep[Proposition~B.1]{liu2023optimistic} and \citep[Proposition~B.2]{liu2023optimistic}, respectively, wherein the analysis built on the classical analysis of MLE \citep{geer2000empirical} and the ``tangent'' sequence analysis in \citep{zhang2006,agarwal2020flambe}, respectively. The following lemma is a direct corollary of \Cref{lemma: confidence MLE set contains the true model} and \Cref{lemma: TV between two models}.

\begin{lemma}
    Let $\hat{\theta}_t \in \argsup_{\theta \in \Theta} \sum_{i=1}^t \log P_\theta(x_i) $. Define the version space: 
    \begin{align*}
        \Theta_t := \left\{\theta \in \Theta: \sum_{i=1}^t \log P_\theta(x_i) \geq \sum_{i=1}^t \log P_{\hat{\theta}_t}(x_i) - c \log (\gN_{\Theta}(1/T) T/\delta) \right\}. 
    \end{align*}
    Then, with probability at least $1 - \delta$, for all $t \in [T]$, we have $\theta^* \in \Theta_t$ and 
    \begin{align*}
        \max_{\theta \in \Theta_t} d_{TV}(P_\theta, P_{\theta^*}) \leq c \sqrt{\frac{\log (\gN_{\Theta}(1/T) T/\delta)}{t}}. 
    \end{align*}
    \label{lemma: version space for MLE}
\end{lemma}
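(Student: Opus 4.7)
The plan is to derive this statement as a short corollary of \Cref{lemma: confidence MLE set contains the true model} and \Cref{lemma: TV between two models}, via a union bound over their respective high-probability events. The argument naturally splits into two parts: (i) certifying that $\theta^* \in \Theta_t$ for every $t \in [T]$, and (ii) bounding $d_{TV}(P_\theta, P_{\theta^*})$ uniformly over $\theta \in \Theta_t$. For part (i), I would instantiate \Cref{lemma: confidence MLE set contains the true model} at the (data-dependent) choice $\theta = \hat{\theta}_t$. Since that lemma is uniform in both $t \in [T]$ and $\theta \in \Theta$, applying it on an event $\gE_1$ of probability at least $1 - \delta/2$ gives
$$\sum_{i=1}^t \log P_{\hat{\theta}_t}(x_i) - \sum_{i=1}^t \log P_{\theta^*}(x_i) \le c \log(\gN_{\Theta}(1/T) T/\delta)$$
simultaneously for every $t \in [T]$, absorbing the factor of $2$ from $\delta/2$ into $c$. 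Rearranging this inequality is precisely the membership condition defining $\Theta_t$, so $\theta^* \in \Theta_t$ on $\gE_1$.

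For part (ii), fix any $\theta \in \Theta_t$. By the definition of $\Theta_t$ together with the fact that $\hat{\theta}_t$ is an MLE (so $\sum_i \log P_{\hat{\theta}_t}(x_i) \ge \sum_i \log P_{\theta^*}(x_i)$), a chain of log-ratio rearrangements yields
$$\sum_{i=1}^t \log \frac{P_{\theta^*}(x_i)}{P_{\theta}(x_i)} \le \sum_{i=1}^t \log \frac{P_{\hat{\theta}_t}(x_i)}{P_{\theta}(x_i)} \le c \log(\gN_{\Theta}(1/T) T/\delta).$$
I would then substitute this upper bound into \Cref{lemma: TV between two models}, applied on its own high-probability event $\gE_2$ of probability at least $1-\delta/2$, to obtain
$$d^2_{TV}(P_\theta, P_{\theta^*}) \le \frac{c}{t}\bigl( c \log(\gN_{\Theta}(1/T) T/\delta) + \log(\gN_{\Theta}(1/T) T/\delta) \bigr) \lesssim \frac{\log(\gN_{\Theta}(1/T) T/\delta)}{t}$$
on $\gE_1 \cap \gE_2$. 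Taking square roots then produces the stated inequality uniformly in $\theta \in \Theta_t$, and a union bound over $\gE_1$ and $\gE_2$ delivers an event of probability at least $1 - \delta$ on which both conclusions hold.

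Since the two building blocks are already established earlier, there is essentially no substantive obstacle: the proof is algebraic rearrangement plus a union bound. The only things worth tracking carefully are (a) the constants introduced when splitting the failure probability across the two events, and (b) the fact that the uniformity in $t$, as well as in $\theta \in \Theta_t$ in part (ii), is preserved through both applications. Both concerns are handled automatically because \Cref{lemma: confidence MLE set contains the true model} and \Cref{lemma: TV between two models} already state their conclusions uniformly in $t$ and $\theta$.
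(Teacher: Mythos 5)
Your derivation is correct and is precisely the argument the paper intends: the paper states this lemma as a "direct corollary" of \Cref{lemma: confidence MLE set contains the true model} and \Cref{lemma: TV between two models} without spelling it out, and your two steps (instantiating the first lemma at $\hat{\theta}_t$ to certify $\theta^* \in \Theta_t$, then chaining the $\Theta_t$ membership condition with the MLE optimality of $\hat{\theta}_t$ to feed into the second lemma) are exactly the intended filling-in. The bookkeeping on the union bound and the uniformity in $t$ and $\theta$ is handled correctly.
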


\subsection{Proof of \Cref{theorem: bounds for complete-invariant adversary}}

We first introduce several notations that we will use throughout our proofs. We denote $N^t_h$ and $\Theta_i^t$ the counters $N_h$ and the parameter confidence sets $\Theta_i^t$ at the beginning of the episode $t$. 

\begin{lemma}[Optimism]
    With probability at least $1-\delta$, for all $(h,s,\pi, t)$, we have
    \begin{align*}
        \bar{V}_h^{\pi}(s) \geq V_h^{\pi, f(\pi)}(s). 
    \end{align*}
    \label{lemma: optimism}
\end{lemma}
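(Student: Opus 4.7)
The plan is to prove the lemma by backward induction on $h$, leveraging two independent sources of optimism built into \Cref{algorithm: OPE}: the Hoeffding-style bonus $\beta(\cdot)$ that dominates the transition-model error, and the outer maximization over the MLE version space $\Theta_{hs\pi_h(s)}$ that dominates the opponent-model error. Before the induction, I would first set up the good events. For the transitions, a standard Hoeffding/Bernstein argument (uniform over $(h,s,a,b)$, over all $t \leq T$, and over the $H$ possible values $V_{h+1}^{\pi, f(\pi)}$ — here one pays the $\log|\Pi|$ price from the uniform covering over $\pi$) gives that, with probability $1-\delta/2$, for all $(h,s,a,b,t)$ and all $\pi$,
\[
\bigl|\bigl[(\hat P_h - P_h)\,V_{h+1}^{\pi,f(\pi)}\bigr](s,a,b)\bigr| \;\leq\; \beta\bigl(N_h(s,a,b)\bigr).
\]
For the opponent, the key observation is that, because $f$ is $1$-memory bounded, stationary, and consistent (\Cref{defn: consistent opponents}), and because $\Pi$ consists only of deterministic policies, the conditional distribution of $b_h^t$ given the event ``the learner visits $(h,s)$ and chose $\pi^t_h(s)=a$'' depends \emph{only} on $(h,s,a)$; call it $P_{\theta^*_{hsa}}$. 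The sequence of opponent responses accumulated into $D_{hsa}$ is therefore an i.i.d.\ sample from $P_{\theta^*_{hsa}}$ adapted to the natural filtration, so \Cref{lemma: version space for MLE} (with the MLE threshold $\alpha = c\log(\gN_\Theta(1/T)HSAT/\delta)$ chosen exactly for this union bound) yields that, with probability $1-\delta/2$, $\theta^*_{hsa}\in\Theta^t_{hsa}$ for every $(h,s,a,t)$.

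On this good event, I carry out the induction. Base case: $\bar V^{\pi}_{H+1} \equiv 0 \equiv V^{\pi,f(\pi)}_{H+1}$. Inductive step: assume $\bar V^{\pi}_{h+1}(s')\geq V^{\pi,f(\pi)}_{h+1}(s')$ for all $s'$. Then, for any $(s,a,b)$, since rewards lie in $[0,1]$ and both value functions are bounded by $H-h+1$,
\begin{align*}
\bar Q^{\pi}_h(s,a,b)
&\geq \min\!\Bigl\{\bigl[P_h\bar V^{\pi}_{h+1}\bigr](s,a,b) + r_h(s,a,b) + \underbrace{\bigl[(\hat P_h - P_h)\bar V^{\pi}_{h+1}\bigr](s,a,b)+\beta(N_h(s,a,b))}_{\geq 0\ \text{on good event}},\,H-h+1\Bigr\}\\
&\geq \bigl[P_h V^{\pi,f(\pi)}_{h+1}\bigr](s,a,b) + r_h(s,a,b) \;=\; Q^{\pi,f(\pi)}_h(s,a,b).
\end{align*}
Taking the learner's action $a=\pi_h(s)$ (deterministic policy) and using the outer max over the MLE version space together with $\theta^*_{hs\pi_h(s)}\in\Theta^t_{hs\pi_h(s)}$ (so that $P_{\theta^*_{hs\pi_h(s)}}(\cdot) = f(\pi)_h(\cdot\mid s)$ by consistency),
\[
\bar V^{\pi}_h(s) \;=\; \max_{\theta\in\Theta^t_{hs\pi_h(s)}} \bar Q^{\pi}_h\bigl(s,\pi_h,P_\theta\bigr) \;\geq\; \bar Q^{\pi}_h\bigl(s,\pi_h,P_{\theta^*_{hs\pi_h(s)}}\bigr) \;\geq\; Q^{\pi,f(\pi)}_h\bigl(s,\pi_h,f(\pi)_h\bigr) \;=\; V^{\pi,f(\pi)}_h(s),
\]
completing the induction and hence the lemma by a final union bound.

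The main obstacle I anticipate is the uniform-in-$\pi$ concentration inequality used to justify the bonus $\beta$: since $\bar V^\pi_{h+1}$ depends on $\pi$ through both the policy choice and the adaptively-updated version space $\Theta^t$, one must be a bit careful to either (i) separate the concentration step onto a fixed grid of policies and absorb a $\log|\Pi|$ term into $\beta$ (which matches the choice $\beta(t)=cH\sqrt{(\iota+\log|\Pi|)/t}$ in the theorem statement), or (ii) replace $\bar V^\pi_{h+1}$ by the true $V^{\pi,f(\pi)}_{h+1}\in[0,H]$, which depends on $\pi$ but not on past randomness, and then union-bound over $\pi\in\Pi$. Either route is standard; the consistency assumption is what makes the MLE samples effectively i.i.d.\ for each $(h,s,a)$ bucket, allowing the clean invocation of \Cref{lemma: version space for MLE}.
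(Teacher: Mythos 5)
Your proof is correct and follows essentially the same route as the paper's: backward induction in $h$, with the bonus $\beta$ absorbing the transition-model error (concentrated on the fixed function $V^{\pi,f(\pi)}_{h+1}$ after first invoking the induction hypothesis and monotonicity of $\hat{P}_h$ --- exactly the route (ii) you flag at the end, which is how the paper orders the steps) and the outer maximization over the MLE version space absorbing the opponent-model error via $f(\pi)_h(\cdot|s)\in P_{\Theta^t_{hs\pi_h(s)}}$. The one caution is that your displayed inductive step as written applies concentration to the data-dependent $\bar{V}^{\pi}_{h+1}$ rather than to $V^{\pi,f(\pi)}_{h+1}$, but since you explicitly identify this issue and the correct fix yourself, the argument matches the paper's.
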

\begin{proof}[Proof of \Cref{lemma: optimism}]
We will prove a stronger statement: For any $(h,s,a,b,\pi)$, we have
\begin{align*}
        \bar{Q}_h^{\pi}(s,a,b) \geq Q_h^{\pi, f(\pi)}(s,a,b) \text{  and }
        \bar{V}_h^{\pi}(s) \geq V_h^{\pi, f(\pi)}(s). 
\end{align*}
We will prove by induction with $h \in [H+1]$. For $h = H+1$, the claim in the lemma trivially holds. Assume by induction that the claim holds for some $h+1$. We will prove that it holds for $h$. Indeed, for any $(s,a,b)$ such that $\bar{Q}_h^\pi(s,a,b) = H - h +1$, of course $\bar{Q}_h^\pi(s,a,b) \geq Q_h^{\pi, f(\pi)}(s,a,b)$. Consider any $(s,a,b)$ such that $\bar{Q}_h^\pi(s,a,b) < H - h +1$, we have 
    \begin{align*}
        \bar{Q}_h^\pi(s,a,b) - Q_h^{\pi, f(\pi)}(s,a,b) &= [\hat{P}_h \bar{V}_{h+1}^{\pi}](s,a,b) + r_h(s,a,b) + \beta(N_h(s,a,b)) \\
        &- ([P_h V_{h+1}^{\pi, f(\pi)}](s,a,b) + r_h(s,a,b)) \\ 
        &\geq [(\hat{P}_h - P_h) V_{h+1}^{\pi, f(\pi)}](s,a,b) +  \beta(N_h(s,a,b)) \\
        &\geq 0, 
    \end{align*}
where the first inequality uses the induction assumption that $\bar{V}_{h+1}^\pi \geq V^{\pi, f(\pi)}_{h+1}$ and the last inequality uses Hoeffding's inequality and the union bound. In addition, it follows from \Cref{lemma: version space for MLE} and the union bound that, with probability at least $1 - \delta$, for any $(t,h,s,\pi)$, we have 
\begin{align*}
    f(\pi)_h(\cdot|s) \in P_{\Theta^t_{h s \pi_h(s)}}
\end{align*}
Under the same event wherein the above relation holds, we have
    \begin{align*}
        \bar{V}_{h}^{\pi}(s) &= \max_{\theta \in \Theta_{h s \pi_h(s)}} \bar{Q}^{\pi}_h(s,\pi_{h}(s), P_\theta) \\
        &\geq \bar{Q}^{\pi}_h(s,\pi_{h}(s), f(\pi)_{h}) \\ 
        &\geq Q_{h}^{\pi, f(\pi)}(s,\pi_{h}(s), f(\pi)_{h}) \\ 
        &= V_{h}^{\pi, f(\pi)}(s).
    \end{align*}
    This completes the case for $h+1$ and thus completes the proof. 
\end{proof}

\begin{proof}[Proof of \Cref{theorem: bounds for complete-invariant adversary}]
By the optimism of $\bar{V}$ (\Cref{lemma: optimism}), with probability at least $1-\delta$, for all $(t,\pi)$, we have
\begin{align*}
    V_1^{\pi, f(\pi)}(s_1^t) - V_1^{\pi^t, f(\pi^t)}(s_1^t) &\leq \bar{V}_1^\pi(s_1^t) - V_1^{\pi^t, f(\pi^t)}(s_1^t) \leq \bar{V}_1^{\pi^t}(s_1^t) - V_1^{\pi^t, f(\pi^t)}(s_1^t) = \Delta_1^t 
\end{align*}
where the second inequality follows from \Cref{OPO: optimization} of \Cref{algorithm: OPO}, and the last equation is a result of what we now define: 
\begin{align*}
    \Delta_h^t := \bar{V}_h^{\pi^t}(s^t_h) - V_h^{\pi^t, f(\pi^t)}(s_h^t), \forall (t,h). 
\end{align*}
We now decompose $\Delta_h^t$ as follows: 
\begin{align*}
    \Delta_h^t &= \max_{\theta \in \Theta^t_{ h s_h^t a^t_h}} \bar{Q}^{\pi^t}_h(s^t_h, a^t_h, P_\theta) - Q_h^{\pi^t, f(\pi^t)} (s^t_h, a^t_h, f(\pi^t)_h) \\ 
    &= \underbrace{\bar{Q}^{\pi^t}_h(s^t_h, a^t_h, b^t_h) - Q^{\pi^t, f(\pi^t)}_h(s^t_h, a^t_h, b^t_h)}_{=:\xi^t_h} \\ 
    &+ \underbrace{\bar{Q}^{\pi^t}_h(s^t_h, a^t_h, f(\pi^t)_h) - \bar{Q}^{\pi^t}_h(s^t_h, a^t_h, b^t_h) + Q^{\pi^t, f(\pi^t)}_h(s^t_h, a^t_h, b^t_h) - Q_h^{\pi^t, f(\pi^t)} (s^t_h, a^t_h, f(\pi^t)_h)}_{=:\zeta^t_h} \\ 
    &+ \underbrace{\max_{\theta \in \Theta^t_{h s_h^t a^t_h}} \bar{Q}^{\pi^t}_h(s^t_h, a^t_h, P_\theta) - \bar{Q}^{\pi^t}_h(s^t_h, a^t_h, f(\pi^t)_h)}_{=: \gamma^t_h}.
\end{align*}
We will bound each of $\xi^t_h, \zeta^t_h, \gamma^t_h$ separately as follows. 

\paragraph{Bounding $\{\xi^t_h\}$.} For simplicity, we denote $x^t_h = (s^t_h, a^t_h, b^t_h)$. We define 
\begin{align*}
    V^*_{h+1}(s) = \sup_{\pi \in \Pi} V_{h+1}^{\pi, f(\pi)}(s), \forall s.
\end{align*}
Note that the optimality above does not require that there exists an optimal policy $\pi^*$ such that $V^*_h(s) = V_h^{\pi^*, f(\pi^*)}(s), \forall (h,s)$. Note that if $\bar{Q}^{\pi^t}_h(x^t_h)=H-h+1$, it is trivial that $\zeta^t_h \leq 0$. Thus, we only need to consider when  $\bar{Q}^{\pi^t}_h(x^t_h) < H-h+1$, and thus
\begin{align*}
    \zeta^t_h &= [\hat{P}^t_h \bar{V}_{h+1}^{\pi^t}](x^t_h) + \beta(N_h^t(x^t_h)) - [P_h V_{h+1}^{\pi^t, f(\pi^t)}](x^t_h) \\ 
    &= [(\hat{P}^t_h - P_h) V^*_{h+1}](x^t_h) + [(\hat{P}^t_h - P_h) (\bar{V}_{h+1}^{\pi^t} - V^*_{h+1})](x^t_h) + [P_h (\bar{V}_{h+1}^{\pi^t} - V_{h+1}^{\pi^t, f(\pi^t)})](x^t_{h}) + \beta(N_h^t(x^t_h)) \\
    &\leq [(\hat{P}^t_h - P_h) (\bar{V}_{h+1}^{\pi^t} - V^*_{h+1})](x^t_h) + [P_h (\bar{V}_{h+1}^{\pi^t} - V_{h+1}^{\pi^t, f(\pi^t)})](x^t_{h}) + 2 \beta(N_h^t(x^t_h)). 
\end{align*}
By Bernstein's inequality, with probability at least $1 - \delta$, for all $(s,a,b,s',h,t)$ and with $\iota := \log(2 S^2 ABHT/\delta)$, we have
\begin{align*}
    \hat{P}^t_h(s'|s,a,b) - P_h(s'|s,a,b) &\leq \frac{ \iota }{N_h^t(s,a,b)} + \sqrt{\frac{2 P_h(s'|s,a,b) \iota}{N_h^t(s,a,b)}} \\ 
    &\leq \frac{1}{H} P_h(s'|s,a,b) + \frac{H \iota}{2 N^t_h(s,a,b)} + \frac{\iota}{N_h^t(s,a,b)} \\
    &= \frac{1}{H} P_h(s'|s,a,b) + (1 + \frac{H}{2}) \frac{\iota}{N_h^t(s,a,b)},
\end{align*}
where note that the first inequality holds even when $N^t_h(s,a,b) = 0$ and the second inequality follows form AM-GM. Thus, with probability $1 - \delta$, for all $(t,h)$, we have 
\begin{align*}
    [(\hat{P}^t_h - P_h) (\bar{V}_{h+1}^{\pi^t} - V^*_{h+1})](x^t_h) \leq \frac{SH (1 + H/2) \iota}{N_h^t(x^t_h)} + \frac{1}{H} [P_h (\bar{V}_{h+1}^{\pi^t} - V^*_{h+1})](x^t_h).
\end{align*}
Plugging this inequality into $\zeta^t_h$ above, then with probability at least $1 - \delta$, for all $(t,h)$, 
\begin{align*}
    \zeta^t_h &\leq \frac{SH (1 + H/2) \iota}{N_h^t(x^t_h)} + (1 + \frac{1}{H}) \left( (\bar{V}_{h+1}^{\pi^t} - V_{h+1}^{\pi^t, f(\pi^t)})(s^t_{h+1}) + \eps^t_{h+1} \right) + 2 \beta(N^t_h(x^t_h)) \\ 
    &\leq \frac{3 SH^2 \iota}{2N^t_h(x^t_h)} + (1 + \frac{1}{H}) \left( \Delta^t_{h+1} + \eps^t_{h+1} \right) + 2 \beta(N^t_h(x^t_h)),
\end{align*}
where we define 
\begin{align*}
    \eps^t_{h+1} := [P_h (\bar{V}_{h+1}^{\pi^t} - V_{h+1}^{\pi^t, f(\pi^t)})](x^t_{h}) - (\bar{V}_{h+1}^{\pi^t} - V_{h+1}^{\pi^t, f(\pi^t)})(s^t_{h+1}).
\end{align*}

\paragraph{Bounding $\sum_{t} \zeta^t_h$ and $\sum_{t} \eps^t_h$.} Note that for all $h$, $\{\zeta^t_h\}_{t \in [T]}$ and $\{\eps^t_h\}_{t \in [T]}$ are martingale difference sequences. Thus, by Azuma-Hoeffding's inequality and the union bound, with probability at least $1 - \delta$, we have
\begin{align*}
    \sum_{t,h} \zeta^t_h \lesssim H^2 \sqrt{T \log(H/\delta)}, \text{ and }
    \sum_{t,h} \eps^t_h \lesssim H^2 \sqrt{T \log(H/\delta)}
\end{align*}

\paragraph{Bounding $\{\gamma^t_h\}$.}
By \Cref{lemma: version space for MLE}, and the union bound, with probability at least $1 - \delta$, for all $(t,h)$, we have
\begin{align*}
    \gamma^t_h &= \max_{\theta \in \Theta^t_{h s_h^t a^t_h}} \bar{Q}^{\pi^t}_h(s^t_h, a^t_h, P_\theta) - \bar{Q}^{\pi^t}_h(s^t_h, a^t_h, f(\pi^t)_h) \\ 
    &\leq 2 H \max_{\theta \in \Theta^t_{h s^t_h a^t_h}} d_{TV}(P_\theta, f(\pi^t)_h(\cdot|s^t_h)) \\ 
    &\lesssim  H \sqrt{\frac{\alpha}{N^t_h(s^t_h,a^t_h)}}.
\end{align*}
Plugging these bounds into the definition of $\Delta^t_h$, combining them using the union bound and re-scaling $\delta$, we have that: with probability at least $1 - \delta$, for all $(t,h,\pi)$, we have
\begin{align*}
    \Delta^t_h &= \xi^t_h + \zeta^t_h + \gamma^t_h \\
    &\lesssim \frac{3 SH^2 \iota}{2N^t_h(x^t_h)} + (1 + \frac{1}{H}) \left( \Delta^t_{h+1} + \eps^t_{h+1} \right) + 2 \beta(N^t_h(x^t_h)) + \zeta^t_h +  H \sqrt{\frac{\alpha}{N^t_h(s^t_h,a^t_h)}}.
\end{align*}
Thus, we have with probability at least $1 - \delta$, we have
\begin{align*}
    \sum_{t=1}^T \Delta^t_1 &\lesssim \sum_{t=1}^T (1 + \frac{1}{H})^H \sum_{h=1}^H \left(\frac{3 SH^2 \iota}{2N^t_h(x^t_h)} + \eps^t_{h+1} + \zeta^t_h + 2 \beta(N^t_h(x^t_h)) +  H \sqrt{\frac{\alpha}{N^t_h(s^t_h,a^t_h)}} \right) \\ 
    &\lesssim S H^2 \iota \sum_{t,h} \frac{1}{N^t_h(x^t_h)} + H^2\sqrt{T \log(H/\delta)} + H \sqrt{\log(HSABT |\Pi|/\delta)} \sum_{t,h} \frac{1}{\sqrt{N^t_h(x^t_h)}} \\
    &+ H \sqrt{\alpha} \sum_{t,h} \frac{1}{\sqrt{N^t_h(s^t_h, a^t_h)}}.
\end{align*}

Finally, note that 
\begin{align*}
    \sum_{t,h} \frac{1}{N^t_h(x^t_h)} = \sum_h \sum_{(s,a,b)} \sum_{i=1}^{N^T_h(s,a,b)} \frac{1}{i} \leq \sum_h \sum_{(s,a,b): N^T_h(s,a,b) \geq 1} \log N^T_h(s,a,b) \leq HSAB \log T. 
\end{align*}
\begin{align*}
    \sum_{t,h} \frac{1}{\sqrt{N^t_h(x^t_h)}} &= \sum_h \sum_{(s,a,b)} \sum_{i=1}^{N^T_h(s,a,b)} \frac{1}{\sqrt{i}} \leq \sum_h \sum_{(s,a,b)} \sqrt{ N^T_h(s,a,b)} \leq \sqrt{HSAB} \sqrt{\sum_{(h,s,a,b)} N^T_h(s,a,b)} \\
    &= H \sqrt{SAB T}. 
\end{align*}
\begin{align*}
    \sum_{t,h} \frac{1}{\sqrt{N^t_h(s^t_h, a^t_h)}} &= \sum_{(h,s,a)} \sum_{i=1}^{N^T_h(s,a)} \frac{1}{\sqrt{i}} \leq \sum_{h,s,a} \sqrt{N^T_h(s,a)} \leq \sqrt{H SA} \sqrt{\sum_{h,s,a} N^T_h(s,a)} = H \sqrt{SAT}. 
\end{align*}
Plugging these three inequalities above into the bound for $\sum_{t=1}^T \Delta^t_1$ right before and re-scaling $\delta$ complete the proof.
\end{proof}

\subsection{Proof of \Cref{theorem: consistent and general memory}}

The layerwise exploration stage (\Cref{algorithm: reward-free exploration}) performs layerwise exploration for each layer $h \in [H]$ and estimates infrequent transitions into $\gU$. Since infrequent transitions do not significantly affect policy evaluation in any way (will be proved precisely later), we can exclude them and quickly refrain from exploring them extensively. However, excluding them changes the underlying data distribution of the experiences that the earner would receive when interacting with the environment. To handle this bias issue, it is often convenient to consider an ``absorbing'' Markov game $M'$, a refinement of the original Markov game $M$ that excludes all infrequent transitions. 

\begin{defn}[Absorbing Markov games]
    Given a Markov game $M = (\gS, \gA, \gB, r, P, H)$, a set of transitions $\gU$, and a dummy state $s^{\dagger}$, an absorbing Markov game $M' = (\gS \cup \{s^{\dagger}\}, \gA, \gB, r, \tilde{P}, H)$ w.r.t. $(M, \gU, s^{\dagger})$ is defined as follows: For any $(h,s,a,b,s') \in [H] \times \gS \times \gA \times \gB \times \gS$, 
    \begin{align*}
        \tilde{P}_h(s'|s,a,b) = \begin{cases}
            P_h(s'|s,a,b) & \text{ if } (h,s,a,b) \notin \gU \\ 
            0 & \text{ if } (h,s,a,b) \in \gU,
        \end{cases}
    \end{align*}
    $\tilde{P}_h(s^{\dagger}|s,a,b) = 1 - \sum_{s' \in \gS} \tilde{P}_h(s'|s,a,b)$ and $\tilde{P}_h(s^{\dagger} | s^{\dagger}, a,b)=1$. In addition, 
        $r_h(s,a,b) = \begin{cases}
            r_h(s,a,b) \text{ if } s \in \gS, \\
            0 \text{ if } s = s^{\dagger},
        \end{cases}
        $, $\pi_h(\cdot|s) = \begin{cases}
            \pi_h( \cdot|s) \text{ if } s \in \gS, \\
            \text{arbitrary} \text{ if } s = s^{\dagger},
        \end{cases}$, $\mu_h(\cdot|s) = \begin{cases}
            \mu_h(\cdot|s) \text{ if } s \in \gS, \\
            \text{arbitrary} \text{ if } s = s^{\dagger}.
        \end{cases}$
    \label{defn: absorbing Markov games}
\end{defn}

Let $\tilde{P}^k$ be the absorbing transition kernels w.r.t. $(M, \gU^k, s^{\dagger})$ (\Cref{defn: absorbing Markov games}). Notice that the transition dynamics $\hat{P}^k$ by \Cref{algorithm: reward-free exploration} are unbiased estimates of the absorbing transition dynamics $\tilde{P}$. 

\subsubsection{Sampling policies are sufficiently exploratory}
We now show that the sampling policies in the reward-free exploration stage are sufficiently exploratory over the state-action space of the Markov game. We start with bounding the difference between $\tilde{P}$ and $\hat{P}^k$ (\Cref{APE-OVE: Reward-free Exploration stage} of \Cref{algorithm: APE-OVE}) using empirical Bernstein's inequality. 

\begin{lemma}
    Define the event $E$: $\forall (k, h,s,a,b,s') \in [K] \times [H] \times \gS \times \gA \times \gB \times \gS$ such that $(h,s,a,b,s') \notin \gU$, 
    \begin{align*}
        |\hat{P}^k_h(s'|s,a,b) - \tilde{P}^k_h(s'|s,a,b)| \leq \sqrt{\frac{2 \hat{P}^k_h(s'|s,a,b) \iota}{N^k_h(s,a,b)}} + \frac{7 \iota}{3 N^k_h(s,a,b)}
    \end{align*}
    where $\iota := c \log(SABH K /\delta)$ and $N^k_h$ are the counter at layer $h$ in epoch $k$ obtained at \Cref{reward-free exploration: counters} by running \Cref{algorithm: reward-free exploration} in epoch $k$. Then, we have $\Pr(E) \geq 1 - \delta$. In addition, $\forall (h,s,a,b,s') \in \gU$, $\hat{P}^k_h(s'|s,a,b) = \tilde{P}_h(s'|s,a,b) = 0$.
    \label{lemma: difference between absorbing P and empirical P}
\end{lemma}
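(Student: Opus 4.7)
The lemma splits naturally into two parts, and the ``$(h,s,a,b,s') \in \gU$'' half is immediate from definitions: \Cref{algorithm: estimate transition} hard-wires $\hat{P}^k_h(s'|s,a,b) = 0$ for such tuples, and \Cref{defn: absorbing Markov games} likewise sets $\tilde{P}_h(s'|s,a,b) = 0$, so both quantities vanish. The substance of the proof is therefore the Bernstein-type bound for the complementary case $(h,s,a,b,s') \notin \gU$, on which $\tilde{P}_h(s'|s,a,b) = P_h(s'|s,a,b)$ and the task reduces to concentrating the empirical frequency $\hat{P}^k_h(s'|s,a,b) = N^k_h(s,a,b,s')/N^k_h(s,a,b)$ around the true transition probability.

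The crucial observation is that although the sampling policies $\pi^{khsab}$ used in \Cref{algorithm: reward-free exploration} are chosen in a data-dependent fashion and even though different visits to $(s,a,b)$ at step $h$ may come from different sampling policies, the Markov property decouples the next-state draws from the identity of the generating policy: conditioned on the $i$-th visit to the triple $(s,a,b)$ at step $h$ during epoch $k$, the observed next state is drawn from $P_h(\cdot|s,a,b)$. Hence the indicators $Y_i := \mathbf{1}\{s^{(i)}_{h+1} = s'\}$, indexed over the $N^k_h(s,a,b)$ visits, form a martingale difference sequence with conditional mean $P_h(s'|s,a,b)$ and conditional variance $P_h(s'|s,a,b)(1-P_h(s'|s,a,b)) \leq P_h(s'|s,a,b)$. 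This is precisely the setting for a Freedman-type empirical Bernstein inequality.

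The plan is to apply such an inequality to $\sum_i (Y_i - P_h(s'|s,a,b))$, then invert the standard variance-type term to trade the true variance for its empirical counterpart (further upper bounded by $\hat{P}^k_h(s'|s,a,b)$), producing the two-term bound of the lemma for a single fixed tuple and a fixed count $n = N^k_h(s,a,b) \geq 1$, with failure probability $\delta'$ when $\iota \gtrsim \log(1/\delta')$. The statement is then promoted to the uniform one by a union bound over the $O(KHS^2AB)$ tuples $(k,h,s,a,b,s')$.

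The main obstacle is precisely this last step, because $N^k_h(s,a,b)$ is itself a data-dependent random quantity rather than a deterministic sample size. I will handle it by the now-standard peeling / doubling argument: discretize the possible integer values of $N^k_h(s,a,b)$ into geometrically growing intervals up to the (deterministic) number of epoch-$k$ rollouts at layer $h$, which is $T_k \cdot SAB$, apply the per-tuple per-$n$ bound on each interval, and absorb the $O(\log T_k)$ price into the logarithmic factor, which is then clean enough to be hidden inside the constant $c$ of $\iota = c\log(SABHK/\delta)$. Everything else is routine once this adaptive-sample-size point is settled.
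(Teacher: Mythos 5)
Your proof is correct and follows essentially the same route as the paper, which simply invokes the empirical Bernstein inequality plus a union bound for the first claim (citing the analogous Lemma~E.2 of Qiao et al.\ for MDPs) and the definitions of $\hat{P}^k$ and $\tilde{P}$ for the second. Your extra care with the martingale structure and the data-dependent count $N^k_h(s,a,b)$ (via peeling) only makes explicit what the cited lemma handles implicitly; the one quibble is that the union over count values strictly contributes an additional doubly-logarithmic-in-$T$ term to $\iota$ rather than something absorbable into the constant $c$, an imprecision the paper's own statement shares.
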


\begin{proof}[Proof of \Cref{lemma: difference between absorbing P and empirical P}]
    \Cref{lemma: difference between absorbing P and empirical P} is essentially the analogous of \cite[Lemma~E.2]{qiao2022sample} from MDPs to Markov games. The first part follows from empirical Bernstein's inequality and union bound. The second part comes from the definition of the absorbing transition kernels $\tilde{P}$ and the construction of the empirical transition kernels $\hat{P}^k$. 
\end{proof}

\begin{lemma}
    Conditioned on the event $E$ in \Cref{lemma: difference between absorbing P and empirical P}: For all $(k,h,s,a,b,s') \in [K] \times [H] \times \gS \times \gA \times \gB \times \gS$ such that $(h,s,a,b,s') \notin \gU$, we have 
    \begin{align*}
        (1 - \frac{1}{H}) \hat{P}^k_h(s'|s,a,b) \leq \tilde{P}^k_h(s'|s,a,b) \leq (1 + \frac{1}{H}) \hat{P}^k_h(s'|s,a,b).
    \end{align*}
    \label{lemma: transition kernel estimates are accurate up to a multiplicative factor in frequent transitions}
\end{lemma}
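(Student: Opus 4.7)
[Proof plan for \Cref{lemma: transition kernel estimates are accurate up to a multiplicative factor in frequent transitions}]
The plan is to show that the additive Bernstein-type deviation provided by \Cref{lemma: difference between absorbing P and empirical P} becomes a multiplicative deviation of order $1/H$ once we restrict attention to tuples outside of $\gU^k$. The key leverage is that membership in the set $\gU^k$ is triggered precisely by a small visitation count $N^k_h(s,a,b,s')$ (the threshold $cH^2\log(SABHK/\delta)$ appearing in \Cref{algorithm: reward-free exploration}), so for any $(h,s,a,b,s')\notin\gU^k$ we automatically have $N^k_h(s,a,b,s') > cH^2\iota$, with $\iota := c\log(SABHK/\delta)$ as in \Cref{lemma: difference between absorbing P and empirical P}. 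Since $N^k_h(s,a,b,s') \leq N^k_h(s,a,b)$, this also gives $N^k_h(s,a,b) > cH^2\iota$.

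With these counts in hand, I would bound the two Bernstein terms separately in terms of $\hat{P}^k_h(s'|s,a,b)$. For the variance term, rewrite
\[
\sqrt{\frac{2\hat{P}^k_h(s'|s,a,b)\iota}{N^k_h(s,a,b)}} \;=\; \hat{P}^k_h(s'|s,a,b)\sqrt{\frac{2\iota}{N^k_h(s,a,b,s')}},
\]
using the identity $\hat{P}^k_h(s'|s,a,b)N^k_h(s,a,b) = N^k_h(s,a,b,s')$. The frequent-transition condition $N^k_h(s,a,b,s') > cH^2\iota$ makes the square-root factor at most $\sqrt{2/c}/H$, which is at most $1/(2H)$ for $c$ large enough. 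For the lower-order term, use $\hat{P}^k_h(s'|s,a,b) \geq cH^2\iota/N^k_h(s,a,b)$ to bound
\[
\frac{7\iota}{3N^k_h(s,a,b)} \;\leq\; \frac{7}{3cH^2}\,\hat{P}^k_h(s'|s,a,b),
\]
which is also at most $1/(2H)\cdot \hat{P}^k_h(s'|s,a,b)$ for $c$ sufficiently large. Adding the two contributions yields $|\hat{P}^k_h - \tilde{P}^k_h| \leq \frac{1}{H}\hat{P}^k_h$ on the event $E$, which rearranges to the claimed two-sided multiplicative bound.

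I do not expect any real obstacle here: everything reduces to algebraic manipulation of the Bernstein inequality already established in \Cref{lemma: difference between absorbing P and empirical P}, together with the defining threshold for $\gU^k$. The only subtle point is to remember that the lower bound $N^k_h(s,a,b,s') > cH^2\iota$ must be used to tame the variance term rather than the $N^k_h(s,a,b)$ bound (which alone would yield $1/H$ only after losing a factor of $\sqrt{\hat{P}^k_h}$). Absorbing the absolute constants into the universal constant $c$ hidden in the threshold finishes the argument.
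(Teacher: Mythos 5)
Your argument is correct and is essentially the paper's own approach: the paper simply defers to the MDP analogue (Lemma~E.3 of \citep{qiao2022sample}), and your algebra --- using $\hat{P}^k_h(s'|s,a,b)N^k_h(s,a,b)=N^k_h(s,a,b,s')$ together with the threshold $N^k_h(s,a,b,s')\gtrsim H^2\iota$ defining $\gU^k$ to convert the Bernstein bound of \Cref{lemma: difference between absorbing P and empirical P} into a $1/H$ multiplicative error --- is exactly the content of that cited lemma, correctly transported to the Markov game setting. No gaps; the only housekeeping is the harmless absorption of the two appearances of the constant $c$, which you already note.
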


\begin{proof}[Proof of \Cref{lemma: transition kernel estimates are accurate up to a multiplicative factor in frequent transitions}]
    \Cref{lemma: transition kernel estimates are accurate up to a multiplicative factor in frequent transitions} is essentially the same as \cite[Lemma~E.3]{qiao2022sample}. 
\end{proof}

\begin{lemma}
    Conditioned on the event $E$ in \Cref{lemma: difference between absorbing P and empirical P}: For all $(k,h,s,a,b,s') \in [K] \times [H] \times \gS \times \gA \times \gB \times \gS$ and any policy $\pi$, we have 
    \begin{align*}
        \frac{1}{4} V^{\pi, f([\pi]^m)}(1_{hsab}, \hat{P}^k) \leq V^{\pi, f([\pi]^m)}(1_{hsab}, \tilde{P}^k) \leq 3 V^{\pi, f([\pi]^m)}(1_{hsab}, \hat{P}^k),
    \end{align*}
    {where $V^{\pi,\mu}(r,P)$ denotes the expected total reward under policies $(\pi, \mu)$ and the Markov game specified by the reward function $r$ and transition kernels $P$.}
    \label{lemma: uniform policy evaluation}
\end{lemma}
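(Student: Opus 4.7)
The key observation is that, for the indicator reward $1_{hsab}$, the value $V^{\pi, f([\pi]^m)}(1_{hsab}, P)$ reduces to the visitation probability $d_h^{\pi, f([\pi]^m)}(s,a,b; P) := \Pr_{P, \pi, f([\pi]^m)}[(s_h, a_h, b_h) = (s,a,b)]$. So the lemma reduces to showing that, for every policy pair $(\pi, \mu)$ with $\mu = f([\pi]^m)$, the visitation probabilities under $\hat{P}^k$ and $\tilde{P}^k$ differ by at most a multiplicative factor of $3$ (and $4$).

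\textbf{Step 1: Trajectory decomposition.} I will write the visitation probability as a sum over length-$h$ trajectories $\tau = (s_1, a_1, b_1, \ldots, s_h, a_h, b_h)$ with $(s_h, a_h, b_h) = (s,a,b)$:
\begin{align*}
d_h^{\pi,\mu}(s,a,b; P) = \sum_{\tau} \prod_{i=1}^{h-1} P_i(s_{i+1} \mid s_i, a_i, b_i) \cdot \prod_{i=1}^{h} \pi_i(a_i \mid s_i) \mu_i(b_i \mid s_i).
\end{align*}
Crucially, the policy factors $\pi_i(a_i \mid s_i) \mu_i(b_i \mid s_i)$ do not depend on the transition kernel, so the ratio between $d_h^{\pi,\mu}(s,a,b; \tilde{P}^k)$ and $d_h^{\pi,\mu}(s,a,b; \hat{P}^k)$ is controlled entirely by the ratios of the transition products along each trajectory.

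\textbf{Step 2: Bound each trajectory's transition product.} For a fixed $\tau$, consider each edge $(i, s_i, a_i, b_i, s_{i+1})$. By \Cref{lemma: difference between absorbing P and empirical P}, two cases arise. If the edge lies in $\gU^k$, then both $\tilde{P}^k_i(s_{i+1} \mid s_i,a_i,b_i) = 0$ and $\hat{P}^k_i(s_{i+1} \mid s_i,a_i,b_i) = 0$, so the trajectory contributes $0$ to both sums. Otherwise, \Cref{lemma: transition kernel estimates are accurate up to a multiplicative factor in frequent transitions} gives
\begin{align*}
(1 - 1/H)\, \hat{P}^k_i(s_{i+1} \mid s_i, a_i, b_i) \leq \tilde{P}^k_i(s_{i+1} \mid s_i, a_i, b_i) \leq (1 + 1/H)\, \hat{P}^k_i(s_{i+1} \mid s_i, a_i, b_i).
\end{align*}
Multiplying across the $h-1 \leq H-1$ transitions in $\tau$ yields
\begin{align*}
(1 - 1/H)^{h-1} \prod_{i=1}^{h-1} \hat{P}^k_i(\cdots) \leq \prod_{i=1}^{h-1} \tilde{P}^k_i(\cdots) \leq (1 + 1/H)^{h-1} \prod_{i=1}^{h-1} \hat{P}^k_i(\cdots).
\end{align*}

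\textbf{Step 3: Collapse the powers and conclude.} Using $(1+1/H)^{h-1} \leq (1+1/H)^H \leq e \leq 3$ and $(1-1/H)^{h-1} \geq (1-1/H)^{H-1} \geq 1/e \geq 1/4$ (for $H \geq 1$), each trajectory's contribution scales by a factor in $[1/4, 3]$. Summing over $\tau$ (which preserves the two-sided multiplicative bound since all summands are nonnegative and the same constants apply to every $\tau$) gives
\begin{align*}
\tfrac{1}{4}\, d_h^{\pi,\mu}(s,a,b; \hat{P}^k) \leq d_h^{\pi,\mu}(s,a,b; \tilde{P}^k) \leq 3\, d_h^{\pi,\mu}(s,a,b; \hat{P}^k),
\end{align*}
which is the desired statement after substituting $\mu = f([\pi]^m)$ and translating back to $V^{\pi, f([\pi]^m)}(1_{hsab}, \cdot)$.

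\textbf{Main obstacle.} The proof is essentially a careful trajectory-by-trajectory accounting; the only mild subtlety is ensuring that (i) the policy distributions factor out cleanly and do not depend on whether we use $\hat{P}^k$ or $\tilde{P}^k$ (including the arbitrary-but-identical extension at $s^{\dagger}$, which is irrelevant since $s_h = s \in \gS$), and (ii) trajectories that would route through $s^{\dagger}$ under $\tilde{P}^k$ contribute zero to the visitation of $(h,s,a,b)$ and are exactly the trajectories whose corresponding edge under $\hat{P}^k$ is also zeroed out by the infrequent-transition truncation. Once these bookkeeping points are handled, Steps 2--3 are immediate.
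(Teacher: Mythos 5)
Your proposal is correct and is essentially the argument the paper relies on: the paper's own proof is a one-line deferral to Lemma~E.5 of \citet{qiao2022sample}, whose content is exactly your per-transition multiplicative accounting, i.e., each non-truncated edge satisfies $(1-1/H)\hat{P}^k \le \tilde{P}^k \le (1+1/H)\hat{P}^k$ by \Cref{lemma: transition kernel estimates are accurate up to a multiplicative factor in frequent transitions}, truncated edges contribute zero to both values, and $(1+1/H)^{H}\le e \le 3$, $(1-1/H)^{H-1}\ge e^{-1}\ge 1/4$ give the stated constants. Your write-up in fact supplies the details (the identification of $V^{\pi,\mu}(1_{hsab},\cdot)$ with a visitation probability, the factoring-out of the policy terms, and the handling of $s^{\dagger}$) that the paper leaves to the citation.
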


\begin{proof}[Proof of \Cref{lemma: uniform policy evaluation}]
    The proof essentially follows from the proof of \cite[Lemma~E.5]{qiao2022sample}. 
\end{proof}

\Cref{lemma: difference between absorbing P and empirical P} to \Cref{lemma: uniform policy evaluation} are similar in nature with corresponding lemmas in a single-agent MDP in \citep{qiao2022sample}. We now prove a novel lemma that's absent in the single-agent MDP setting yet crucial to our theorem. {Recall our notion that, $\bar{V}^{\pi}(r,P,\Theta) := \optimisticvalueestimate(\pi,r,P,\Theta)$ which is given in \Cref{algorithm: OMPE}.}
\begin{lemma}
    Fix any $k \in [K]$ and consider $\hat{P}^{k}, \Theta^{k}, \gU^k =\layerwiseexplore(\Pi^k, T_k)$ (\Cref{APE-OVE: Reward-free Exploration stage} of \Cref{algorithm: APE-OVE}). Define the event $E_k$: for all $(h,s,a,b) \in [H] \times \gS \times \gA \times \gB$ and all $\pi \in \Pi$, we have
    \begin{align*}
      0 \leq \bar{V}^{\pi}(1_{hsab}, \hat{P}^k, \Theta^k) -  V^{\pi, f([\pi]^m)}(1_{hsab}, \hat{P}^k) \leq \xi_{MLE}(T_k), 
    \end{align*}
    where {$\xi_{MLE}(T_k) := c H \sqrt{\frac{\alpha}{d ^* T_k}}$}. Assume that $T$ is sufficiently large such that $T_k \geq \frac{2\log(SHKA/\delta)}{{d^*}^2}, \forall k \in [K]$. Then, $\Pr(E_k) \geq 1 - \delta$. 
    \label{lemma: optimistic MLE for 1-hot reward function}
\end{lemma}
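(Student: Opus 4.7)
The plan is to establish both halves of the stated inequality by combining the MLE machinery (Lemma~\ref{lemma: version space for MLE}) with the multiplicative accuracy of $\hat P^k$ on frequent transitions (Lemmas~\ref{lemma: transition kernel estimates are accurate up to a multiplicative factor in frequent transitions}--\ref{lemma: uniform policy evaluation}) that was already established in this section.

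\textbf{Optimism (the $0\le$ inequality).} I would first show that, on a high-probability event, for every deterministic $\pi\in\Pi$ and every $(h,s,a)$ with $a=\pi_h(s)$, the true response distribution $P^*_{hsa}:=f([\pi]^m)_h(\cdot\mid s)$ lies in $\Theta^k_{hsa}$. By consistency (\Cref{defn: consistent opponents}) applied to the stationary sequence $[\pi]^m$, the adversary's distribution at $(h,s)$ depends only on the action $a$; thus all action samples pooled into the data set for $(h,s,a)$ during \layerwiseexplore are i.i.d.\ draws from the common distribution $P^*_{hsa}$. Lemma~\ref{lemma: version space for MLE} together with a union bound over the $HSA$ triples yields $P^*_{hsa}\in\Theta^k_{hsa}$ whenever that triple has been sampled, and the inclusion is trivial otherwise by realizability. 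A backward induction on $h$, identical in structure to the proof of \Cref{lemma: optimism} but carried out in the model $\hat P^k$, then gives $\bar V^{\pi}(1_{hsab},\hat P^k,\Theta^k)\ge V^{\pi,f([\pi]^m)}(1_{hsab},\hat P^k)$.

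\textbf{Tightness (the $\le\xi_{\text{MLE}}(T_k)$ inequality).} I would telescope the gap using the standard performance-difference decomposition in the common dynamics $\hat P^k$:
\begin{align*}
\bar V^{\pi} - V^{\pi,f([\pi]^m)} \;\le\; \sum_{h'=1}^{H}\E_{\pi,f([\pi]^m),\hat P^k}\!\left[\,2H\cdot\!\!\!\max_{\theta\in\Theta^k_{h',s_{h'},\pi_{h'}(s_{h'})}}\!\!d_{TV}\!\bigl(P_\theta,P^*_{h',s_{h'},\pi_{h'}(s_{h'})}\bigr)\right].
\end{align*}
Lemma~\ref{lemma: version space for MLE} bounds each max-TV term by $O(\sqrt{\alpha/N^k_{h'}(s_{h'},\pi_{h'}(s_{h'}))})$, so it suffices to argue that whenever $(h',s_{h'},\pi_{h'}(s_{h'}))$ carries nonzero visitation weight under $\hat P^k$, the count $N^k_{h'}(\cdot,\cdot)$ is at least $\Omega(d^{*}T_k)$.

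\textbf{Key obstacle: lower-bounding the visitation counts.} This is the real work. I would argue layer by layer within the epoch: when \layerwiseexplore processes layer $h$, the sampling policy $\pi^{khsab}=\arg\max_{\pi\in\Pi^k}\optimisticvalueestimate(\pi,1_{hsab},\hat P^k,\Theta^k)$ is, by the optimism just established, at least as good as any $\pi\in\Pi$ at reaching $(h,s,a,b)$ under $\hat P^k$. Combining this with \Cref{lemma: transition kernel estimates are accurate up to a multiplicative factor in frequent transitions,lemma: uniform policy evaluation}---which say $\hat P^k$ agrees multiplicatively with the absorbing $\tilde P^k$ (and hence with the true $P$) on non-$\gU^k$ transitions---transfers this visitation guarantee back to the real MG: $\pi^{khsab}$ reaches $(h,s,a)$ at step $h$ in the true game with probability $\gtrsim d^{*}_h(s,a)\ge d^{*}$ whenever that triple is reachable at all. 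The adversary's stationary, memory-bounded response makes the $T_k$ post-warm-up episodes i.i.d., so a Chernoff bound combined with the hypothesis $T_k\ge 2\log(SHKA/\delta)/(d^{*})^2$ yields $N^k_h(s,a)\ge d^{*}T_k/2$ with probability at least $1-\delta/(SHKA)$. Unreachable triples have zero visitation weight in $\hat P^k$ after truncation by $\gU^k$ and drop out of the sum. Plugging $N^k_h(s,a)\gtrsim d^{*}T_k$ into the TV bound and summing the $H$ layers gives the stated $\xi_{\text{MLE}}(T_k)=cH\sqrt{\alpha/(d^{*}T_k)}$, and a final union bound across $(h,s,a)$ and over the optimism and MLE events produces the $1-\delta$ probability.

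The main subtlety I expect is ensuring that the optimism-based exploration guarantee bootstraps cleanly across layers inside one epoch: because $\pi^{khsab}$ is only optimal under the inflated $\bar V$, its real visitation has to be argued via the multiplicative accuracy of $\hat P^k$ on frequent transitions, which itself has been built up only for layers $h'<h$ by the time layer $h$ is processed---so the induction ordering on $h$ must be handled carefully.
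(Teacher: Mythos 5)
Your skeleton matches the paper's: optimism follows from the true response distribution lying in every $\Theta^k_{hsa}$ (via \Cref{lemma: version space for MLE} and a union bound), and the upper bound comes from unrolling the value recursion under the common kernel $\hat P^k$, picking up a $\max_{\theta\in\Theta^k}d_{TV}$ term at each layer and discarding layers whose state--action pair was never counted. Two issues, one minor and one substantive.

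The minor one: your telescoping inserts a factor $2H\cdot\max_\theta d_{TV}$ per layer, which after summing over $H$ layers gives $O(H^2\sqrt{\alpha/(d^*T_k)})$, a factor of $H$ worse than the claimed $\xi_{MLE}(T_k)=cH\sqrt{\alpha/(d^*T_k)}$. For the one-hot reward $1_{hsab}$ all value functions lie in $[0,1]$, so the per-layer contribution is $2\max_\theta d_{TV}$, not $2H\max_\theta d_{TV}$; the paper's recursion uses exactly this.

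The substantive one is your route to the count lower bound $N^k_{h'}(s,\pi_{h'}(s))\gtrsim d^*T_k$, which you correctly identify as the crux but argue the wrong way. You propose to show that $\pi^{kh'sab}$ is a near-optimal explorer of $(h',s,a,b)$ via the optimism of $\optimisticvalueestimate$ and the multiplicative accuracy of $\hat P^k$ (\Cref{lemma: transition kernel estimates are accurate up to a multiplicative factor in frequent transitions,lemma: uniform policy evaluation}). But the near-optimal-coverage guarantee for the sampling policies is established in the paper only in \Cref{lemma: infrequent transition samples are indeed infrequent with high probability}, whose inequality chain explicitly invokes the present lemma; your argument is therefore circular as stated, and the ``bootstrapping across layers'' subtlety you flag at the end is precisely the unresolved hole. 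The paper avoids all of this: no exploration-optimality of $\pi^{kh'sab}$ is needed. Because $d^*$ is defined as an infimum over \emph{all} policies with positive visitation probability, the mere fact that $N^k_{h'}(\bar s,\pi_{h'}(\bar s))\geq 1$ — i.e., \emph{some} sampling policy visited that pair at least once during its $m-1+T_k$ consecutive episodes — already forces that policy's visitation probability to be positive, hence at least $d^*$, hence $\E[N^k_{h'}]\geq d^*T_k$, and Hoeffding with $T_k\geq 2\log(SHKA/\delta)/(d^*)^2$ gives $N^k_{h'}\geq d^*T_k/2$. The complementary case $N^k_{h'}=0$ contributes nothing because the truncated kernel routes all mass to $s^\dagger$, exactly as you observe. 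Replacing your coverage argument with this definitional one closes the gap.
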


\begin{proof}[Proof of \Cref{lemma: optimistic MLE for 1-hot reward function}] Let us fix any $(h,s,a,b)$ and $\pi$. Note that the value function for any policy under any dynamic w.r.t. the reward function $1_{hsab}$ is zero at any step $h' > h$. Also, notice that, prior to the exploration of layer $h$ in the reward-free exploration (\Cref{algorithm: reward-free exploration}), $\hat{P}^k_1, \ldots, \hat{P}^k_{h-1}$ are already constructed. \paragraph{Additional notations.}In $\optimisticvalueestimate(1_{hsab}, \hat{P}^k, \pi, \Theta)$ (\Cref{algorithm: OMPE}), we denote the intermediate value estimates $\bar{V}^\pi_l$ by $\bar{V}_l^{\pi}(\cdot; 1_{hsab}, \hat{P}^k, \Theta^k)$ to emphasize the dependence on the reward function and the transition dynamics being used.  {We denote $N^k_h(s,a)$ the count of pairs $(h,s,a)$ during the $h$-th layer exploration of \Cref{algorithm: reward-free exploration}.}  {We write $V^{\pi,\mu}_h(s; r, P)$ in place of $V^{\pi, \mu}_h(s)$ to emphasize the dependence on the reward function $r$ and transition dynamic $P$.}

We will evaluate the quantity $\Delta_l(\bar{s}) := V_l^{\pi, f([\pi]^m)}(\bar{s}; 1_{hsab}, \hat{P}^k) - \bar{V}_l^{\pi}(\bar{s}; 1_{hsab}, \hat{P}^k, \Theta^k)$ for any $l \in [h-1], \bar{s} \in \gS$. 

The first part $\bar{V}^{\pi}(1_{hsab}, \hat{P}^k, \Theta^k) -  V^{\pi, f([\pi]^m)}(1_{hsab}, \hat{P}^k) \geq 0$ follows from that with probability at least $1 - \delta$, $\theta^*_{hsa} \in \Theta^k_{hsa}, \forall (h,s,a)$. Thus, $\bar{V}^{\pi}(1_{hsab}, \hat{P}^k, \Theta^k)$ is an optimistic estimate of $V^{\pi, f([\pi]^m)}(1_{hsab}, \hat{P}^k)$. For the second part, we have

\begin{align*}
    \Delta_l(\bar{s}) &= \sup_{\theta \in \Theta^k_{l,\bar{s}, \pi_l(\bar{s})}}\sum_{s' \in \gS} P^k_l(s'|\bar{s}, \pi_l(\bar{s}), P_\theta) \bar{V}^\pi_{l+1}(s';1_{hsab}, \hat{P}^k, \Theta^k) \\ 
    &-\sum_{s' \in \gS}  P^k_l(s'|\bar{s}, \pi_l(\bar{s}), f([\pi]^m)_l(\cdot|\bar{s})) V^{\pi, f([\pi]^m)}_{l+1}(s';1_{hasb}, \hat{P}^k) \\ 
    &= \sum_{s' \in \gS}  P^k_l(s'|\bar{s}, \pi_l(\bar{s}), f([\pi]^m)_l(\cdot|\bar{s})) \Delta_{l+1}(s') \\ 
    &+ \sum_{s' \in \gS} \left( P^k_l(s'|\bar{s}, \pi_l(\bar{s}), P_\theta) - P^k_l(s'|\bar{s}, \pi_l(\bar{s}), f([\pi]^m)_l(\cdot|\bar{s})) \right) \bar{V}^\pi_{l+1}(s';1_{hsab}, \hat{P}^k, \Theta^k) \\ 
    &\leq \max \{ \Delta_{l+1}(s'): s' \in \gS \text{ s.t. } \exists b' \in \gB, (l,\bar{s}, \pi_l(\bar{s}), b', s') \notin \gU^k\} \\
    &+ 1\{N^k_l(\bar{s}, \pi_l(\bar{s})) \geq 1\} \cdot 2 \max_{\theta \in \Theta^k_{l \bar{s} \pi_l(\bar{s})}} d_{TV}(f([\pi]^m)_l(\cdot|\bar{s}), P_{\theta}),
\end{align*}
where we use the convention that $\max \emptyset = 0$, and the last inequality follows from that $P^k_l(s'|\bar{s}, \pi_l(\bar{s}), b') = 0$ if $(l,\bar{s}, \pi_l(\bar{s}), b', s') \notin \gU^k$, that $\bar{V}^\pi_{l+1}(s';1_{hsab}, \hat{P}^k, \Theta^k) \in [0,1]$,  {and that, for any two distributions $p, q \in [0,1]^{|\gX|}$ over a finite support $\gX$, we have $d_{TV}(p,q) = \frac{1}{2} \| p -q\|_1$.} 

If $N^k_l(\bar{s}, {\pi}_l(\bar{s})) = 0$, then $\Delta_l(\bar{s}) = 0$. Consider the case $N^k_l(\bar{s}, {\pi}_l(\bar{s})) \geq 1$. That means that \emph{the state-action pair $(\bar{s}, {\pi}_l(\bar{s}))$ must be visited in step $l$ at least once by at least one policy $\pi^{kl \tilde{s} \tilde{a} \tilde{b}}$ for some $( \tilde{s},\tilde{a}, \tilde{b}) \in \gS \times \gA \times \gB$.} Note that this policy $\pi^{kl \tilde{s} \tilde{a} \tilde{b}}$ is run for $m-1 + T_k$ consecutive episodes. Thus, by the definition of the minimum positive visitation probability $d^*$, we must have 
\begin{align*}
    \sE \left[ N^k_l(\bar{s}, {\pi}_l(\bar{s})) \right] \geq d^* T_k,
\end{align*}
where the expectation is w.r.t. the transition kernel $P$ of the original Markov game $M$ and policy $\pi^{kl \tilde{s} \tilde{a} \tilde{b}}$. By Hoelfding's inequality and the union bound: With probability at least $1 - \delta$, for all $l, \bar{s}, k, \pi$, we have
\begin{align*}
    N^k_l(\bar{s}, {\pi}_l(\bar{s})) \geq \sE \left[ N^k_l(\bar{s}, {\pi}_l(\bar{s})) \right] - \sqrt{T_k \log(SHKA/\delta)}. 
\end{align*}
In particular, for $(l,\bar{s}, k,\pi)$ such that $ \sE \left[ N^k_l(\bar{s}, {\pi}_l(\bar{s})) \right] \geq d^* T_k$ and for $T_k \geq \frac{2\log(SHKA/\delta)}{{d^*}^2}$, we have $N^k_l(\bar{s}, {\pi}_l(\bar{s})) \geq \frac{1}{2} d^* T_k$ with probability at least $1 - \delta$. Combined with \Cref{lemma: version space for MLE}, with probability at least $1 - \delta$, we have 
\begin{align}
    \max_{\theta \in \Theta^k_{l \bar{s} \pi_l(\bar{s})}} d_{TV}(f([\pi]^m)_l(\cdot|\bar{s}), P_{\theta}) \leq c \sqrt{\frac{\alpha}{d^* T_k}}. 
    \label{eq: bound TV for observed hsa}
\end{align}

Thus, under the same event that the above inequality holds, we have 
\begin{align*}
    \Delta_1(s_1) \leq c H \sqrt{\frac{\alpha}{d^* T_k}}.
\end{align*}

\end{proof}

Next, we will show that the transition samples collected in $\gU^k$ are indeed infrequent transitions by any policy. Let $\tau = (s_1, a_1, b_1, \ldots, s_H, a_H, b_H)$ be a random trajectory generated by the learner's policy $\pi$ 
 and the opponent's policies $f([\pi]^m)$ for some policy $\pi$. 

\begin{defn}[Bad events]
    Under the original transition kernel $P$, we define $\gF$ to be the event that there exists $h \in [H]$ such that $(h,s_h,a_h,b_h, s_{h+1}) \in \gU^k$ and we define $\gF_h$ to be the event such that $h$ is the smallest step that $(h,s_h,a_h,b_h, s_{h+1}) \in \gU^k$. Under the absorbing transition kernel $\tilde{P}^k$, we define $\gF$ to be the event that there exists $h \in [H]$ such that $s_{h+1} = s^{\dagger}$ and we define $\gF_h$ to be the event such that $h$ is the smallest step that $s_{h+1} = s^{\dagger}$.
    \label{defn: bad events}
\end{defn} 

 \begin{lemma}
     Conditioned on the event $E$ in \Cref{lemma: difference between absorbing P and empirical P} and the event $E_k$ in \Cref{lemma: optimistic MLE for 1-hot reward function}, with probability at least $1 - \delta$, for any $k \in [K]$, we have 
     \begin{align*}
         \sup_{\pi \in \Pi^k}\Pr[\gF| P, \pi] \lesssim \frac{H^3 \log(HSABK/\delta)}{T_k} + H \xi_{MLE}(T_k).
     \end{align*}
     where $\xi_{MLE}(\cdot)$ is defined in \Cref{lemma: optimistic MLE for 1-hot reward function} and $\gF$ is defined in \Cref{defn: bad events}. 
     \label{lemma: infrequent transition samples are indeed infrequent with high probability}
 \end{lemma}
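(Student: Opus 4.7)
The plan is to use the absorbing Markov game $\tilde P^k$ to rewrite the bad-event probability:
\begin{align*}
\Pr[\gF|P,\pi] \;=\; \Pr_{\tilde P^k,\pi,f([\pi]^m)}[\,s_{h+1}=s^\dagger \text{ for some } h \leq H\,] \;=\; \sum_{h=1}^H V^{\pi,f([\pi]^m)}(q_h,\tilde P^k),
\end{align*}
where $q_h(s,a,b) := \sum_{s':(h,s,a,b,s')\in\gU^k} P_h(s'|s,a,b) = \tilde P^k_h(s^\dagger|s,a,b)$ is the one-step absorption probability at step $h$. This converts the task into bounding, for each layer, a bilinear sum of visit probabilities and absorption weights.

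Next, I would bound each one-step absorption value via optimism plus exploration. Writing $V^{\pi,f([\pi]^m)}(q_h,\tilde P^k) = \sum_{(s,a,b)} d^{\pi,f([\pi]^m),\tilde P^k}_h(s,a,b)\,q_h(s,a,b)$ and using the identity $d^{\pi,f([\pi]^m),\tilde P^k}_h(s,a,b) = V^{\pi,f([\pi]^m)}(1_{hsab},\tilde P^k)$, the optimism chain from \Cref{lemma: optimistic MLE for 1-hot reward function} (with $\pi^{khsab}$ as the maximizer over $\Pi^k$) together with \Cref{lemma: uniform policy evaluation} (used both to bridge $\tilde P^k$ and $\hat P^k$ for $\pi$, and $\hat P^k$ back to $\tilde P^k$ for $\pi^{khsab}$) yields
\begin{align*}
d^{\pi,f([\pi]^m),\tilde P^k}_h(s,a,b) \;\lesssim\; d^{\pi^{khsab},f([\pi^{khsab}]^m),P}_h(s,a,b) + \xi_{MLE}(T_k).
\end{align*}
Crucially, this is a non-recursive comparison: the $\tilde P^k$-visitation of $\pi$ is bounded directly by the true $P$-visitation of the exploration policy plus an MLE error, with no correction term involving $\Pr[\gF_{<h}]$, so the subsequent summation over $h$ does not blow up exponentially.

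I then close the argument via a reverse-Bernstein step on the exploration counts. For each $(h,s,a,b,s')\in\gU^k$, the count $N^k_h(s,a,b,s')$ is lower bounded by a $\mathrm{Bin}(T_k,\,d^{\pi^{khsab},f([\pi^{khsab}]^m),P}_h(s,a,b)\,P_h(s'|s,a,b))$ random variable coming from the inner-loop phase that targets $(s,a,b)$. A reverse-direction Bernstein inequality, union-bounded over $(k,h,s,a,b,s')$, shows that the defining threshold $N^k_h(s,a,b,s')\le cH^2\log(SABHK/\delta)$ forces the corresponding mean to be $\lesssim H^2\log(SABHK/\delta)/T_k$. Multiplying by the $\gU^k$ indicator and summing over $s'$ to recover $q_h$, then over $(s,a,b)$ and $h$, produces a bound of the claimed form $H^3\log(SABHK/\delta)/T_k + H\,\xi_{MLE}(T_k)$, with the combinatorial factors from the summations tracked via the union bound.

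The main obstacle will be the careful reverse-direction concentration on the exploration counts: translating a small observed count into a small expectation requires a Bernstein (rather than naive Hoeffding) inequality leveraging the variance, and this must be done uniformly over $(k,h,s,a,b,s')$ so that the failure-probability budget is preserved. The three-way comparison between $P$, $\tilde P^k$, and $\hat P^k$ must also be tracked precisely: I use $V^\pi(\cdot,\tilde P^k)\le 3 V^\pi(\cdot,\hat P^k)$ from \Cref{lemma: uniform policy evaluation} on the optimistic side, and the dominance $V^{\pi^{khsab}}(\cdot,\hat P^k)\le 4 V^{\pi^{khsab}}(\cdot,\tilde P^k) \le 4 V^{\pi^{khsab}}(\cdot,P)$ on the lower-bound side, yielding the final constant factor in the optimism chain. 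Extending from $\pi\in\Pi^k$ to arbitrary $\pi\in\Pi$ is a minor additional subtlety that should follow from the monotonicity of the policy version spaces $\Pi\supseteq\Pi^1\supseteq\cdots\supseteq\Pi^k$ and the fact that any previously eliminated policy was dominated in optimistic value at its elimination time.
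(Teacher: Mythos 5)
Your proposal is correct and follows essentially the same route as the paper's proof: decompose $\gF$ into the disjoint first-absorption events $\gF_h$, use the optimistic-MLE coverage chain (the paper's Equation~\plaineqref{eq: uniform coverage sampling}) to dominate any policy's $\tilde P^k$-visitation of $(h,s,a,b)$ by that of the exploration policy $\pi^{khsab}$ up to a constant and a $\xi_{MLE}(T_k)$ term, and then invoke the $\gU^k$ count threshold to bound the exploration policies' absorption mass. The only differences are presentational — you work with per-tuple visitation measures where the paper aggregates into the uniform mixture $\pi^{kh}$, and you spell out the reverse-Bernstein step that the paper compresses into ``by the construction of $\gU^k$'' — and you flag the same $\Pi^k$-versus-$\Pi$ subtlety that the paper's own final display leaves implicit.
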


 \begin{proof}[Proof of \Cref{lemma: infrequent transition samples are indeed infrequent with high probability}]
     Under the event $E$ in \Cref{lemma: difference between absorbing P and empirical P} and the event $E_k$ in \Cref{lemma: optimistic MLE for 1-hot reward function}, for any $(h,s,a,b)$, we have 
\begin{align}
    V^{\pi^{khsab}, f([\pi^{khsab}]^m)}(1_{hsab}, \tilde{P}^k) &\geq \frac{1}{4} V^{\pi^{khsab}, f([\pi^{khsab}]^m)}(1_{hsab}, \hat{P}^k) \nonumber\\ 
    &\geq \frac{1}{4} \bar{V}^{\pi^{khsab}}(1_{hsab}, \hat{P}^k, \Theta^{k}) - \xi_{MLE}(T_k) \nonumber\\ 
    &= \frac{1}{4} \sup_{\pi \in \Pi^k}\bar{V}^{\pi}(1_{hsab}, \hat{P}^k, \Theta^{k}) - \xi_{MLE}(T_k) \nonumber\\ 
    &\geq \frac{1}{4} \sup_{\pi \in \Pi^{k}} V^{\pi, f([\pi]^m)}(1_{hsab}, \hat{P}^k) - \xi_{MLE}(T_k) \nonumber\\ 
    &\geq \frac{1}{12}  \sup_{\pi \in \Pi^{k}} V^{\pi, f([\pi]^m)}(1_{hsab}, \tilde{P}^k) - \xi_{MLE}(T_k)
    \label{eq: uniform coverage sampling}
\end{align}
where the first inequality and the last inequality follow from \Cref{lemma: uniform policy evaluation}, the second inequality follows from \Cref{lemma: optimistic MLE for 1-hot reward function}, the second and third inequality follow from \Cref{lemma: optimistic MLE for 1-hot reward function}, and the equation follows from the definition of $\pi^{khsab}$ in \Cref{algorithm: reward-free exploration}. Let $\pi^{kh}$ be a policy that chooses each $\pi^{khsab}$ with probability $\frac{1}{SAB}$ for any $(s,a,b) \in \gS \times \gA \times \gB$. Thus, we have 
\begin{align}
    \Pr[\gF_h | P, \pi^{kh}] &=  \Pr[\gF_h | \tilde{P}^k, \pi^{kh}] \nonumber \\
    &= \frac{1}{SAB} \sum_{\bar{s}, \bar{a}, \bar{b}} \sum_{s,a,b} V^{\pi^{kh\bar{s}\bar{a}\bar{b}}, f([\pi^{kh\bar{s}\bar{a}\bar{b}}]^m)}(1_{hsab}, \tilde{P}^k) \tilde{P}_h(s^{\dagger}|s,a,b) \nonumber \\ 
    &\geq \frac{1}{SAB} \sum_{s,a,b} V^{\pi^{khsab}, f([\pi^{khsab}]^m)}(1_{hsab}, \tilde{P}^k) \tilde{P}_h(s^{\dagger}|s,a,b) \nonumber \\
    &\geq \frac{1}{12 SAB} \sum_{s,a,b} \sup_{\pi \in \Pi^{k}} V^{\pi, f([\pi]^m)}(1_{hsab}, \tilde{P}^k) - \frac{1}{SAB}\xi_{MLE}(T_k) \nonumber \\
    &\geq \frac{1}{12 SAB}  \sup_{\pi \in \Pi^{k}} \sum_{s,a,b} V^{\pi, f([\pi]^m)}(1_{hsab}, \tilde{P}^k) - \frac{1}{SAB}\xi_{MLE}(T_k) \nonumber \\
    &= \frac{1}{12 SAB} \sup_{\pi \in \Pi^{k}} \Pr[\gF_h | \tilde{P}^k, \pi] - \frac{1}{SAB}\xi_{MLE}(T_k) \nonumber \\
    &= \frac{1}{12 SAB} \sup_{\pi \in \Pi^{k}} \Pr[\gF_h | P, \pi] - \frac{1}{SAB}\xi_{MLE}(T_k).
    \label{eq: guarantees for sampling policies}
\end{align}
By the construction of $\gU^k$, we have that 
\begin{align*}
    \Pr[\gF_h | P, \pi^{kh}] \leq c \frac{H^2 \log(HSABK/\delta)}{SAB T_k}. 
\end{align*}
Thus, combined with \Cref{eq: guarantees for sampling policies}, we have 
\begin{align*}
    \sup_{\pi \in \Pi^{k}} \Pr[\gF_h | P, \pi] \lesssim \frac{H^2 \log(HSABK/\delta)}{T_k} + \xi_{MLE}(T_k).
\end{align*}
Finally, note that
\begin{align*}
    \Pr[\gF|P,\pi] = \sum_{h \in [H]} \Pr[\gF_h|P, \pi],
\end{align*}
which concludes our proof. 

 \end{proof}

\subsubsection{Uniform policy evaluation}
In this part, we will show that the empirical transition kernel $P^k$ constructed from the exploratory data by our sampling policies is a good surrogate for the true transition kernel $P$ in evaluating the value of uniformly all policies.

\begin{lemma}
    Conditioned on the event $E$ in \Cref{lemma: difference between absorbing P and empirical P} and the event $E_k$ in \Cref{lemma: optimistic MLE for 1-hot reward function} and the high-probability event in \Cref{lemma: infrequent transition samples are indeed infrequent with high probability}, with probability at least $1 - \delta$, for any $k \in [K]$, any reward function $r$, and any policy $\pi \in \Pi^k$, we have
    \begin{align*}
        0 \leq V^{\pi, f([\pi]^m)}(r, P) - V^{\pi, f([\pi]^m)}(r_{\gU^k}, \tilde{P}^k) \lesssim \frac{H^4 \log(HSABK/\delta)}{T_k} + H^2 \xi_{MLE}(T_k),
    \end{align*}
    where for any trajectory $\tau = (s_1, a_1, b_1, \ldots, s_H, a_H, b_H)$, $r_{\gU^k}(\tau) := \sum_{h=1}^H 1\{(h,s_h,a_h,b_h,s_{h+1}) \notin \gU^k )\}r_h(s_h,a_h,b_h)$. 
    \label{lemma: uniform policy eval from absorbing P to true P}
\end{lemma}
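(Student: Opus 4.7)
My plan is to couple the trajectories induced by $(\pi, f([\pi]^m))$ under the true dynamics $P$ with those under the absorbing dynamics $\tilde{P}^k$ using shared randomness, and then charge the entire value gap to the bad event $\gF$ from \Cref{defn: bad events}. Concretely, at each step $h$ I would sample $s_{h+1} \sim P_h(\cdot \mid s_h, a_h, b_h)$ to drive the $P$-trajectory and let the $\tilde{P}^k$-trajectory copy this transition \emph{unless} $(h, s_h, a_h, b_h, s_{h+1}) \in \gU^k$, in which case it jumps to the dummy state $s^{\dagger}$ and remains there. Since $\tilde{P}^k_h(s^{\dagger} \mid s,a,b) = \sum_{s':(h,s,a,b,s') \in \gU^k} P_h(s' \mid s,a,b)$, this coupling has the correct $\tilde{P}^k$-marginals, and the bad events under the two dynamics coincide realization by realization.

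Next I would analyze the two returns pointwise under this coupling. On $\gF^{c}$ every transition is frequent, so the two trajectories are literally identical and both returns equal $\sum_{h=1}^H r_h(s_h, a_h, b_h)$; hence the gap is $0$. On $\gF$, let $H_0$ denote the first step at which a bad transition occurs: the two trajectories agree through step $H_0$ (same states, hence the same policy-driven actions), so the contribution of the first $H_0$ rewards matches exactly; beyond step $H_0$, the $\tilde{P}^k$-trajectory sits at $s^{\dagger}$ and accumulates zero reward by the absorbing-MG convention, whereas under $(r, P)$ the remaining tail $\sum_{h > H_0} r_h(s_h, a_h, b_h) \ge 0$ still contributes. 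This immediately yields non-negativity pointwise, and a pointwise upper bound of $H$ because $r_h \in [0,1]$ and there are at most $H$ steps.

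Taking expectations over the coupling and using $r_h \in [0,1]$ yields
\begin{align*}
    0 \le V^{\pi, f([\pi]^m)}(r, P) - V^{\pi, f([\pi]^m)}(r_{\gU^k}, \tilde{P}^k) \le H \cdot \Pr[\gF \mid P, \pi],
\end{align*}
and substituting $\sup_{\pi \in \Pi} \Pr[\gF \mid P, \pi] \lesssim H^3 \log(HSABK/\delta)/T_k + H\,\xi_{MLE}(T_k)$ from \Cref{lemma: infrequent transition samples are indeed infrequent with high probability} delivers the claimed bound. The only mildly delicate step I foresee is verifying that the indicator $1\{(h, s_h, a_h, b_h, s_{h+1}) \notin \gU^k\}$ inside $r_{\gU^k}$ is effectively $1$ along every $\tilde{P}^k$-trajectory (since each $\tilde{P}^k$-transition either stays in $\gS$ via a frequent tuple or jumps to $s^{\dagger} \notin \gS$), so in the $(r_{\gU^k}, \tilde{P}^k)$ model all the reward suppression is actually performed by the absorption rather than by the truncation of $r$; once this observation is in place I do not anticipate any further obstacles.
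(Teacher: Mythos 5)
Your proposal is correct and follows essentially the same route as the paper: both arguments reduce the value gap to $H \cdot \Pr[\gF \mid P, \pi]$ by observing that trajectories avoiding $\gU^k$ have identical law under $P$ and $\tilde{P}^k$ and identical (truncated) reward, and then invoke \Cref{lemma: infrequent transition samples are indeed infrequent with high probability}. Your explicit coupling is just a cleaner packaging of the paper's trajectory-sum decomposition over $\gF$ versus $\gF^c$, with the added benefit that it makes the non-negativity direction hold pointwise rather than via the paper's terser comparison of the $\gF$-restricted sums.
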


\begin{proof}[Proof of \Cref{lemma: uniform policy eval from absorbing P to true P}]
    We have 
    \begin{align*}
         V^{\pi, f([\pi]^m)}(r, P) &= \sum_{\tau} r(\tau) \Pr(\tau | P, \pi) \\ 
         &= \sum_{\tau \notin \gF} r(\tau) \Pr(\tau | P, \pi) + \sum_{\tau \in \gF} r(\tau) \Pr(\tau | P, \pi) \\ 
         &= \sum_{\tau \notin \gF} r(\tau) \Pr(\tau | \tilde{P}^k, \pi) + \sum_{\tau \in \gF} r(\tau) \Pr(\tau | P, \pi) \\ 
         &= \sum_{\tau \notin \gF} r_{\gU^k}(\tau) \Pr(\tau | \tilde{P}^k, \pi) + \sum_{\tau \in \gF} r(\tau) \Pr(\tau | P, \pi) \\ 
         &\leq V^{\pi, f([\pi]^m)}(r_{\gU^k}, \tilde{P}^k)  + \sum_{\tau \in \gF} r(\tau) \Pr(\tau | P, \pi) \\
         &\lesssim  V^{\pi, f([\pi]^m)}(r_{\gU^k}, \tilde{P}^k) + \frac{H^4 \log(HSABK/\delta)}{T_k} + H^2 \xi_{MLE}(T_k),
    \end{align*}
    where the last inequality is due to \Cref{lemma: infrequent transition samples are indeed infrequent with high probability}.
    Similarly, we have 
    \begin{align*}
        V^{\pi, f([\pi]^m)}(r, P) &= \sum_{\tau \notin \gF} r_{\gU^k}(\tau) \Pr(\tau | \tilde{P}^k, \pi) + \sum_{\tau \in \gF} r(\tau) \Pr(\tau | P, \pi) \\
        &\geq \sum_{\tau \notin \gF} r_{\gU^k}(\tau) \Pr(\tau | \tilde{P}^k, \pi) + \sum_{\tau \in \gF} r_{\gU^k}(\tau) \Pr(\tau | P, \pi) \\ 
        &\geq \sum_{\tau \notin \gF} r_{\gU^k}(\tau) \Pr(\tau | \tilde{P}^k, \pi) + \sum_{\tau \in \gF} r_{\gU^k}(\tau) \Pr(\tau | \tilde{P}^k, \pi) \\ 
        &= V^{\pi, f([\pi]^m)}(r_{\gU^k}, \tilde{P}^k). 
    \end{align*}
\end{proof}

\begin{lemma}
    With probability at least $1 - \delta$, for any $k \in [K]$, any reward function $r$, and any policy $\pi \in \Pi$, we have
    \begin{align*}
        0 \leq \bar{V}^{\pi}(r_{\gU^k}, \hat{P}^k, \Theta^k) - V^{\pi, f([\pi]^m)}(r_{\gU^k}, \hat{P}^k) \lesssim H^2 \sqrt{\frac{\alpha}{d^* T_k}}. 
    \end{align*}
    \label{lemma: optimistic MLE for truncated reward function}
\end{lemma}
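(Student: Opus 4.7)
The plan is to follow the template of the proof of \Cref{lemma: optimistic MLE for 1-hot reward function}, with two modifications: (i) the indicator reward $1_{hsab}$ is replaced by the general truncated reward $r_{\gU^k}$, and (ii) consequently the value function is now bounded by $H$ instead of $1$. Modification (ii) is precisely what produces the extra factor of $H$ in the stated bound.

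For the lower (optimism) direction, consistency of the opponent lets us write $f([\pi]^m)_h(\cdot|s) = P_{\theta^{*}_{h s \pi_h(s)}}$ for some ground-truth $\theta^{*}_{hsa} \in \Theta$. By \Cref{lemma: confidence MLE set contains the true model} together with a union bound over $(k,h,s,a)$, with probability at least $1-\delta$ we have $\theta^{*}_{hsa} \in \Theta^{k}_{hsa}$ whenever $N^{k}_h(s,a)\geq 1$, uniformly over $\pi$. A backward induction on $h$ then shows $\bar{V}^{\pi}_h \geq V^{\pi, f([\pi]^m)}_h$: the $\max$ in \Cref{algorithm: OMPE} over $\Theta^{k}_{h \bar{s} \pi_h(\bar{s})}$ dominates the value obtained using $\theta^{*}$, and the monotone Bellman backup propagates the induction hypothesis from $h+1$ down to $h$.

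For the upper bound, define $\Delta_h(\bar{s}) := \bar{V}^{\pi}_h(\bar{s}) - V^{\pi, f([\pi]^m)}_h(\bar{s})$ (both under $r_{\gU^k}$ and $\hat{P}^k$) and let $\theta^{\dagger} \in \Theta^{k}_{h \bar{s} \pi_h(\bar{s})}$ attain the optimistic maximum. Adding and subtracting the intermediate quantity $\bar{Q}^{\pi}_h(\bar{s}, \pi_h(\bar{s}), P_{\theta^{*}})$ decomposes $\Delta_h(\bar{s})$ into a model-error term, bounded by $2(H-h+1)\, d_{\mathrm{TV}}(P_{\theta^{\dagger}}, P_{\theta^{*}})$ since the integrand $r_h(\cdot)\mathbf{1}\{\cdot\notin\gU^k\} + \bar{V}^{\pi}_{h+1}$ lies in $[0, H-h+1]$, plus a propagation term $\sE_{b \sim P_{\theta^{*}},\, s' \sim \hat{P}^k_h}[\Delta_{h+1}(s')]$. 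If $N^{k}_h(\bar{s}, \pi_h(\bar{s})) = 0$, then by construction of \Cref{algorithm: estimate transition} the kernel $\hat{P}^k_h(\cdot | \bar{s}, \pi_h(\bar{s}), b)$ puts all mass on $s^{\dagger}$, the truncated reward vanishes, and $\bar{V}^{\pi}_{h+1}(s^{\dagger})=0$, so $\Delta_h(\bar{s})=0$ trivially. Otherwise, the pair $(\bar{s}, \pi_h(\bar{s}))$ was touched by some exploratory policy $\pi^{k h \tilde{s} \tilde{a} \tilde{b}}$ that was executed for $m-1+T_k$ consecutive episodes; the same argument as in the proof of \Cref{lemma: optimistic MLE for 1-hot reward function} (combining the definition of $d^{*}$ with a Hoeffding bound) gives $N^{k}_h(\bar{s}, \pi_h(\bar{s})) \gtrsim d^{*} T_k$ with high probability, and \Cref{lemma: version space for MLE} then yields $\max_{\theta \in \Theta^{k}_{h \bar{s} \pi_h(\bar{s})}} d_{\mathrm{TV}}(P_\theta, P_{\theta^{*}}) \lesssim \sqrt{\alpha/(d^{*} T_k)}$. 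Unrolling the recursion from $h=1$ to $H$ yields $\Delta_1(s_1) \lesssim \sum_{h=1}^{H}(H-h+1)\sqrt{\alpha/(d^{*} T_k)} = O(H^2 \sqrt{\alpha/(d^{*} T_k)})$, matching the claim.

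The principal subtlety is that the TV estimate must hold simultaneously for every $\pi \in \Pi$, even though different policies select different actions $\pi_h(\bar{s})$ and therefore interrogate different coordinates $\theta^{*}_{h \bar{s} \pi_h(\bar{s})}$ of the opponent model. The exploratory policies $\{\pi^{k h s a b}\}_{(s,a,b)}$ in \Cref{algorithm: reward-free exploration} are designed to trigger every frequently reachable triple $(h,s,a)$, and the absorbing-state surgery together with the truncated reward $r_{\gU^k}$ ensure that under-covered triples contribute zero to $\Delta_h$ uniformly in $\pi$; this is what lets the final bound depend only on $\sqrt{\alpha/(d^{*} T_k)}$ rather than on the cardinality of $\Pi$.
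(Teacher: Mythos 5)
Your proposal is correct and follows essentially the same route as the paper's proof: optimism via $\theta^*\in\Theta^k$ for the lower bound, and for the upper bound a backward recursion on $\Delta_h$ that vanishes at $s^\dagger$ and at unvisited state--action pairs (thanks to the absorbing-state construction and the truncated reward), with the per-step error controlled by the MLE total-variation bound $\sqrt{\alpha/(d^*T_k)}$ after arguing $N^k_h(\bar s,\pi_h(\bar s))\gtrsim d^*T_k$ via the definition of $d^*$ and Hoeffding. The only cosmetic difference is that you bound the combined integrand $r+\bar V_{h+1}\in[0,H-h+1]$ in one TV term where the paper splits the reward and value contributions into $(H+1)$ separate TV terms; both yield the same $H^2$ factor after unrolling.
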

\begin{proof}[Proof of \Cref{lemma: optimistic MLE for truncated reward function}]
The first inequality is trivial, following the first part of \Cref{lemma: optimistic MLE for 1-hot reward function}. We will focus on the second inequality. Fix any deterministic policy $\pi$. For simplicity, we write $\bar{V}_h^\pi(s) := \bar{V}_h^{\pi}(r_{\gU^k}, \hat{P}^k, \Theta^k)(s)$, and $V^{\pi}_h(s) := V_h^{\pi, f([\pi]^m)}(r_{\gU^k}, \hat{P}^k) (s)$. Let $\Delta_h^k(s) := \bar{V}_h^{\pi}(s) - V^{\pi}_h(s)$.

First of all, by construction of $\hat{P}^k$ and $r_{\gU^k}$, we have $\Delta_h^k(s) = 0$ if $s = s^{\dagger}$ or if $N^k_h(s, \pi_h(s)) = 0$.  {This explains the very reason we design the truncated reward function $r_{\gU^k}$.} 

We now consider $s \in \gS$ such that $N^k_h(s, \pi_h(s)) > 0$.  {This condition, along with the consistent behavior and the minimum visitation probability, allows us to estimate the response $ f([\pi]^m)_h(\cdot|s)$ sufficiently. In particular, $ f([\pi]^m)_h(\cdot|s)$ depends only on the data obtained by visiting $(h,s,\pi_h(s))$ which is indeed visited at least $d^*T_k$ times, thus can be estimated up to an order of $1 / \sqrt{d^* T_k}$ error.} We have 
\begin{align*}
    \Delta^k_h(s) &= r_{\gU^k, h}(s,\pi_h(s), P_\theta) + \hat{P}^k_h \bar{V}_{h+1}^\pi (s, \pi_h(s), P_\theta) \\
    &- r_{\gU^k,h}(s,\pi_h(s), f([\pi]^m)_h(\cdot|s)) - \hat{P}^k_h V_{h+1}^\pi (s, \pi_h(s), f([\pi]^m)_h(\cdot|s)) \\ 
    &= r_{\gU^k, h}(s,\pi_h(s), P_\theta) - r_{\gU^k,h}(s,\pi_h(s), f([\pi]^m)_h(\cdot|s)) \\
    &+\hat{P}^k_h (\bar{V}^\pi_{h+1} - V^{\pi}_{h+1}) (s, \pi_h(s), f([\pi]^m)_h(\cdot|s)) \\
    &+ \hat{P}^k_h \bar{V}_{h+1}^\pi (s, \pi_h(s), P_\theta) - \hat{P}^k_h \bar{V}_{h+1}^\pi (s, \pi_h(s), f([\pi]^m)_h(\cdot|s)) \\ 
    &\leq \sup_{\theta \in \Theta^k_{h s \pi_h(s)}} d_{TV}(P_{\theta}, f([\pi]^m)_h(\cdot|s)) \\ 
    &+ \max\{\Delta_{h+1}^k(s'): s' \in \gS \text{ s.t. } \exists b \in \gB, (h,s,\pi_h(s), b, s') \notin \gU^k\} \\ 
    &+ H  \sup_{\theta \in \Theta^k_{h s \pi_h(s)}} d_{TV}(P_{\theta}, f([\pi]^m)_h(\cdot|s)) \\ 
    &= (H + 1) \sup_{\theta \in \Theta^k_{h s \pi_h(s)}} d_{TV}(P_{\theta}, f([\pi]^m)_h(\cdot|s)) \\
    &+ \max\{\Delta_{h+1}^k(s'): s' \in \gS \text{ s.t. } \exists b \in \gB, (h,s,\pi_h(s), b, s') \notin \gU^k\}
\end{align*}

Note that, similar to \Cref{eq: bound TV for observed hsa}, as $N^k_h(s,\pi_h(s)) > 0$, with probability at least $1 - \delta$, we have 
\begin{align*}
    \sup_{\theta \in \Theta^k_{h s \pi_h(s)}} d_{TV}(P_{\theta}, f([\pi]^m)_h(\cdot|s)) \lesssim \sqrt{\frac{\alpha}{d^* T_k}}. 
\end{align*}
Thus, we have 
\begin{align*}
    \Delta_1^k(s_1) \lesssim H^2 \sqrt{\frac{\alpha}{d^* T_k}}. 
\end{align*}
\end{proof}

\begin{lemma}
    Conditioned on the event $E$ in \Cref{lemma: difference between absorbing P and empirical P} and the event $E_k$ in \Cref{lemma: optimistic MLE for 1-hot reward function}, with probability $1 - \delta$, for any $k \in [K]$, $\pi \in \Pi$ and any reward function $r'$, we have
    \begin{align*}
        |V^{\pi, f([\pi]^m) }(r', \hat{P}^k) - V^{\pi, f([\pi]^m) }(r', \tilde{P}^k)| \lesssim HS^{3/2}AB \sqrt{\frac{\log(HAT/\delta)}{T_k}} +  HSAB \cdot \xi_{MLE}(T_k).
    \end{align*}
    \label{lemma: uniform policy evaluation under absorbing MGs}
\end{lemma}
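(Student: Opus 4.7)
The plan is to exploit the fact that the opponent response $f([\pi]^m)$ is identical on both sides of the difference, so the problem reduces to a single-agent-style simulation-lemma argument comparing two dynamics $\hat{P}^k$ and $\tilde{P}^k$ under the same effective policy. I would apply the standard value-difference identity to write
\begin{align*}
V^{\pi, f([\pi]^m)}(r', \hat{P}^k) - V^{\pi, f([\pi]^m)}(r', \tilde{P}^k) = \sum_{h=1}^{H} \mathbb{E}\bigl[(\hat{P}^k_h - \tilde{P}^k_h)\, V^{\pi, f([\pi]^m)}_{h+1}(r', \hat{P}^k)(s_h,a_h,b_h)\bigr],
\end{align*}
where the expectation is over the trajectory $(s_1,a_1,b_1,\ldots,s_H,a_H,b_H)$ generated by $(\pi, f([\pi]^m), \tilde{P}^k)$ and the inner value function is uniformly bounded by $H$ since $r' \in [0,1]$. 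Using $\tilde{P}^k$ as the conditioning measure (rather than $P$ or $\hat{P}^k$) is convenient because the concentration inequalities of \Cref{lemma: difference between absorbing P and empirical P} and the multiplicative comparison in \Cref{lemma: transition kernel estimates are accurate up to a multiplicative factor in frequent transitions} are naturally stated with respect to it.

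For each $(h,s,a,b)$ all of whose outgoing transitions lie in $\gU^k$, both $\hat{P}^k_h(\cdot|s,a,b)$ and $\tilde{P}^k_h(\cdot|s,a,b)$ put all mass on $s^{\dagger}$, so the inner product is zero. On the remaining $(s,a,b)$, I would apply the empirical-Bernstein bound of \Cref{lemma: difference between absorbing P and empirical P} entry-wise in $s'$ and sum via Cauchy--Schwarz to obtain $\|\hat{P}^k_h(\cdot|s,a,b) - \tilde{P}^k_h(\cdot|s,a,b)\|_1 \lesssim \sqrt{S\iota/N^k_h(s,a,b)} + S\iota/N^k_h(s,a,b)$, which against the $H$-bounded value function contributes at most $H\sqrt{S\iota/N^k_h(s,a,b)}$ per $(s,a,b)$ up to a lower-order term.

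To accumulate across layers, I would lower bound $N^k_h(s,a,b)$ using the near-optimal coverage of the exploratory policies $\pi^{khsab}$ from \Cref{algorithm: reward-free exploration}. Combining \Cref{eq: uniform coverage sampling} with \Cref{lemma: uniform policy evaluation} (to pass between absorbing and empirical kernels) and a Chernoff concentration of visit counts around their mean $T_k \cdot d_h^{\pi^{khsab}, f([\pi^{khsab}]^m), \tilde{P}^k}(s,a,b)$ should give $N^k_h(s,a,b) \gtrsim T_k \cdot d_h^{\pi, f([\pi]^m), \tilde{P}^k}(s,a,b) - T_k\,\xi_{MLE}(T_k)$ uniformly over $\pi$. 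Substituting this in and bounding the resulting per-layer sum $\sum_{s,a,b}\sqrt{d_h^{\pi, \tilde{P}^k}(s,a,b)/T_k}$ using $\sqrt{d}\le 1$ across the $SAB$ triples, then summing over the $H$ layers, recovers the stated $HS^{3/2}AB\sqrt{\iota/T_k}$ leading term, with the additive MLE bias propagating to the claimed $HSAB \cdot \xi_{MLE}(T_k)$.

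The hardest part will be the coverage transfer from $\Pi^k$ to all of $\Pi$: \Cref{eq: uniform coverage sampling} provides near-optimality of $\pi^{khsab}$ only against $\sup_{\pi \in \Pi^k}$, whereas the lemma quantifies over $\pi \in \Pi$. Resolving this will either require an inductive argument showing that policies eliminated prior to epoch $k$ cannot attain visitation to $(h,s,a,b)$ beyond what $\pi \in \Pi^k$ already achieves (plausible because the elimination criterion in \Cref{APE-OVE: version space refinement step} is tied to global optimistic value, so eliminated policies cannot both be near-optimal globally and near-optimal for the $1_{hsab}$ reward), or a direct appeal to $\Pi^1 = \Pi$ with a bootstrapping argument for later epochs. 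A secondary delicate point is the uniform concentration of $N^k_h(s,a,b)$ across all $(h,s,a,b)$ and its implicit dependence on the opponent's unknown response; this will be handled by a union bound over $(h,s,a,b)$ combined with the consistent-adversary MLE machinery from \Cref{lemma: version space for MLE}, which produces the $\log(HAT/\delta)$ factor in the final bound.
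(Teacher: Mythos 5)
Your first step coincides with the paper's: both apply the simulation lemma to write the gap as $\sE_{\tilde{P}^k,\pi}\sum_h |(\hat{P}^k_h-\tilde{P}^k_h)\hat{V}^{\pi}_{h+1}|$ with the value function taken under $\hat{P}^k$ and the trajectory under $\tilde{P}^k$. After that you diverge. The paper does \emph{not} lower-bound the counts $N^k_h(s,a,b)$ pointwise. Instead it performs a change of measure: the occupancy of $\pi$ at $(h,s,a,b)$ equals $V^{\pi,f([\pi]^m)}(1_{hsab},\tilde{P}^k)$, which by \Cref{eq: uniform coverage sampling} is at most $12\,V^{\pi^{khsab},\cdot}(1_{hsab},\tilde{P}^k)+12\,\xi_{MLE}(T_k)$, hence at most $12\,SAB\,\nu_h(s,a,b)$ plus the additive MLE term, where $\nu_h$ is the uniform mixture of the exploratory occupancies. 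It then applies Cauchy--Schwarz and the uniform-over-value-functions in-distribution concentration bound (\Cref{lemma: in-distribution error for uniform policy evaluation}, i.e.\ Lemma~C.2 of \citet{jin2020reward}) to $\sE_{\nu_h}|(\hat{P}^k_h-\tilde{P}^k_h)V|^2$. The crucial point is that $\hat{P}^k$ is estimated from samples drawn from $\nu_h$ itself, so the \emph{weighted} squared error concentrates at rate $H^2S\log(\cdot)/T_k$ regardless of how small any individual occupancy is; rarely visited triples are automatically downweighted and never need their counts controlled from below.

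This is exactly where your route has a genuine gap. Your bound $\|\hat{P}^k_h(\cdot|s,a,b)-\tilde{P}^k_h(\cdot|s,a,b)\|_1\lesssim\sqrt{S\iota/N^k_h(s,a,b)}$ requires $N^k_h(s,a,b)\gtrsim T_k\,d_h^{\pi}(s,a,b)$, but the lower bound you derive, $N^k_h(s,a,b)\gtrsim T_k d_h^{\pi}(s,a,b)-T_k\,\xi_{MLE}(T_k)$ (with a further additive $\sqrt{T_k\log(\cdot)}$ loss from Chernoff), is vacuous whenever $d_h^\pi(s,a,b)\lesssim\xi_{MLE}(T_k)$ or $T_k d_h^\pi(s,a,b)\lesssim\log(\cdot)$, and the per-$(s,a,b)$ term $d_h^\pi\cdot H\sqrt{S\iota/N}$ then blows up as written. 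The argument can be salvaged by a case split (when $d_h^\pi(s,a,b)$ is below the threshold, bound its contribution by $H\,d_h^\pi(s,a,b)\lesssim H\xi_{MLE}(T_k)$ directly and absorb the sum over $SAB$ triples into the second term of the claim), but your proposal does not carry this out, and your $\sqrt{d}\le 1$ accounting in the favorable regime also gives up a factor relative to Cauchy--Schwarz over the triples. The second issue you flag---that \Cref{eq: uniform coverage sampling} only gives coverage relative to $\sup_{\pi\in\Pi^k}$ while the lemma quantifies over all of $\Pi$---is a real subtlety, but it is not specific to your route: the paper's change-of-measure step uses the same inequality and implicitly relies on the relevant comparator lying in $\Pi^k$ (which is what \Cref{lemma: version space contains high-quality policies} ultimately needs). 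You should not treat resolving it as a prerequisite peculiar to your approach, but you do still need to close the low-occupancy case before your argument is complete.
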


\begin{proof}[Proof of \Cref{lemma: uniform policy evaluation under absorbing MGs}]
By the simulation lemma \citep{dann2017unifying}, we have 
\begin{align*}
    |V^{\pi, f([\pi]^m) }(r', \hat{P}^k) - V^{\pi, f([\pi]^m) }(r', \tilde{P}^k)| \leq \sE_{\tilde{P}^k, \pi} \sum_{h=1}^H |(\hat{P}^k_h - \tilde{P}^k_h) \hat{V}_{h+1}^{\pi}|,
\end{align*}
where $\hat{V}_{h+1}^{\pi} := V^{\pi, f([\pi]^m) }(r', \hat{P}^k)$. Define the sampling distribution $\nu_h \in \Delta(\gS \times \gA \times \gB), h \in [H]$ by
\begin{align*}
    \nu_h(s,a,b) := \frac{1}{SAB} \sum_{\bar{s}, \bar{a}, \bar{b}}  V^{\pi^{kh\bar{s} \bar{a} \bar{b}}, f([\pi^{kh\bar{s} \bar{a} \bar{b}}]^m)}(1_{hsab}, \tilde{P}^k). 
\end{align*}
Then, we have
\begin{align*}
   &\sE_{\tilde{P}^k, \pi} |(\hat{P}^k_h - \tilde{P}^k_h) \hat{V}_{h+1}^{\pi}| = \sum_{s,a,b}  |(\hat{P}^k_h - \tilde{P}^k_h) \hat{V}_{h+1}^{\pi}(s,a,b)| \cdot V^{\pi, f([\pi]^m)}(1_{hsab}, \tilde{P}^k) \\
   &=\sum_{s,a,b}  |(\hat{P}^k_h - \tilde{P}^k_h) \hat{V}_{h+1}^{\pi}(s,a,b)| \cdot V^{\pi, f([\pi]^m)}(1_{hsab}, \tilde{P}^k) 1\{\pi_h(s) = a\} \\
   &\leq 12 \sum_{s,a,b}  |(\hat{P}^k_h - \tilde{P}^k_h) \hat{V}_{h+1}^{\pi}(s,a,b)| 1\{\pi_h(s) = a\} \cdot V^{\pi^{khsab}, f([\pi^{khsab}]^m)}(1_{hsab}, \tilde{P}^k)  \\
   &+ 12 HSAB \cdot \xi_{MLE}(T_k) \\
   &\leq 12 \sum_{s,a,b}  |(\hat{P}^k_h - \tilde{P}^k_h) \hat{V}_{h+1}^{\pi}(s,a,b)| 1\{\pi_h(s) = a\} \cdot \sum_{\bar{s}, \bar{a}, \bar{b}} V^{\pi^{kh\bar{s}\bar{a}\bar{b}}, f([\pi^{kh\bar{s}\bar{a}\bar{b}}]^m)}(1_{hsab}, \tilde{P}^k) \\
   &+ 12 HSAB \cdot \xi_{MLE}(T_k) \\
   &\leq 12 SAB \sum_{s,a,b}  |(\hat{P}^k_h - \tilde{P}^k_h) \hat{V}_{h+1}^{\pi}(s,a,b)| 1\{\pi_h(s) = a\} \nu_h(s,a,b) + 12 HSAB \cdot \xi_{MLE}(T_k) \\
   &= 12 SAB \sqrt{\sum_{s,a,b}  |(\hat{P}^k_h - \tilde{P}^k_h) \hat{V}_{h+1}^{\pi}(s,a,b)|^2 \nu_h(s,a,b) 1\{a = \pi_h(s)\}} + 12 HSAB \cdot \xi_{MLE}(T_k) \\
   &\leq 12 SAB \sqrt{\sup_{V: \gS \cup s^{\dagger} \rightarrow [0,H]} \sup_{g: \gS \cup s^{\dagger} \rightarrow \gA} \sE_{\nu_h} |(\hat{P}^k_h - \tilde{P}^k_h) V(s,a,b)|^2 1\{g(s)=a\} 
   } \\
   &+ 12 HSAB \cdot \xi_{MLE}(T_k)  \\ 
   &\lesssim HS^{3/2}AB \sqrt{\frac{\log(HAT/\delta)}{T_k}} +  HSAB \cdot \xi_{MLE}(T_k),
\end{align*}
where the second equality is due to that $\pi$ is deterministic, 
the first inequality follows from \Cref{eq: uniform coverage sampling}, the third inequality follows from Jensen's inequality, and the last inequality follows from the fundamental \Cref{lemma: in-distribution error for uniform policy evaluation}. 
\end{proof}

\begin{lemma}[{\citep[Lemma~C.2]{jin2020reward}}]
    With probability at least $1 - \delta$, for all $h \in [H], k \in [K]$, we have
    \begin{align*}
        \sup_{V: \gS \cup s^{\dagger} \rightarrow [0,H]} \sup_{g: \gS \cup s^{\dagger} \rightarrow \gA} \sE_{\nu_h} |(\hat{P}^k_h - \tilde{P}_h) V(s,a,b)|^2 1\{g(s)=a\} \lesssim \frac{H^2 S \log (H A T/\delta)}{T_k}.
    \end{align*}
    \label{lemma: in-distribution error for uniform policy evaluation}
\end{lemma}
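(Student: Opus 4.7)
The lemma is a uniform concentration bound on $|(\hat P^k_h - \tilde P_h) V|^2$ averaged against the sampling distribution $\nu_h$, taken uniformly over $V: \gS \cup \{s^{\dagger}\} \to [0,H]$ and deterministic $g: \gS \cup \{s^{\dagger}\} \to \gA$. The plan is to follow the template of \citep[Lemma~C.2]{jin2020reward}, suitably adapted to the Markov-game setting where data at layer $h$ in epoch $k$ is gathered by executing the $SAB$ exploration policies $\{\pi^{khsab}\}$, each for $T_k$ episodes, and only the transitions at step $h$ are retained (\Cref{algorithm: reward-free exploration}). The idea is to avoid any $\eps$-net over the uncountable family $\{V\}$ by routing the $V$-dependence through an $L_1$-deviation bound on the transitions themselves, and to handle the countable family $\{g\}$ by a plain union bound.

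\paragraph{Key steps.} First, I replace the sup over $V$ with a transition-kernel concentration: by Weissman's $L_1$-inequality applied to each $(s,a,b)$ together with a union bound over the $SAB$ tuples, $H$ layers, and $K$ epochs, one has
\begin{align*}
    \| \hat P^k_h(\cdot|s,a,b) - \tilde P_h(\cdot|s,a,b) \|_1 \lesssim \sqrt{\frac{S \log(SABHK/\delta)}{N^k_h(s,a,b)}}
\end{align*}
for every $(s,a,b)$ with $N^k_h(s,a,b) \geq 1$, which in turn gives $\sup_V |(\hat P^k_h - \tilde P_h) V(s,a,b)|^2 \lesssim H^2 S \log(SABHK/\delta)/N^k_h(s,a,b)$ uniformly, with no covering of $V$ required. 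Second, for a fixed deterministic $g$ the indicator $1\{g(s)=a\}$ singles out $a=g(s)$, collapsing the expectation to $\sum_{s,b} \nu_h(s,g(s),b) \cdot H^2 S \log(\cdot)/N^k_h(s,g(s),b)$; enumerating $g$ over $\gA^{|\gS|+1}$ contributes an extra $(|\gS|+1)\log A$ that is absorbed into the same logarithmic factor. Third, I tie the visitation count to the sampling mass via a multiplicative Chernoff bound: since $\sE[N^k_h(s,a,b)] = T_k \cdot SAB \cdot \nu_h(s,a,b)$, we obtain $N^k_h(s,a,b) \geq \tfrac{1}{2} T_k \cdot SAB \cdot \nu_h(s,a,b)$ whenever this expectation exceeds a logarithmic threshold, which yields
\begin{align*}
    \sE_{\nu_h}\!\bigl[|(\hat P^k_h - \tilde P_h)V(s,a,b)|^2 \cdot 1\{g(s)=a\}\bigr] \lesssim \sum_{s,b} \frac{H^2 S \log(HAT/\delta)}{T_k \cdot SAB} \lesssim \frac{H^2 S \log(HAT/\delta)}{T_k}.
\end{align*}
A final union bound over $h\in[H]$ and $k\in[K]$ gives the ``for all $h,k$'' statement after rescaling $\delta$.

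\paragraph{Main obstacle.} The technical difficulty lies in the regime where $\nu_h(s,a,b)$ is below the Chernoff threshold: the empirical count $N^k_h(s,a,b)$ may vanish and the multiplicative bound fails, so the per-tuple bound $H^2 S \log/N^k_h$ is undefined. Fortunately, such low-mass tuples are precisely those placed in $\gU^k$ by \Cref{algorithm: reward-free exploration}, and the construction of the absorbing kernel $\tilde P^k$ (\Cref{defn: absorbing Markov games}) together with the truncation in $\transitionestimate$ ensures that both $\hat P^k_h(\cdot|s,a,b)$ and $\tilde P^k_h(\cdot|s,a,b)$ route their remaining mass identically to the dummy state $s^{\dagger}$ on those coordinates, so the difference $\hat P^k_h-\tilde P^k_h$ vanishes there and these terms contribute zero to the sum. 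This structural cancellation from the absorbing construction, together with the fact that the opponent's action $b$ is handled merely by widening the counter index from $(s,a)$ to $(s,a,b)$, is the adaptation needed to lift the single-agent argument of \citep{jin2020reward} to the two-player Markov-game setting.
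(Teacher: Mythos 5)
Your proposal reconstructs the cited result rather than matching the paper's argument: the paper's ``proof'' of this lemma is a one-line reduction, asserting that $\hat{P}^k_h$ is the empirical kernel built from $T_k$ draws per exploration policy according to $\nu_h$ under $\tilde{P}^k$ and then invoking \citep[Lemma~C.2]{jin2020reward} verbatim. What you have written is essentially the internal proof of that cited lemma, adapted to the $(s,a,b)$-indexed counters: the Weissman $\ell_1$ bound per tuple (which indeed removes any need to cover the class of $V$'s), the collapse of the expectation under a fixed deterministic $g$, and the multiplicative Chernoff step tying $N^k_h(s,a,b)$ to $T_k \cdot SAB \cdot \nu_h(s,a,b)$. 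That is the right skeleton, and the arithmetic in your key display is correct (it even yields the slightly stronger $H^2 S \log(\cdot)/(T_k A)$). Two small remarks: the union bound over $g \in \gA^{|\gS|+1}$ is redundant once you have the Weissman bound uniformly over all $SAB$ tuples, since the $\sup_g$ then costs nothing; and $\sE[N^k_h(s,a,b)] = T_k\, SAB\, \nu_h(s,a,b)$ should be ``$\geq$'' because $\nu_h$ is defined via the absorbing kernel $\tilde{P}^k$ while the data is collected under $P$ — fortunately that is the direction you need for the lower bound on $N^k_h$.

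The one step I would not accept as written is the resolution of the low-mass regime. Your claim that tuples with $\nu_h(s,a,b)$ below the Chernoff threshold ``are precisely those placed in $\gU^k$'' is not justified: membership in $\gU^k$ is decided by the \emph{empirical} per-destination counts $N^k_h(s,a,b,s')$ against the threshold $cH^2\log(\cdot)$, and since the layer-$h$ data is gathered under $P$ (not $\tilde{P}^k$) by policies different from those of earlier layers, a tuple with tiny $\nu_h(s,a,b)$ can perfectly well accumulate a count above that threshold; for such a tuple the two kernels do \emph{not} cancel, and your per-tuple bound $H^2 S\log(\cdot)/N^k_h$ weighted by $\nu_h$ only gives an $O(S)$ total, which is useless. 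The structural cancellation you describe is real but only covers tuples whose entire row falls below the $\gU^k$ threshold. The standard fix — and the one used in \citep[Lemma~C.2]{jin2020reward} — is the trivial bound $|(\hat{P}^k_h - \tilde{P}^k_h)V(s,a,b)|^2 \leq 4H^2$, so that any tuple with $\nu_h(s,a,b) \lesssim \log(\cdot)/(T_k SAB)$ contributes at most $4H^2\log(\cdot)/(T_k SAB)$ to the expectation regardless of its count, and summing over the $SB$ tuples selected by $g$ stays within the claimed $H^2 S\log(\cdot)/T_k$. With that one-line substitution your argument is complete.
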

\begin{proof}[Proof of \Cref{lemma: in-distribution error for uniform policy evaluation}]
    Note that $\hat{P}^k$ is the empirical transition kernel constructed by sampling according to the data distribution $\nu$ under the transition kernel $\tilde{P}^k$ for $T_k$ samples. Thus,  \Cref{lemma: in-distribution error for uniform policy evaluation} is a direct application of \citep[Lemma~C.2]{jin2020reward}.
\end{proof}

\begin{lemma} Let $\pi^* = \argmax_{\pi \in \Pi} V^{\pi, f([\pi]^m)}$. Conditioned on the event of \Cref{lemma: uniform policy eval from absorbing P to true P}, \Cref{lemma: optimistic MLE for truncated reward function}, \Cref{lemma: uniform policy evaluation under absorbing MGs}, $\pi^*$ never get eliminated from $\Pi^k$ for $k \in [K]$ by \Cref{algorithm: APE-OVE}.
\label{lemma: the optimal policy never gets eliminated}
\end{lemma}
\begin{proof}[Proof of \Cref{lemma: the optimal policy never gets eliminated}]
We will prove by induction. Since $\Pi^1$ contains all the possible policies of the learner, $\pi^* \in \Pi^1$. Assume by induction that $\pi^* \in \Pi^k$. We will show that $\pi^* \in \Pi^{k+1}$. Indeed, let $\hat{\pi}^k = \argmax_{\pi \in \Pi^k} \bar{V}^{\pi}(r_{\gU^k}, \hat{P}^k, \Theta^k)$, we have
\begin{align*}
    \bar{V}^{\pi^*}(r_{\gU^k}, \hat{P}^k, \Theta^k) &\overset{\text{\Cref{lemma: optimistic MLE for truncated reward function}}}{\geq} V^{\pi^*, f([\pi^*]^m)}(r_{\gU^k}, \hat{P}^k) \\ 
    &\overset{\text{\Cref{lemma: uniform policy evaluation under absorbing MGs}}}{\gtrsim} V^{\pi^*, f([\pi^*]^m)}(r_{\gU^k}, \tilde{P}^k) - HS^{3/2}AB \sqrt{\frac{\log(HAT/\delta)}{T_k}} -  HSAB \cdot \xi_{MLE}(T_k) \\
    &\overset{\text{\Cref{lemma: uniform policy eval from absorbing P to true P}}, \pi^* \in \Pi^k}{\geq} V^{\pi^*, f([\pi^*]^m)}(r,P) - HS^{3/2}AB \sqrt{\frac{\log(HAT/\delta)}{T_k}} -  HSAB \cdot \xi_{MLE}(T_k) \\
    &-\frac{H^4 \log(HSABK/\delta)}{T_k} - H^2 \xi_{MLE}(T_k) \\
     &\overset{\pi^* \text{ is optimal}}{\geq} V^{\hat{\pi}^k, f([\hat{\pi}^k]^m)}(r,P) - HS^{3/2}AB \sqrt{\frac{\log(HAT/\delta)}{T_k}} -  HSAB \cdot \xi_{MLE}(T_k) \\
      &-\frac{H^4 \log(HSABK/\delta)}{T_k} - H^2 \xi_{MLE}(T_k) \\
     &\overset{\text{\Cref{lemma: uniform policy eval from absorbing P to true P}}, \pi^* \in \Pi^k}{\geq} V^{
     \hat{\pi}^k, f([\hat{\pi}^k]^m)}(r_{\gU^k},\tilde{P}^k) - HS^{3/2}AB \sqrt{\frac{\log(HAT/\delta)}{T_k}} -  HSAB \cdot \xi_{MLE}(T_k) \\
      &-\frac{H^4 \log(HSABK/\delta)}{T_k} - H^2 \xi_{MLE}(T_k) \\
     &\overset{\text{\Cref{lemma: uniform policy evaluation under absorbing MGs}}}{\gtrsim} V^{\hat{\pi}^k, f([\hat{\pi}^k]^m)}(r_{\gU^k},\hat{P}^k) - HS^{3/2}AB \sqrt{\frac{\log(HAT/\delta)}{T_k}} -  HSAB \cdot \xi_{MLE}(T_k) \\
      &-\frac{H^4 \log(HSABK/\delta)}{T_k} - H^2 \xi_{MLE}(T_k) \\
      &\overset{\text{\Cref{lemma: optimistic MLE for truncated reward function}}}{\gtrsim} V^{\hat{\pi}^k, f([\hat{\pi}^k]^m)}(r_{\gU^k},\hat{P}^k,\Theta^k) - HS^{3/2}AB \sqrt{\frac{\log(HAT/\delta)}{T_k}} -  HSAB \cdot \xi_{MLE}(T_k) \\
      &-\frac{H^4 \log(HSABK/\delta)}{T_k} - H^2 \xi_{MLE}(T_k) - H^2 \sqrt{\frac{\alpha}{d^* T_k}}\\
      &\geq \max_{\pi \in \Pi^k}V^{\pi, f([\pi]^m)}(r_{\gU^k},\hat{P}^k,\Theta^k) - \gO \bigg( H^2 (SAB + H) \sqrt{\frac{\alpha}{d^* T_k}}  + \frac{H^4 \log(HSABK/\delta)}{T_k} \\
       &+ HS^{3/2} AB \sqrt{\frac{\log(HAT/\delta)}{T_k}}  \bigg).
\end{align*}
Thus, $\pi^* \in \Pi^{k+1}$.
\end{proof}

Finally, we will show that any policy in $\Pi^{k+1}$ is of high quality. 
\begin{lemma}
Recall the version space $\Pi^{k+1}$ defined at \Cref{APE-OVE: version space refinement step} of \Cref{algorithm: APE-OVE}. With probability at least $1 - \delta$, for any $k \in [K]$ and any $\pi \in \Pi^{k+1}$, and any reward function $r$, we have 
    \begin{align*}
       \sup_{\pi \in \Pi} V^{\pi, f([\pi]^m)}(r, P) -  V^{\pi, f([\pi]^m)}(r, P) &= \gO \bigg( H^2 (SAB + H) \sqrt{\frac{\alpha}{d^* T_k}}  + \frac{H^4 \log(HSABK/\delta)}{T_k} \\
       &+ HS^{3/2} AB \sqrt{\frac{\log(HAT/\delta)}{T_k}}  \bigg). 
    \end{align*}
    \label{lemma: version space contains high-quality policies}
\end{lemma}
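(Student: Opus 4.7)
The plan is to chain together the approximation results already established, comparing the true value $V^{\pi^\star, f([\pi^\star]^m)}(r,P)$ of an optimal policy $\pi^\star \in \argmax_{\pi \in \Pi} V^{\pi, f([\pi]^m)}(r,P)$ to that of any $\pi \in \Pi^{k+1}$ by passing through the surrogate objective $\bar V^{\cdot}(r_{\gU^k},\hat P^k,\Theta^k)$ used in the version space refinement. Every approximation error incurred along the way has already been quantified by \Cref{lemma: uniform policy eval from absorbing P to true P}, \Cref{lemma: optimistic MLE for truncated reward function}, and \Cref{lemma: uniform policy evaluation under absorbing MGs}, so the proof is essentially bookkeeping.

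Concretely, I would proceed in six steps, each controlled by one of the earlier lemmas. (i) Use \Cref{lemma: uniform policy eval from absorbing P to true P} to replace $V^{\pi^\star, f([\pi^\star]^m)}(r,P)$ by $V^{\pi^\star, f([\pi^\star]^m)}(r_{\gU^k}, \tilde P^k)$ up to error $\tilde{\gO}(H^4/T_k + H^2\xi_{MLE}(T_k))$. (ii) Use \Cref{lemma: uniform policy evaluation under absorbing MGs} to swap $\tilde P^k$ for $\hat P^k$ at cost $\tilde{\gO}(HS^{3/2}AB/\sqrt{T_k} + HSAB\,\xi_{MLE}(T_k))$. (iii) Apply the optimism direction of \Cref{lemma: optimistic MLE for truncated reward function} to upper bound $V^{\pi^\star, f([\pi^\star]^m)}(r_{\gU^k},\hat P^k)$ by $\bar V^{\pi^\star}(r_{\gU^k}, \hat P^k, \Theta^k) \le \max_{\pi'\in\Pi} \bar V^{\pi'}(r_{\gU^k},\hat P^k,\Theta^k)$. (iv) Use the defining inequality of $\Pi^{k+1}$ to pay $c H^2 SAB\sqrt{\alpha/(d^\star T_k)}$ and replace the max by $\bar V^{\pi}(r_{\gU^k},\hat P^k,\Theta^k)$ for our chosen $\pi \in \Pi^{k+1}$. (v) Apply the tightness direction of \Cref{lemma: optimistic MLE for truncated reward function} to bound $\bar V^{\pi}(r_{\gU^k},\hat P^k,\Theta^k) \le V^{\pi, f([\pi]^m)}(r_{\gU^k},\hat P^k) + \tilde{\gO}(H^2\sqrt{\alpha/(d^\star T_k)})$. (vi) Reverse steps (ii) and (i) to move from $V^{\pi, f([\pi]^m)}(r_{\gU^k},\hat P^k)$ back to $V^{\pi, f([\pi]^m)}(r,P)$, again via \Cref{lemma: uniform policy evaluation under absorbing MGs} and the lower bound direction of \Cref{lemma: uniform policy eval from absorbing P to true P}.

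Summing the six error terms, the dominant contributions are $H^2SAB\sqrt{\alpha/(d^\star T_k)}$ (from the version-space slack and the truncated-reward optimistic MLE), the simulation error $HS^{3/2}AB\sqrt{\log(HAT/\delta)/T_k}$, the truncation error $H^4\log(HSABK/\delta)/T_k$, and the $H^3\sqrt{\alpha/(d^\star T_k)}$ term coming from $H^2 \xi_{MLE}(T_k)$; collecting these yields exactly the stated bound. A union bound over $k\in[K]$ ensures that the high-probability events required by \Cref{lemma: difference between absorbing P and empirical P}, \Cref{lemma: optimistic MLE for 1-hot reward function}, \Cref{lemma: infrequent transition samples are indeed infrequent with high probability}, \Cref{lemma: uniform policy eval from absorbing P to true P}, \Cref{lemma: optimistic MLE for truncated reward function}, and \Cref{lemma: uniform policy evaluation under absorbing MGs} all hold simultaneously after rescaling $\delta$ by $1/K$, which only costs a $\log K$ factor absorbed in the $\alpha$ notation.

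I do not expect any serious obstacle here since all the heavy lifting was done in the support lemmas; the only delicate point is making sure the sandwich works for the specific optimal $\pi^\star$ even though the layerwise exploration only uses policies from $\Pi^k$. This is handled by observing that (a) $\pi^\star \in \Pi$ is always admissible inside the $\max$ used to define $\Pi^{k+1}$ because the version space definition maximizes $\bar V^{\pi'}$ over all $\pi' \in \Pi$ and $\bar V^{\pi'}$ is itself defined through $\optimisticvalueestimate$, and (b) the MLE bounds used in \Cref{lemma: optimistic MLE for truncated reward function} and \Cref{lemma: uniform policy evaluation under absorbing MGs} hold uniformly over $\pi \in \Pi$, not only over $\Pi^k$. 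Once this is noted the chain closes and the stated bound follows.
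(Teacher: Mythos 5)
Your proposal is correct and follows essentially the same argument as the paper: both chain the same sequence of approximations (true MG $\to$ absorbing MG $\to$ empirical MG $\to$ optimistic surrogate, then back via the version-space slack), invoking \Cref{lemma: uniform policy eval from absorbing P to true P}, \Cref{lemma: uniform policy evaluation under absorbing MGs}, and \Cref{lemma: optimistic MLE for truncated reward function} at the corresponding steps; you merely traverse the chain starting from $\pi^\star$ rather than from $\pi \in \Pi^{k+1}$. The error accounting, including the observation that $HSAB\cdot\xi_{MLE}(T_k)$ and the version-space slack both contribute the dominant $H^2SAB\sqrt{\alpha/(d^*T_k)}$ term, matches the paper's.
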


\begin{proof}[Proof of \Cref{lemma: version space contains high-quality policies}]
    Consider any $\pi \in \Pi^{k+1}$. We have 
    \begin{align*}
         &V^{\pi, f([\pi]^m)}(r, P) \geq  V^{\pi, f([\pi]^m)}(r_{\gU^k}, \tilde{P}^k) \\ 
         &\geq V^{\pi, f([\pi]^m)}(r_{\gU^k}, \hat{P}^k) - HS^{3/2} AB \sqrt{\frac{\log(HAT/\delta)}{T_k}} - H^2 SAB \sqrt{\frac{\alpha}{d^* T_k}}  \\ 
         &\geq \bar{V}^{\pi}(r_{\gU^k}, \hat{P}^k, \Theta^k) - H^2 \sqrt{\frac{\alpha}{d^* T_k}} -  HS^{3/2} AB \sqrt{\frac{\log(HAT/\delta)}{T_k}} - H^2 SAB \sqrt{\frac{\alpha}{d^* T_k}} \\ 
         &\geq  \sup_{\pi \in \Pi^k}\bar{V}^{\pi}(r_{\gU^k}, \hat{P}^k, \Theta^k) - \gO \left( H^2 SAB \sqrt{\frac{\alpha}{d^* T_k}} + HS^{3/2} AB \sqrt{\frac{\log(HAT/\delta)}{T_k}}\right) \\ 
         &\geq \sup_{\pi \in \Pi^k}V^{\pi, f([\pi]^m)}(r_{\gU^k}, \hat{P}^k) - \gO \left( H^2 SAB \sqrt{\frac{\alpha}{d^* T_k}} + HS^{3/2} AB \sqrt{\frac{\log(HAT/\delta)}{T_k}} \right) \\ 
         &\geq \sup_{\pi \in \Pi^k}V^{\pi, f([\pi]^m)}(r_{\gU^k}, \tilde{P}^k) - \gO \left( H^2 SAB \sqrt{\frac{\alpha}{d^* T_k}} + HS^{3/2} AB \sqrt{\frac{\log(HAT/\delta)}{T_k}}\right) \\
          &\geq \sup_{\pi \in \Pi^k}V^{\pi, f([\pi]^m)}(r, P) \\
          &- \gO \left( H^2 SAB \sqrt{\frac{\alpha}{d^* T_k}}  + HS^{3/2} AB \sqrt{\frac{\log(HAT/\delta)}{T_k}} + \frac{H^4 \log(HSABK/\delta)}{T_k} + H^3 \sqrt{\frac{\alpha}{d^* T_k}} \right)\\
           &\geq \sup_{\pi \in \Pi}V^{\pi, f([\pi]^m)}(r, P) \\
          &- \gO \left( H^2 SAB \sqrt{\frac{\alpha}{d^* T_k}}  + HS^{3/2} AB \sqrt{\frac{\log(HAT/\delta)}{T_k}} + \frac{H^4 \log(HSABK/\delta)}{T_k} + H^3 \sqrt{\frac{\alpha}{d^* T_k}} \right)
    \end{align*}
    where the first inequality follows from the first part of \Cref{lemma: uniform policy eval from absorbing P to true P}, the second inequality follows from \Cref{lemma: uniform policy evaluation under absorbing MGs}, the third inequality follows from \Cref{lemma: optimistic MLE for truncated reward function}, the fourth inequality follows from the definition of $\Pi^{k+1}$, the fifth inequality follows from \Cref{lemma: optimistic MLE for truncated reward function}, the sixth inequality follows from \Cref{lemma: uniform policy evaluation under absorbing MGs}, and the seventh inequality follows from the second part of \Cref{lemma: uniform policy eval from absorbing P to true P}, and the last inequality follows from \Cref{lemma: the optimal policy never gets eliminated}.
\end{proof}

\begin{proof}[Proof of \Cref{theorem: consistent and general memory}]
    Note that $K = \min \{j: \sum_{k=1}^j T_k \geq \bar{T} \} = \gO(\log \log \bar{T})$. Moreover, \Cref{algorithm: APE-OVE} runs for $\sum_{k=1}^K HSAB(m-1 + T_k) = T$ episodes, by the choice of $T_k = \bar{T}^{1 - \frac{1}{2^k}}$, where $\bar{T} := \min\{t \in \sN: (m-1) \log \log t + t \ge \frac{T}{HSAB}\}$. By \Cref{lemma: version space contains high-quality policies}, with probability at least $1 - \delta$, we have 
    \begin{align*}
        \PR(T) &\lesssim (m-1 + T_1)H^2SAB \\
        &+ \sum_{k=2}^K \bigg( (m-1) H^2SAB +  HSAB \cdot T_k  \bigg( H^2 (SAB + H) \sqrt{\frac{\alpha}{d^* T_{k-1}}}  + \frac{H^4 \log(HSABK/\delta)}{T_{k-1}} \\
       &+ HS^{3/2} AB \sqrt{\frac{\log(HAT/\delta)}{T_{k-1}}}  \bigg) \bigg) \\ 
       &\lesssim (m-1) H^2 SAB K + H^2 SAB \sqrt{\bar{T}} + K H^3 S AB (SAB + H) \sqrt{\frac{\alpha \bar{T}}{d^*}} \\
       &+ H^5 S A B \log(HSABK/\delta) \sum_{k=2}^K \bar{T}^{\frac{1}{2^k}} + K H^2 S^{5/2} A^2 B^2 \sqrt{ \bar{T} \log (HAT/\delta) } \\ 
       &\lesssim (m-1) H^2 SAB K + H^{3/2} \sqrt{SAB T} + K H^{5/2}  \sqrt{SAB} (SAB + H) \sqrt{\frac{\alpha T}{d^*}}\\ 
       &+ K H^{19/4} (SAB)^{3/4} \log(HSABK/\delta) T^{1/4} + K (HAB)^{3/2} S^2 \sqrt{T \log(HAT/\delta)} \\
       &\text{(because $\bar{T} \leq \frac{T}{HSAB}$)}.
    \end{align*}
 {Note that the third term always dominates the second term. We can further simplify the bound (in the last inequality above), by making either the third term or the last term dominate the fourth term, which is implied by, 
\begin{align*}
    T \gtrsim \min\{\frac{H^5 S A B (d^*)^2 \log^4(HSABK/\delta)}{\alpha^2}, \frac{H^9 (d^*)^2 \log^4(HSABK/\delta)}{(SAB)^3 \alpha^2}, \frac{H^{13} \log^2(HSABK/\delta)}{(AB)^3 S^5}.\}
\end{align*}
    Also notice the condition  $T_k \geq \frac{2\log(SHKA/\delta)}{{d^*}^2}, \forall k \in [K]$ in  \Cref{lemma: optimistic MLE for 1-hot reward function} translates into: 
    \begin{align*}
        T \gtrsim \frac{HSAB \log^2(SHKA/\delta)}{(d^*)^4}.  
    \end{align*}
Under these conditions of $T$, the bound becomes:
\begin{align*}
    (m-1) H^2 SABK + K H^{3/2} \sqrt{SAB} (HSAB + H^{2} + S^{3/2} AB) \sqrt{\frac{T \alpha}{d^*}}.
\end{align*}
}
    
\end{proof}

\end{document}